\def\BibTeX{{\rm B\kern-.05em{\sc i\kern-.025em b}\kern-.08em
    T\kern-.1667em\lower.7ex\hbox{E}\kern-.125emX}}
\def\({\left(}
\def\){\right)}
\def\[{\left[}
\def\]{\right]}
\definecolor{main}{HTML}{A9A9A9}    
\definecolor{sub}{HTML}{f8f8ff}     
\definecolor{maroon}{cmyk}{0,0.87,0.68,0.32}
\definecolor{MyTeal}{RGB}{0,80,80}
\definecolor{MyPurple}{RGB}{80,45,115}
\definecolor{MyOrange}{RGB}{160,75,0}
\definecolor{MyCrimson}{RGB}{170,35,55}
\newcommand*\colourcheck[1]{%
  \expandafter\newcommand\csname #1check\endcsname{\textcolor{#1}{\ding{52}}}%
}
\definecolor{MyTeal}{RGB}{0,128,128}
\newtcolorbox{boxI}{
    colback = sub, 
    colframe = main, 
    boxrule = 0pt, 
    toprule = 4pt,
    left=2pt,
    right=2pt,
    after skip=2pt,
    before skip=2pt,
       bottom=0pt,
    top=0pt
}
\newtcolorbox{boxIRQ}{
    colback = sub, 
    colframe = mainrq, 
    boxrule = 0pt, 
    toprule = 4pt,
    left=2pt,
    right=2pt,
    after skip=4pt,
    before skip=3pt,
    bottom=0pt,
    top=0pt
}
\definecolor{mainrq}{HTML}{3939c6}
\newcolumntype{P}[1]{>{\centering\arraybackslash}p{#1}}
\xpatchcmd{\proof}{\topsep6\p@\@plus6\p@\relax}{}{}{}
\newcommand{\paragraphalphastyle}{
    \renewcommand{\theparagraph}{\alph{paragraph}}      
    \renewcommand{\theparagraphdis}{\alph{paragraph})}  
}
\newcommand{\paragraphnumstyle}{
    \renewcommand{\theparagraph}{\arabic{paragraph}}      
    \renewcommand{\theparagraphdis}{\arabic{paragraph})}  
}
\newtheorem{theorem}{Theorem}
\newtheorem{assumption}{Assumption}
\newtheorem{lemma}{Lemma}
\newtheorem{definition}{Definition}
\newtheorem{remark}{Remark}
\newtheorem{constraint}{Constraint}
\def\BibTeX{{\rm B\kern-.05em{\sc i\kern-.025em b}\kern-.08em
    T\kern-.1667em\lower.7ex\hbox{E}\kern-.125emX}}
\begin{document}


\title{Carpe Diem: Critical Learning Period-Aware Contract-Based Incentives for Federated Learning}


\newcommand{\clps} {{\text{\fontfamily{qcr}\selectfont CLPs}}}
\newcommand{\clp}{{\text{\fontfamily{qcr}\selectfont CLP}}}
\newcommand{\nonclp}{{\text{\fontfamily{qcr}\selectfont non-CLP}}}
\newcommand{\nonclps} {{\text{\fontfamily{qcr}\selectfont non-CLPs}}}
\newcommand{\proposed}{{\text{\fontfamily{qcr}\selectfont R3T}}}

\author{Thanh Linh Nguyen,~\IEEEmembership{Graduate Student Member,~IEEE}, Dinh Thai Hoang,~\IEEEmembership{Senior Member,~IEEE}, Diep N. Nguyen,~\IEEEmembership{Senior Member,~IEEE}, and Quoc-Viet Pham,~\IEEEmembership{Senior Member,~IEEE} \\
\thanks{Thanh Linh Nguyen and Quoc-Viet Pham (Corresponding Author) are with the School of Computer Science and Statistics, Trinity College Dublin, The University of Dublin,
Dublin 2, D02PN40, Ireland (e-mail: \{tnguyen3, viet.pham\}@tcd.ie).
}
\thanks{Dinh~Thai~Hoang and Diep~N.~Nguyen are with the School of Electrical and Data Engineering, University of Technology Sydney, Sydney, NSW 2007, Australia (e-mail: \{hoang.dinh, diep.nguyen\}@uts.edu.au).} 
\thanks{Part of this article was presented at the 2025 IEEE 101st Vehicular Technology Conference \cite{11174880}.}
}


\maketitle
\begin{abstract}
Critical learning periods (\clps) in federated learning (FL) refer to early stages during which low-quality contributions (e.g., sparse training data availability, strategic non-participation) can permanently impair the performance of the global model owned by the model owner (i.e., a cloud server). However, existing incentive mechanisms typically assume temporal homogeneity, treating all training rounds as equally important, thereby failing to prioritize and attract high-quality contributions during \clps. This inefficiency is compounded by information asymmetry due to privacy regulations, where the cloud lacks knowledge of client training capabilities, leading to adverse selection and moral hazard. Thus, in this article, we propose a time-aware contract-theoretic incentive framework, named Right Reward Right Time (\text{\fontfamily{qcr}\selectfont R3T}), to encourage client involvement, especially during \clps, to maximize 
the utility of the cloud server in FL. We formulate a cloud utility function that captures the trade-off between the achieved model performance and rewards allocated for clients' contributions, explicitly accounting for client heterogeneity in time and system capabilities, effort, and joining time. Then, we devise a \clp-aware incentive mechanism deriving an optimal contract design that satisfies individual rationality, incentive compatibility, and budget feasibility constraints, motivating rational clients to participate early and contribute efforts. By providing the right reward at the right time, our approach can attract the highest-quality contributions during \clps. Simulation and proof-of-concept studies show that \text{\fontfamily{qcr}\selectfont R3T} mitigates information asymmetry, increases cloud utility, and yields superior economic efficiency compared to conventional incentive mechanisms. Our proof-of-concept results demonstrate up to a 47.6\% reduction in the total number of clients and up to a 300\% improvement in convergence speed while achieving competitive test accuracy. {The source code can be found at https://github.com/linhnt31/R3T}.
\end{abstract}

\begin{IEEEkeywords}
Critical learning periods, Federated learning, Incentive mechanism, Blockchain, Contract theory.
\end{IEEEkeywords}

\newcommand\mycommfont[1]{\footnotesize\ttfamily{#1}}
\SetCommentSty{mycommfont}

\SetKwInput{KwInput}{Input}                
\SetKwInput{KwOutput}{Output}              
\renewcommand{\algorithmicforall}{\textbf{for each}}

\section{Introduction}
\subsection{Background and Motivations}
\IEEEPARstart{F}ederated learning (FL) is a privacy-preserving distributed machine learning (ML) paradigm, where distributed clients (e.g., devices or organizations) collaboratively train a shared artificial intelligence model under the coordination of a central server without disclosing private data \cite{mcmahan2017communication, nguyen2025federated, nguyen2026exploiting}. In classical FL settings, the importance of all training rounds and the efforts of distributed clients across these rounds are weighted equally. However, this setting is questioned by a recent finding in FL known as critical learning periods (\text{\fontfamily{qcr}\selectfont CLPs}), which highlight the varying impacts of different training rounds and significantly affect FL learning and training efficiency \cite{yan2022seizing, huang2025pa3fed}. This finding was inspired by earlier research on \text{\fontfamily{qcr}\selectfont CLPs} in species' cognitive and learning functions (i.e., humans and animals) and deep learning networks within a centralized setting \cite{kandel2000principles, achille2018critical, jastrzebski2021catastrophic, golatkar2019time}. Particularly, \text{\fontfamily{qcr}\selectfont CLPs} are early learning phases (cf. Fig. \ref{fig:clp_illustration}(\subref{fig:clp_illustration1})) affecting long-term learning capabilities \cite{hensch2005critical}. Empirical studies show that the absence of proper learning during these periods can lead to irreversible deficits. For instance, barn owls exposed to misaligned auditory and visual cues during their critical development periods fail to properly localize spatial locations in adulthood \cite{knudsen1990sensitive}. This phenomenon was also observed in deep neural networks, in which any deficits and low effort (e.g., sparse data availability, late client participation) during this initial learning stage can cause lasting impairment on model performance, regardless of subsequent efforts (cf., Fig. \ref{fig:clp_illustration}(\subref{fig:clp_illustration2})) \cite{kleinman2023critical, yan2022seizing}. The reason for the phenomenon is still unclear, but possible explanations include the depth of the learning model, the structure of training data distribution, or defects in the implementation and training process \cite{achille2018critical, jastrzebski2021catastrophic, golatkar2019time}.

\begin{figure}[t!]
\centering
\begin{subfigure}[t]{0.24\textwidth}
    \centering
    \includegraphics[width=\linewidth]{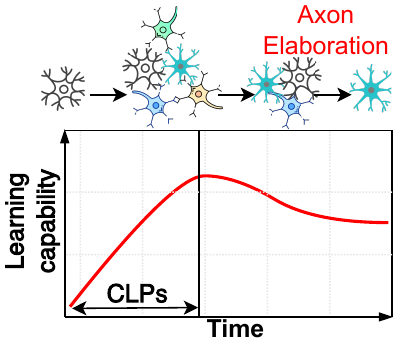}
    \caption{}
    \label{fig:clp_illustration1}
\end{subfigure}
\begin{subfigure}[t]{0.24\textwidth}
    \centering
    \includegraphics[width=\linewidth]{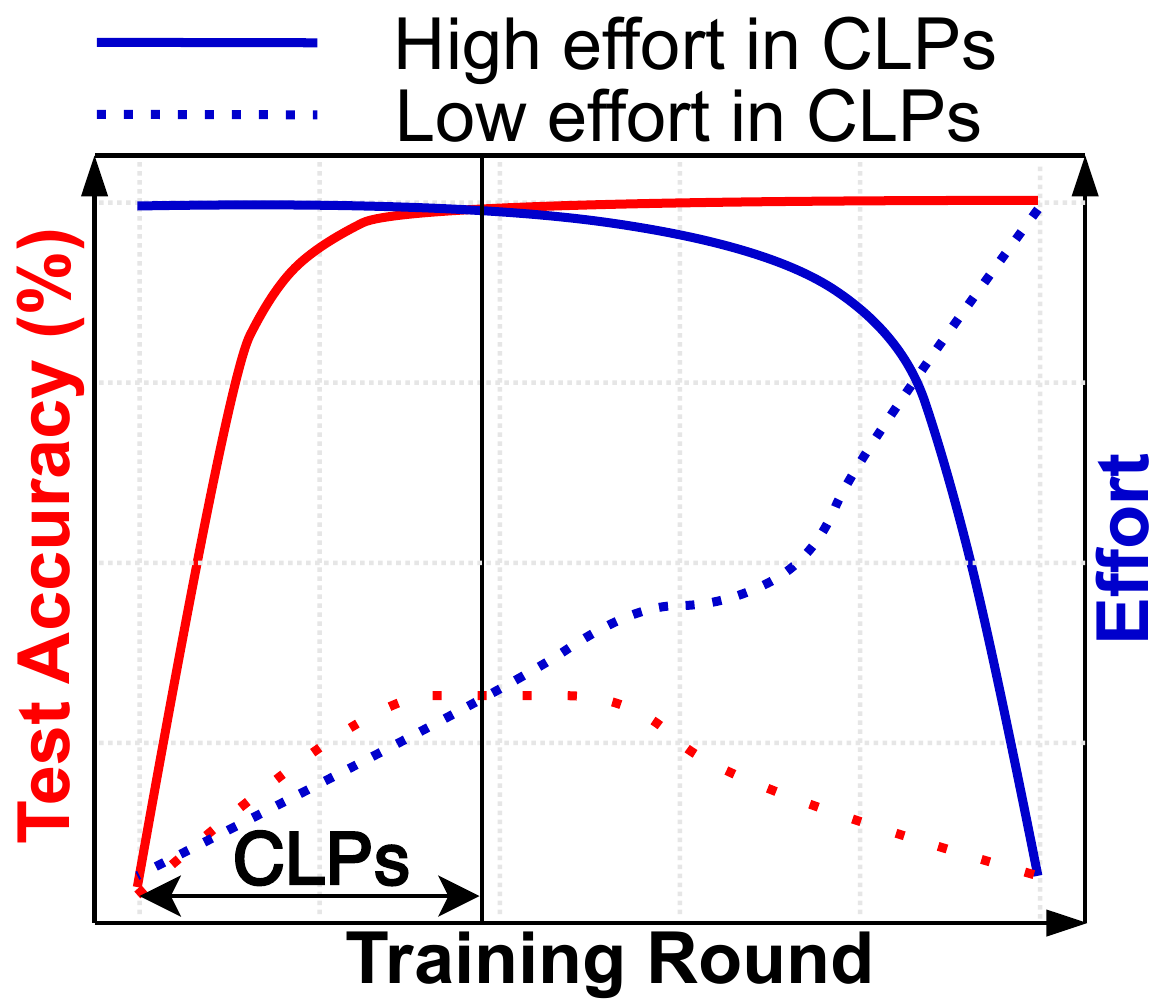}  
    \caption{}
    \label{fig:clp_illustration2}
\end{subfigure}
    \caption{Illustration of: a) \text{\fontfamily{qcr}\selectfont CLPs} in the biological field \cite{hensch2004critical}, and b) how early effort affects FL global model performance \cite{yan2022seizing}.}
\label{fig:clp_illustration}
\end{figure}

Despite the promising merits of enhancing FL efficiency (e.g., accuracy and convergence rate), \text{\fontfamily{qcr}\selectfont CLPs} remain under-explored, including a lack of transparent, fair, and timely incentives for participation and contribution elicitation, and information asymmetry among clients and the global model owner (e.g., a cloud server\footnote{The cloud server and the cloud are used interchangeably.}) during these periods. Existing {\fontfamily{qcr}\selectfont CLP}-considered works in FL, such as {\clp} detection, client selection, or  {\fontfamily{qcr}\selectfont CLP}-aware aggregation,  are based on a common assumption of voluntary engagement of clients in the training process (e.g., \cite{yan2022seizing, yan2023criticalfl, fedprime, huang2025pa3fed, 11060892}). {However, this is unrealistic in practice because clients, acting as \textit{rational}, \textit{strategic}, and \textit{benefit-driven} agents, participate only when they receive adequate compensation for their efforts (e.g., data, computation resources, time costs, and potential privacy leakage risks). Moreover, the learned global model is considered a non-excludable public good, meaning that clients benefit from the model regardless of their individual effort levels~\cite{10418498, anonymous2026incentives}. Consequently, this may incentivize free-riding and weaken truthful efforts, which can degrade global model quality. The resulting performance deterioration can, in turn, impair service quality and user experience for downstream consumers of the model (e.g., data and model marketplaces~\cite{9025575}), threatening the long-term sustainability of FL deployments and services}. Although several works have been proposed to bring incentivization by considering effort-reward approaches to fairly compensate FL clients' effort (e.g., \cite{anonymous2026incentives, murhekar2024you, kang2019incentive, 10097767, chen2024fdfl, 10323190}), \textit{they treat clients' efforts and their impacts on the global model performance on every training round equally} and \textit{ignore the temporal factor}, whereas efforts during \text{\fontfamily{qcr}\selectfont CLPs} have more weights and mainly contribute to the overall performance of the global model. By failing to prioritize {\fontfamily{qcr}\selectfont CLP}-driven incentive solutions, current methods inefficiently allocate cloud rewards (e.g., money), thereby reducing cloud utility—defined as the difference between gains from global model performance and compensation costs (cf.~Section~\ref{sec:system_model}).

Furthermore, the design of efficient incentives is hindered by information asymmetry between the FL clients and the cloud server. Specifically, due to privacy restrictions, the cloud lacks prior knowledge of the clients' private training capabilities, including the data, communication, and computation resources, efforts (e.g., data quantity and distribution), or available joining time. Therefore, assumptions that the cloud server knows precise information \cite{10539084, zhan2020learning, sarikaya2019motivating, murhekar2023incentives}, such as clients' computing resources or clients' data size, may not hold in practice. {\textit{A compounded challenge overlooked by prior works is that existing contract-theoretic incentive approaches, while effective for handling information asymmetry in FL incentives (e.g., \cite{sun2021pain}), fail to incorporate the temporal heterogeneity of {\fontfamily{qcr}\selectfont CLPs}, where early deficits cause irreversible model impairments regardless of later efforts \cite{yan2022seizing, huang2025pa3fed}}. This leads to suboptimal contract designs that treat all training rounds equally, inefficiently allocating incentives without prioritizing high-quality contributions\footnote{We refer to\textbf{ high-quality contributions} as high training efforts, such as large amounts of closer-distribution-to-the-learning-task data, together with substantial computational resources, available in {\fontfamily{qcr}\selectfont CLPs}.} during {\clps}, resulting in diluted contributions and permanent performance losses under asymmetry about clients' training capabilities.}

\textit{There is also uncertainty regarding whether clients will exert the promised effort post-contract} \cite{bolton2004contract}. {Even with contractually specified incentives, clients may engage in moral-hazard behaviors (e.g., low-effort training or strategic dropouts), and the cloud may be accused of delaying or reneging on payments, which can lead to disputes. Addressing these issues requires costly monitoring and verification and remains error-prone, increasing the difficulty of sustaining FL and implementing fair incentive mechanisms \cite{10097767}.} Motivated by the aforementioned research gaps and challenges, in this article, we aim to address the following research question (RQ).

\begin{boxIRQ}
\noindent \underline{\textbf{RQ}}: \textit{How can we design an incentive mechanism to attract rational and benefit-driven clients with the highest-quality contributions to improve FL learning and training efficiency, while simultaneously mitigating information asymmetry between clients and the cloud, given the crucial role of \text{\fontfamily{qcr}\selectfont CLPs} in determining the final FL model performance?}
\end{boxIRQ}

To answer the RQ, and inspired by the spirit of \textit{carpe diem} (i.e., seizing the right moment), we propose a \textit{{\fontfamily{qcr}\selectfont CLP}-aware} contract-theoretic incentive framework, named \underline{R}ight \underline{R}eward \underline{R}ight \underline{T}ime ({\fontfamily{qcr}\selectfont R3T}). Leveraging contract theory, {\fontfamily{qcr}\selectfont R3T} effectively mitigates the information asymmetry issues by designing a set of temporally coupled contract items offered by the cloud server~\cite{bolton2004contract}. The resulting optimal contract induces clients to self-reveal their training capabilities, join early, and exert efforts by assigning higher priority and appropriately larger rewards to early-stage participation under both complete and incomplete information cases, while satisfying individual rationality, incentive compatibility, and budget feasibility constraints. By delivering the right reward at the right time, {\fontfamily{qcr}\selectfont R3T} attracts the highest-quality clients to join early and contribute efforts during {\fontfamily{qcr}\selectfont CLPs}, where these efforts are most consequential for final model performance. {\fontfamily{qcr}\selectfont R3T} also integrates with blockchain smart contracts to decentralize incentive governance and reduce disputes caused by relying solely on the cloud \cite{wang2022infedge, yu2023ironforge, 3718082}.

\subsection{Contributions}
The main contributions of this paper are as follows:
 
\begin{itemize}   
\renewcommand{\labelitemi}{\textcolor{black}{$\blacktriangleright$}}

    \item \textit{Temporally integrated contract-theoretic incentive framework:} We develop a \textit{time-aware} contract-theoretic framework, named  {\fontfamily{qcr}\selectfont R3T}, which is the first to explicitly integrate {\fontfamily{qcr}\selectfont CLPs} into FL incentive design. {\fontfamily{qcr}\selectfont R3T} characterizes cloud utility as a function of client time and system capability, effort, joining time, and reward decisions. 
    
    \item  \textit{{\fontfamily{qcr}\selectfont CLP}-aware incentive mechanism design:} We develop a {\fontfamily{qcr}\selectfont CLP}-aware incentive mechanism that analytically derives an optimal contract to jointly optimize client participation strategies and cloud utility while adaptively attracting high-quality rational clients in the early training stages to enhance FL training and learning efficiency. The proposed incentive mechanism provides the right reward at the right time and handles issues of information asymmetries between clients and the cloud.

    \item \textit{Performance evaluation:} We conduct simulations to verify the effectiveness of our {\fontfamily{qcr}\selectfont R3T} in information symmetry and information asymmetry cases under individual rationality, incentive compatibility, and budget feasibility constraints. {\fontfamily{qcr}\selectfont R3T} demonstrates feasibility and superior efficiency compared with conventional incentive mechanism benchmarks, including linear pricing and conventional contract theory-based methods.

    \item \textit{Proof of concept:} We implement a proof of concept of {\fontfamily{qcr}\selectfont R3T} and demonstrate the feasibility and efficiency of the proposed mechanism and system design. Significantly, we show that {\fontfamily{qcr}\selectfont R3T} can boost convergence speed by up to 300\% while achieving competitive final FL model accuracies on standardized benchmark datasets, CIFAR-10 and Fashion-MNIST, compared with the state-of-the-art incentive mechanism benchmarks.
\end{itemize}

The remainder of this paper is organized as follows. Section~\ref{sec:relatedwork} reviews the related work, followed by the system model in Section \ref{sec:system_model}. Sections~\ref{sec:optimal_incentive_design} and \ref{sec:experiment} present the proposed game-theoretic {\fontfamily{qcr}\selectfont R3T} framework and its experimental results, respectively. Section \ref{sec:conclusion} concludes this paper.

\section{Related Work}
\label{sec:relatedwork}
This section reviews relevant background and advances in {\fontfamily{qcr}\selectfont CLPs} for artificial neural networks and incentive mechanisms for FL. The intersectional gap
between these domains motivates our research contribution. The comparison of prior research and this work is shown in Table~\ref{tab:comparison}.

\subsection{Critical Learning Periods in Centralized ML and FL} 

A significant amount of research has examined {\clps}, including their effects on training efficiency and learning generalization, their interactions with optimization hyperparameters, and methods to detect and exploit them in both centralized and distributed learning settings~\cite{achille2018critical, kleinman2024critical, fukase2025one, yan2022seizing, yan2023criticalfl, fedprime, yan2023defl, 11302878, 11275875, 10802950, ecolearnclps, 10737242, li2023revisiting}. 

In centralized deep learning, Achille et al. in \cite{achille2018critical} empirically linked {\clps }, especially in the memorization phase, to the downstream performance. Their findings demonstrated that, analogous to animals, sensory deficits in {\fontfamily{qcr}\selectfont CLPs} cause lasting impairments to the learning model network's information and connectivity, as quantified by an approximation of Fisher Information, and adversely affect learning outcomes, regardless of subsequent additional training. Complementing this empirical evidence, Kleinman et al. in \cite{kleinman2024critical} provided analytical insights indicating that the deep neural network model depth and the structure of the data distribution are fundamental sources of {\fontfamily{qcr}\selectfont CLPs} in deep networks, not solely explained by biological aging processes observed in animals. More recently, Fukase et al. in \cite{fukase2025one} studied a practical problem of identifying {\clps } and proposed a detection approach based on the cosine distance between the initial and final model weights.

\begin{table}
    \centering
    \small
    \caption{Comparison between prior research \& {\fontfamily{qcr}\selectfont R3T}. \\ $\CIRCLE$, $\LEFTcircle$, and $\Circle$ indicate that the topic is \textit{covered}, \textit{partially or indirectly covered}, and \textit{uncovered}, respectively.
}
    \label{tab:comparison}
    \begin{tabular}{|c|c|c|c|c|c|c|c|}
    \rowcolor{gray!20}
    \hline
        \textbf{Work} & \textbf{Venue} & \textbf{Year}  & \textbf{F1} & \textbf{F2} & \textbf{F3} & \textbf{F4} & \textbf{F5} \\\hline 
        \rowcolor{maroon!10}\hline
        \multicolumn{8}{|c|}{\textit{Critical Learning Periods in Centralized and FL}} \\\hline 

         \cite{achille2018critical} & ICLR & 2018 &$\Circle$  & $\Circle$  & $\CIRCLE$ & $\Circle$ & $\Circle$ \\\hline 
         
         \cite{yan2022seizing} & AAAI & 2022 &$\Circle$  & $\Circle$  & $\CIRCLE$ & $\Circle$ & $\Circle$ \\\hline 

         \cite{li2023revisiting} & ICML & 2023 & $\Circle$  & $\Circle$  & $\CIRCLE$ & $\Circle$ & $\Circle$ \\\hline 

         \cite{yan2023criticalfl} & SIGKDD &  2023 & $\Circle$  & $\Circle$  & $\CIRCLE$ & $\Circle$ & $\Circle$\\\hline 

         \cite{fedprime} & PKDD & 2024 & $\Circle$  & $\Circle$  & $\CIRCLE$ & $\Circle$ & $\Circle$ \\\hline 

         \cite{11302878} &  TMC & 2025 & $\Circle$  & $\Circle$  & $\CIRCLE$ & $\CIRCLE$ & $\Circle$ \\\hline 
         
         \cite{11275875}& TMC & 2025 & $\Circle$  & $\Circle$  & $\CIRCLE$ & $\Circle$ & $\Circle$ \\\hline 

         \cite{10737242}& TMC & 2025 & $\Circle$  & $\Circle$  & $\CIRCLE$ & $\Circle$ & $\Circle$ \\\hline          
        \rowcolor{maroon!10}\hline
        \multicolumn{8}{|c|}{\textit{Incentive Mechanisms for FL}} \\\hline

        \cite{murhekar2023incentives}  & NEURIPS & 2023 & $\CIRCLE$ &  $\Circle$ & $\Circle$ & $\Circle$ & $\Circle$ \\\hline 
        
        \cite{yuan2023tradefl} & ICDCS & 2023 & $\LEFTcircle$  & $\Circle$ & $\Circle$ & $\CIRCLE$& $\CIRCLE$\\\hline 

         \cite{10323190} & TNSE & 2024 & $\CIRCLE$ &  $\Circle$ & $\Circle$ & $\Circle$ & $\Circle$ \\\hline

        \cite{10539084} & TNSE & 2024 & $\CIRCLE$ & $\CIRCLE$ & $\Circle$ & $\Circle$ & $\LEFTcircle$ \\\hline 
        
        \cite{wu2024incentive} & ICLR & 2024 & $\CIRCLE$  & $\Circle$ & $\Circle$ & $\Circle$ & $\Circle$ \\\hline 
        
        \cite{10418498} &  TMC & 2024 & $\CIRCLE$ & $\Circle$   &  $\Circle$  &  $\CIRCLE$   & $\CIRCLE$ \\\hline 

         \cite{chen2024fdfl} & TIFS & 2024 & $\CIRCLE$ & $\Circle$ & $\Circle$ & $\LEFTcircle$ & $\Circle$ \\\hline 
         
        \cite{10739914} & TNSE & 2025 & $\CIRCLE$ &  $\Circle$ & $\Circle$ & $\Circle$ & $\Circle$ \\\hline 
        
        \cite{103714961} & WWW & 2025 & $\CIRCLE$ & $\Circle$ & $\Circle$  & $\LEFTcircle$ & $\Circle$  \\\hline 
        
        \cite{11363323} & TNSM & 2026 & $\CIRCLE$ & $\CIRCLE$ & $\Circle$  & $\Circle$ & $\Circle$ \\\hline 
        
         \cite{anonymous2026incentives} & ICLR & 2026 & $\CIRCLE$  &  $\CIRCLE$ & $\Circle$ & $\Circle$ & $\Circle$\\\hline 
         
         \textbf{{\fontfamily{qcr}\selectfont R3T}} & - & 2026 & $\CIRCLE$  & $\CIRCLE$  &  $\CIRCLE$ & $\CIRCLE$ & $\CIRCLE$ \\\hline 
    \end{tabular}
\begin{minipage}{0.95\linewidth} ~\\
\footnotesize	
     \textit{\textbf{F1}: Incentivization; \textbf{F2}: Information asymmetry; \textbf{F3}: {\clp } awareness; \textbf{F4}: Proof of concept; \textbf{F5}: decentralized incentive management.}
\end{minipage}
\end{table}
Parallel findings have been reported in FL. Yan et al. in \cite{yan2022seizing, yan2023criticalfl} showed that FL also exhibits {\fontfamily{qcr}\selectfont CLPs} that disproportionately shape final model performance under statistical data heterogeneity and system heterogeneity. To characterize and detect these phases, they introduced {\clp}-tracking metrics such as the federated Fisher Information matrix and {\fontfamily{qcr}\selectfont CLP}-aware federated norm vector. Building on these foundations, subsequent studies \cite{fedprime, ecolearnclps, 11275875} leveraged {\clp} awareness for system-level optimization. For instance, \clp-guided client selection has been developed using local client utilities that capture factors such as energy consumption, estimated contribution effort to the global model, and a tailored {\clp} detection has been proposed to selectively upload updates when selected clients are inferred to operate within {\clps}~\cite{fedprime}. Furthermore, authors in \cite{10737242, 11302878} extended {\clp}-aware designs to asynchronous and mobile FL by dynamically scheduling updates and adjusting training hyperparameters to mitigate global model generalization degradation caused by test-data shifts and temporal class imbalances. Li et al. in \cite{li2023revisiting} improved the global model generalization and mitigated heterogeneity by studying the link of {\clps} and their proposed adaptive aggregation, which upweights more coherent clients during early training and stops the learned reweighting around the end of {\clps}.

\textit{Despite these advancements, the majority of {\fontfamily{qcr}\selectfont CLP}-aware FL studies implicitly assume voluntary participation and largely overlook costs and benefit-driven and strategic behavior of rational clients. In practical FL deployments, clients may withhold data, communication, and computational resources, or strategically degrade model update quality, if participation costs are not offset by adequate incentives. Consequently, without explicit {\fontfamily{qcr}\selectfont CLP}-targeted incentive mechanisms, the FL systems may fail to elicit high-quality contributions required during {\fontfamily{qcr}\selectfont CLPs}, which prior evidence has shown to result in long-term irrecoverable impairment on the global model performance, despite subsequent efforts.}

\subsection{Incentive Mechanisms for Federated Learning} 

Incentive mechanisms have been widely incorporated into FL to stimulate participation and sustain cooperation among rational and benefit-driven clients, thereby improving convergence efficiency and the resulting global model quality~\cite{10323190, anonymous2026incentives, 10739914, 10539084, 10264195, 9497718, han2024dynamic, sun2021pain, ding2023joint, 10557146, 10418498, wu2024incentive, 11363323, murhekar2023incentives, 103714961, 10586269, chen2024fdfl, yuan2023tradefl, le2025applications}. 

A dominant line of work adopts game-theoretic formulations, which provide a principled framework for modeling strategic interactions and utility-maximizing decisions of distributed clients and the cloud server in FL (e.g., \cite{10264195, 9497718, han2024dynamic, sun2021pain, ding2023joint, 10557146, 10418498}). Specifically, authors in \cite{ding2023joint, 103714961, 10323190, 10264195, 9497718, han2024dynamic, 10586269, 10739914} designed leader-follower-based incentive mechanisms to encourage contribution in one and multiple training rounds, where the cloud server first announces the total reward or price per contribution unit, and then clients determine effort strategies (e.g., data volume or computing power for training global model or cleaning noisy local data samples) to maximize individual utilities and system-wide social welfare \cite{murhekar2023incentives}. \textit{However, many of these formulations rely on simplified information assumptions and do not explicitly model server-client information asymmetries, which is exacerbated by data-privacy regulations that restrict the disclosure of clients’ local data characteristics (e.g., data volume, quality, and privacy costs)}. To handle incomplete information, contract-theoretic designs exploit the self-revelation principle~\cite{bolton2004contract}. For instance, Sun et al. in \cite{sun2021pain} constructed contract items based on privacy-leakage costs under both complete and incomplete information to compensate privacy loss while preserving learning performance, and Xu et al. in \cite{11363323} further improved contract-based schemes by differentiating client contributions in the case clients have the same contract type, thereby enhancing the fairness. Under the same information conditions, Huang et al. in \cite{chen2024fdfl, 10557146, anonymous2026incentives, 10539084, 10586269} proposed contribution-based incentive mechanisms, where the client's data size and/or data quality are incorporated, to derive optimal clients' contributions that minimize the cloud server's costs. Tang et al. in \cite{10418498} proposed a non-cooperative game-based incentive mechanism to model the interactions between clients (i.e., organizations), deriving optimal processing capacity of each client, to address free-rider problems of public goods in FL under constraints of individual rationality and budget balance. Wu et al. in \cite{wu2024incentive} targeted problems of reward timeliness and monetary reward infeasibility in FL incentivization by providing a proportion of local model updates as rewards proportional to the client's per-round contributions.

In order to improve transparency and reduce centralization-induced disputes, blockchain has also been integrated with FL to support auditable incentive computation and automated distribution~\cite{wang2023incentive, han2024dynamic, yuan2023tradefl, 10418498, 10539084, nguyen2021federated}. For instance, Yuan et al. in \cite{yuan2023tradefl} leveraged Ethereum smart contracts to autonomously execute an incentive mechanism that explicitly accounts for competitive effects and provides fair and non-repudiative compensation, to enhance social welfare. 

\textit{Nevertheless, existing incentive designs largely treat all training rounds as equally valuable and, consequently, do not yet address how to deliver the right reward at the right time. In particular, none of these works considers the presence of {\fontfamily{qcr}\selectfont CLPs}, which have been shown to shape final FL model performance; nor do they analyze how information asymmetries between clients and the cloud server interact with {\fontfamily{qcr}\selectfont CLPs} to influence incentive effectiveness and final learning outcomes.}

To the best of our knowledge, {\fontfamily{qcr}\selectfont R3T} is the first {\fontfamily{qcr}\selectfont CLP}-aware incentive mechanism that jointly accounts for client joining time and contribution effort, to recruit rational and benefit-driven clients with the highest-quality contributions during early learning stages, while mitigating information asymmetry between clients and the cloud server. {\fontfamily{qcr}\selectfont R3T} enables transparent and autonomous reward computation and distribution.

\section{System model }
\label{sec:system_model}
\subsection{System Overview}\label{subsec:sys_overview}
\begin{figure}[t!]
	\centering
	\includegraphics[width=\linewidth]{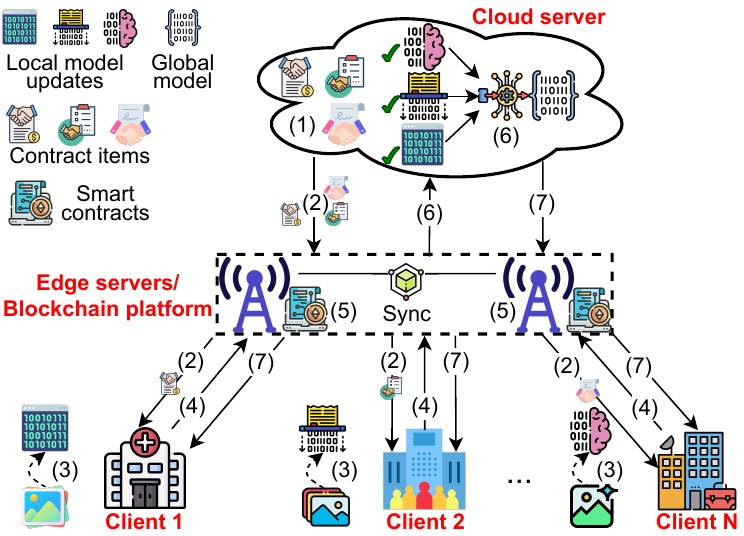}
	\caption{{\fontfamily{qcr}\selectfont R3T}'s one-round learning workflow: (1) prepare contracts, (2) sign contracts \& download global model, (3) local training, (4) upload updates, (5) forward \& validate updates, (6) receive \& aggregate model updates, and (7) settlement.}
    \label{fig:system_model}
\end{figure} 
As depicted in Fig.~\ref{fig:system_model}, the {\fontfamily{qcr}\selectfont R3T} framework consists of three main components: edge servers forming a blockchain platform, clients, and a cloud server. We denote a set of edge servers as $\mathcal{L}$, where each edge server $l \in \mathcal{L}$ connects with clients to exchange information. $\mathcal{L}$ forms, maintains, and operates a blockchain platform to record and validate information such as client identification (ID), local model updates, global models, joining time, and effort levels. Also, {\fontfamily{qcr}\selectfont R3T} integrates a distributed file storage system such as InterPlanetary File System (IPFS), to achieve privacy preservation, high availability, scalability, and low latency for data storage and retrieval \cite{jadav2022blockchain}. 

{{The FL process evolves over training rounds indexed by $t \in \mathcal{T}=\{1,\ldots,T\}$.}} The set of clients is denoted as $\mathcal{N}=\{1,...,n,...,N\}$, where $N$ represents the number of clients. Clients join and contribute their time, computing, and data resources to jointly train the global model published by the cloud server, in exchange for benefits such as monetary rewards. In each training round, a subset of clients $\mathcal{S}^{(t)}  \subseteq \mathcal{N}$ participates in improving the model performance (e.g., in the training round $t$, the cloud server randomly selects $|\mathcal{S}^{(t)}|$ clients out of $\mathcal{N}$). Besides, clients have the right to use or share their dataset $\mathcal{D}$, and the local loss function of a client is defined as  ${F}$.  In {\proposed}, the cloud initiates a set of global model parameters $\mathbf{w}$ and seeks high-quality clients to optimize the global model performance. It receives local model updates from clients, conducts model aggregation, and handles payments via a blockchain platform. 
{{For clarity, let $\mathbf{w}^{(t-1)}$ denote the global model parameter set available at
the beginning of round $t$. After local training
based on $\mathbf{w}^{(t-1)}$, each participating client $n$ produces an
updated local model, denoted by $\mathbf{w}_{n}^{(t)}$. The cloud server then aggregates the received local models to obtain the updated global model
$\mathbf{w}^{(t)}$ for the next round. Accordingly, $\mathbf{w}^{(0)}$ denotes
the initialized global model before the training process starts, and
$\mathbf{w}^{(T)}$ denotes the final global model after $T$ training rounds. Details of the {\proposed}'s learning workflow are discussed in the following section.}}

\subsection{Learning Workflow}\label{subsec:learning_workflow}
{\proposed} is designed to perform model training, storage, and reward calculation and allocation in a fair and auditable manner, leveraging blockchain and FL technologies. It integrates a {time-aware} contract theory-based mechanism to address right-time and right-reward allocation where {\clp} phenomenon exists. The {\proposed}'s \textbf{one-training-round} process involves the following steps. 
1) \textbf{Prepare contracts.} The cloud initiates {a set of global model parameters $\mathbf{w}^{(0)}$ at $t=0$}, designs a set of contract items, and distributes them to $N$ registered clients. 
2) \textbf{Sign contracts \& Download global model.} {From $t=1 \text{ to } t=T$}, each client signs a contract item to participate in the training process and downloads the global model {$\mathbf{w}^{(t-1)}$}. 
3) \textbf{Local training.} Clients train {$\mathbf{w}^{(t-1)}$} using their private data $\mathcal{D}$. 
4) \textbf{Upload updates.} Clients send their trained model updates { $\mathbf{w}_n^{(t)}$}, metadata (e.g., IDs, addresses, chosen contracts), and hashed trained model updates to their associated edge servers after training\footnote{We assume that clients are truthful in reporting their updates to the cloud. If required, the cloud can verify their updates using the Trusted Execution Environments \cite{zhang2020enabling} or reputation systems \cite{yu2024mathcal}. Besides, given the traceability property of the underlying blockchain \cite{yu2023ironforge} and incentive mechanism design (e.g., incentive compatibility) presented later in Section~\ref{sec:optimal_incentive_design}, clients are encouraged to truthfully report to benefit from joining and contributing.}. 
5) \textbf{Forward \& Validate updates.} Edges relay trained model updates to the cloud to minimize latency in the model aggregation process, and cross-validate hashed trained model updates and metadata to ensure data integrity and maintain consensus, storing records on the blockchain for transparency and dispute arbitration.
6) \textbf{Receive \& Aggregate model updates\footnote{To avoid the straggler effect, we employ a synchronous learning update scheme, which has provable convergence \cite{chen2019round, kairouz2021advances}.}.} The cloud aggregates them {to produce an updated global model $\mathbf{w}^{(t)}$} using an aggregation algorithm such as {FedAvg} \cite{mcmahan2017communication}. 
7) \textbf{Settlement.} The process iterates until the global model converges. Upon completion, smart contracts calculate and distribute rewards to clients to compensate for their contributions. Edge servers also receive rewards for validating transactions and storing data.

\subsection{Time Frame, Client Type, Contract, and Strategy}\label{subsec:contract}
\paragraphnumstyle
\setcounter{paragraph}{0}
\paragraph{Time Frame} The time frame is divided into $T$ training rounds\footnote{In this work, training rounds and time slots are used interchangeably.}, denoted as $\mathcal{T} = \{1,..., t,..., T \}$, in which \text{\fontfamily{qcr}\selectfont CLPs} $\subset \mathcal{T}$ are vital periods in determining the final model performance.

{
\paragraph{Client Type}\label{para:client_type} Clients are heterogeneous in their ability to participate effectively in the training process due to differences in factors such as participation availability and system-level training capability. To capture this heterogeneity in a tractable manner, we classify $N$ clients into $K$ ordered latent types. We denote the training capability of client $n$ of type $k$ by $\theta_{n,k}$, or simply by $\theta_k$\footnote{For simplicity, we interchangeably use $\theta_{n,k}$ and $\theta_k$ to denote the capability of a type-$k$ client $n$. Similarly, the contract item $\phi_{k}$ can be denoted as $\{{e_{n,k}, t_{n,k}, R_{n,k}}\}$ or $\{{e_{k}, t_{k}, R_{k}}\}$. The same notation applies to the model parameter set $\mathbf{w}$, which is introduced later.}. {The type parameter $\theta_k$ is interpreted as an \textit{effective training capability index} that summarizes a client's overall ability to provide timely participation and high-quality contributions (cf. $e_k$ in the next subsection) in the training process.} Under privacy constraints, the cloud does not directly observe the raw attributes underlying $\theta_k$ for each client. Instead, it only has access to coarse statistical information about the type distribution through market research or consented surveys~\cite{10.1145/3209582.3209588}.

}
\paragraph{Cloud's Contract Menu}\label{para:contract_menu} To motivate clients to join early, share resources, and conduct training, the cloud must offer transparent and fair contribution-reward bundles, compensating clients for their efforts. Recognizing the importance of \text{\fontfamily{qcr}\selectfont CLPs}, the cloud offers extra bounties to attract high-quality, resource-rich clients to participate actively. Consequently, each client's gain is influenced by their chosen time of participation in the training process. The cloud has a contract set $\boldsymbol{\phi} = \{\phi_k\}_{k \in \mathcal{K}}$, which contains $K$ contract items corresponding to $K$ client types per training round\footnote{{Before announcing the contract set, the cloud designs the menu based on its prior estimate of the client-type distribution introduced earlier. This estimate may, in principle, be refined over repeated training rounds using observable contract-fulfillment outcomes, including realized joining time, completion or dropout behavior, and blockchain-validated metadata. Moreover, under the incentive compatibility in Constraint~\ref{cons:ic}, clients self-select the contract item aligned with their effective types, which can further improve the cloud’s type-distribution estimate. In this work, however, we assume that the type-distribution estimate remains fixed over the $T$ training rounds~\cite{sun2021pain, zhang2015contract, kang2019incentive, 11082438}.}}. The contract item $\phi_k = \{e_k, t_k, R_k\}$ specifies the relationship between type-$k$ client's effort, joining time, and reward. {{
Here, $e_k$ denotes the effective local training contribution exerted by a type-$k$ client in one training round. It is modeled as a normalized scalar that captures how much useful local training contribution (e.g., data volume) the client contributes to the global model improvement. Since the cloud cannot directly inspect raw local data or exact private client-side resources, effort is evaluated through observable and verifiable protocol-level signals, including the client’s participating round, selected contract item, and validated metadata, such as the local amount of contributed data, cryptographic hashes of model artifacts, and round participation record. Hence, the server assesses effort through contract-fulfillment evidence rather than direct access to private data}}. While $t_k$ is the training time slot, which is either in \text{\fontfamily{qcr}\selectfont CLPs} or in \text{\fontfamily{qcr}\selectfont non-CLPs}, $R_k$ is the corresponding reward for type-$k$ client paid by the cloud for finishing the training round, and is given as follows:
\begin{align}
\label{eq:define_reward}
R_k = r_k + B_k, 
B_k = \theta_k h(t_k) e_k,
\end{align}
\noindent where $r_k$ is the basic salary and $B_k$ is the bonus, which has been shown as an effective payment structure to motivate clients' efforts toward collective goals (i.e., enhancing global model performance) \cite{joseph1998role}. Specifically, clients are assured to have a base compensation of salary $r$ upon joining the training process. Based on their individual types, the timing of their participation, and the level of effort they contribute, bonuses are awarded. These bonuses direct clients participating in {\fontfamily{qcr}\selectfont CLPs} to contribute more efforts during this period, as well as providing additional rewards for the highest-contributing clients \cite{churchill1993sales}. Besides, inspired by the demand-price relationships in \cite{SELCUK20191191}, we propose a function $h(t_k)$ which is a time-aware bonus unit function (i.e., bonus-time relationship function), where the crucial role of \text{\fontfamily{qcr}\selectfont CLPs} is known by clients, and is presented as follows:
\begin{align}
\label{eq:time_aware_bonus}
h(t_k) = \left\{ \begin{array}{ll}
         1 + \frac{\vartheta}{\ln(2t_k)} & \mbox{if ${t_k} \in \text{\fontfamily{qcr}\selectfont CLPs}$};\\
        1 & \mbox{if ${t_k} \in \text{\fontfamily{qcr}\selectfont non-CLPs}$},\end{array} \right. 
\end{align}        
\noindent where $\vartheta$ represents the bonus unit coefficient. \label{logarithmic_func} {{The logarithmic form is chosen to provide a tractable {\clp}-aware incentive schedule. It assigns the largest bonus to the earliest rounds, decreases monotonically with training time, and exhibits diminishing marginal decay, so that later {\clp} rounds still retain non-negligible incentive value rather than being discounted too aggressively. Compared with linear and exponential schedules, this form provides a balanced decay pattern while preserving the monotonic structure required by constraints defined later in Section~\ref{subsec:formulation}. In contrast, a learned schedule may require additional training and validation data, and may not guarantee monotonicity without explicit extra constraints.}} This function offers high-quality clients bonuses when they choose to join the model training process at an early stage, given the importance of {\clps}.


{
\begin{figure}[t!]
	\centering
	\includegraphics[width=0.97\linewidth]{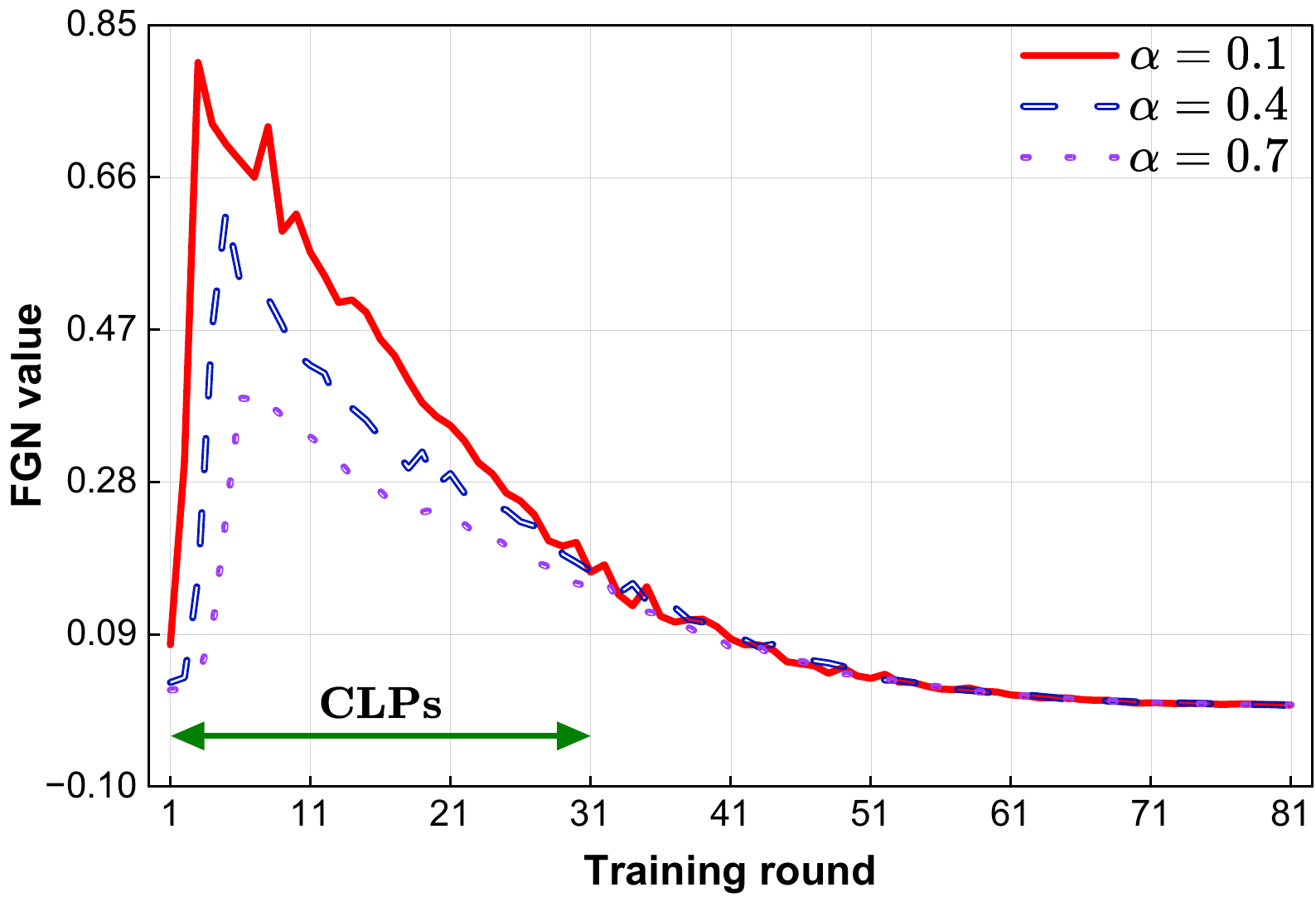}
	\caption{{Illustration of detecting {\clps} using FGN, where the double-arrow line indicates the identified {\clps}. The results are conducted on the CIFAR-10 dataset~\cite{krizhevsky2009learning}, which is non-IID partitioned using Dirichlet distributions $\textit{Dir}(0.1), \textit{Dir}(0.4), \textit{Dir}(0.7)$, respectively.}}
    \label{fig:clp_probing}
\end{figure} 

\paragraph{{\clp} Probing}{\label{para:clp_probing}} To determine whether training round $t$ belongs to the {\clps}, the cloud estimates the {\clp} status online using a federated gradient norm (FGN) proxy inspired by~\cite{yan2023criticalfl}. Specifically, for each participating client $n \in \mathcal{S}^{(t)}$, let
$\mathcal{B}_{n}^{(t)}=\{\xi_{n,1}^{(t)},\ldots,\xi_{n,B}^{(t)}\}$ denote a
mini-batch of size $B$ sampled at round $t$. The corresponding batch loss is
defined by  $\ell_{B,n}^{(t)}\!\left(\mathbf{w}^{(t-1)}\right)
=
\frac{1}{B}\sum_{b=1}^{B}
\ell\!\left(\mathbf{w}^{(t-1)};\xi_{n,b}^{(t)}\right),$
and its batch gradient is derived as $\mathbf{g}_{B,n}^{(t)}
=
\nabla \ell_{B,n}^{(t)}\!\left(\mathbf{w}^{(t-1)}\right)
=
\frac{1}{B}\sum_{b=1}^{B}
\nabla \ell\!\left(\mathbf{w}^{(t-1)};\xi_{n,b}^{(t)}\right).$
After one local gradient step with learning rate $\eta$, the batch-loss change
is approximated by the first-order Taylor expansion as:
\begin{align}
\Delta \ell_{B,n}^{(t)}
=
\ell_{B,n}^{(t)}\!\left(\mathbf{w}^{(t-1)}-\eta \mathbf{g}_{B,n}^{(t)}\right)
-
\ell_{B,n}^{(t)}\!\left(\mathbf{w}^{(t-1)}\right)
\approx
-\eta \left\|\mathbf{g}_{B,n}^{(t)}\right\|_2^2.
\end{align}
Accordingly, we define the round-$t$ FGN by:
\begin{align}
\label{eq:fgn}
\mathrm{FGN}(t)
=
-\eta\sum_{n \in \mathcal{S}^{(t)}} \omega_n^{(t)}
\left\|\mathbf{g}_{B,n}^{(t)}\right\|_2^2,
\quad
\omega_n^{(t)}
=
\frac{e_n^{(t)}}{\sum_{m \in \mathcal{S}^{(t)}} e_m^{(t)}},
\end{align}
where $\omega_n^{(t)}$ denotes the normalized contribution weight of client
$n$. Then, round $t$ is identified as belonging to the {\clps} if:
\begin{align}
\label{eq:fgn_condition}
\frac{\mathrm{FGN}(t)-\mathrm{FGN}(t-1)}{\mathrm{FGN}(t-1)} \geq \tau_{\clp}, 
\end{align}
where $\tau_{\clp}>0$ is a threshold controlling the sensitivity of
{\clp} detection, which is empirically determined \cite{yan2023criticalfl, huang2025pa3fed}. Empirically, the FGN trajectory typically increases during the early training stage and then gradually declines. Therefore, as depicted in Fig.~\ref{fig:clp_probing}, the initial phase provides an approximate estimate of the {\clps}, and the detected {\clp} status is subsequently used to evaluate the time-aware bonus unit function $h(t_k)$ in Eq.~\eqref{eq:time_aware_bonus}.}

\paragraph{Client's Strategy Space} Each client decides whether to participate in the training process (if \textit{yes}), how much effort they should exert, and which time slot to join. The choice of different time slots may result in varying bonuses. Particularly, participating during the \text{\fontfamily{qcr}\selectfont CLPs} yields higher payoffs for clients who contribute significant effort, as training the global model with huge effort during these periods is crucial for improving final model performance.

\subsection{Utilities}\label{subsec:uti}
According to \eqref{eq:define_reward}, $R_k$ is dependent on $e_k$, $t_k$, and $r_k$, thus each contract item is rewritten by $\phi_k = \{e_k, t_k, r_k\}$. {We define the utilities of each client, edge server, and cloud server per training round as follows.}

\paragraph{Client}\label{para:client_uti} The utility of type-$k$ client $n$ can be defined:
\begin{align}
\label{eq:client_utility}
 \mathcal{U}_n (e_k, t_k, r_k) &= R_k - \left(\frac{1}{2} \delta e^2_k + \beta r_k \right) \nonumber \\
& = {\theta}_k h(t_k) e_k - \frac{1}{2} \delta e^2_k - (\beta - 1)r_k,
\end{align}
where $\left(\frac{1}{2} \delta e_k^2 + \beta r_k \right)$ is the total contribution cost. This includes $\frac{1}{2} \delta e^2_k$, which is a quadratic cost function with respect to client's effort (e.g., sensing, collecting, and training of data or using cloud services for these tasks) according to \cite{bolton2004contract}, with $\delta > 0$ being a quadratic unit effort cost that varies on different learning tasks (e.g., generating data samples or training data samples). For example, training GPT-3 with 45 TB of compressed plaintext before filtering and 570 GB after filtering using 80 V100 GPUs costs more than \$2 million, or generating 100,000 data points using Amazon’s Mechanical Turk service will cost around \$70,000 \cite{brown2020language}. Additionally, $\beta r_k$ represents the time cost associated with joining time $t_k$, considered as a linear function of the reward, where $\beta > 1$ is the time cost coefficient \cite{bolton2004contract, wang2023connectivity}.

\paragraph{Edge Server} The edge server $l$’s utility is defined as $\mathcal{U}_l = {R_l} - v(f_l),$
where $R_l$ is the gain from validating data transactions, $v(f_l)$ is the validation cost function, and $f_l$ is the edge server’s computing resource (e.g., CPU-cycle frequency).

\paragraph{Cloud server} The cloud's utility is the difference between the gain (i.e., FL model performance) and costs (i.e., rewards for clients' contributions such as joining time and efforts in training the FL model, and for edge servers in validating data transactions) as follows:
\begin{align}
\label{eq:cloud_utility_1}
 \mathcal{U}_{c}(\boldsymbol{e}, \boldsymbol{t}, \boldsymbol{r}) = \lambda \sum_{k \in \mathcal{K}} g(h(t_k) e_k) - \sum_{k \in \mathcal{K}}{R_k} - \sum_{l \in \mathcal{L}}{R_l},
\end{align}
where $\lambda$ is a parameter adjusted by the cloud server to show its concerns for FL model performance (i.e., larger $\lambda$) or reward budget (i.e., smaller $\lambda$), $\boldsymbol{e} = \{e_{1}, \dots, e_{K}\}$, $\boldsymbol{t} = \{t_{1}, \dots, t_{K}\}$, and $\boldsymbol{r} = \{r_{1}, \dots, r_{K}\}$. {In each training round, multiple clients may select the same contract item $k$, and $R_k$ in \eqref{eq:cloud_utility_1} denotes the \emph{aggregate reward cost} incurred by the cloud for all participating clients choosing item $k$ in that round; this aggregate notation keeps \eqref{eq:cloud_utility_1} compact while remaining consistent with the budget feasibility constraint, which is defined in the next section.} Without loss of generality, $g(\cdot)$ is defined as a concave function \cite{murhekar2023incentives, kang2019incentive, zhou2019computation} with respect to the amount of effort and joining time. Specifically, with the same effort, each client can make different contributions and impacts on FL model performance at different training times.



\subsection{{\proposed} blockchain smart contracts}\label{subsec:blockchain}
{
\paragraph{Design-level threat model and blockchain scope}\label{para:threatmodel} {\proposed} employs blockchain as an accountability, integrity-verification, and reward-settlement layer for incentive management, rather than as a general-purpose defense against all FL security threats. We consider mutually distrustful but economically rational clients, edge servers, and the cloud. In this setting, clients may deny their selected contract items, joining rounds, or contributions, while the cloud may be accused of modifying contract items, denying valid submissions, delaying payment, or underpaying clients. Off-chain model artifacts may also be modified or become inconsistent with submitted records.

To address these incentive-related disputes, {\proposed} records contract selections, joining-round information, contribution metadata, off-chain storage references, cryptographic hashes of model artifacts, and reward-settlement events on-chain. These records provide tamper-evident evidence for contract fulfillment and dispute resolution. Active FL attacks \cite{yu2023ironforge}, such as model poisoning, Byzantine updates, and consensus-layer attacks, are outside the main scope of this work and can be addressed by complementary security mechanisms. Such defenses can be integrated with {\proposed} without changing the {\clp}-aware incentive design.}


\begin{figure}[t!]
	\centering
	\includegraphics[width=0.99\linewidth]{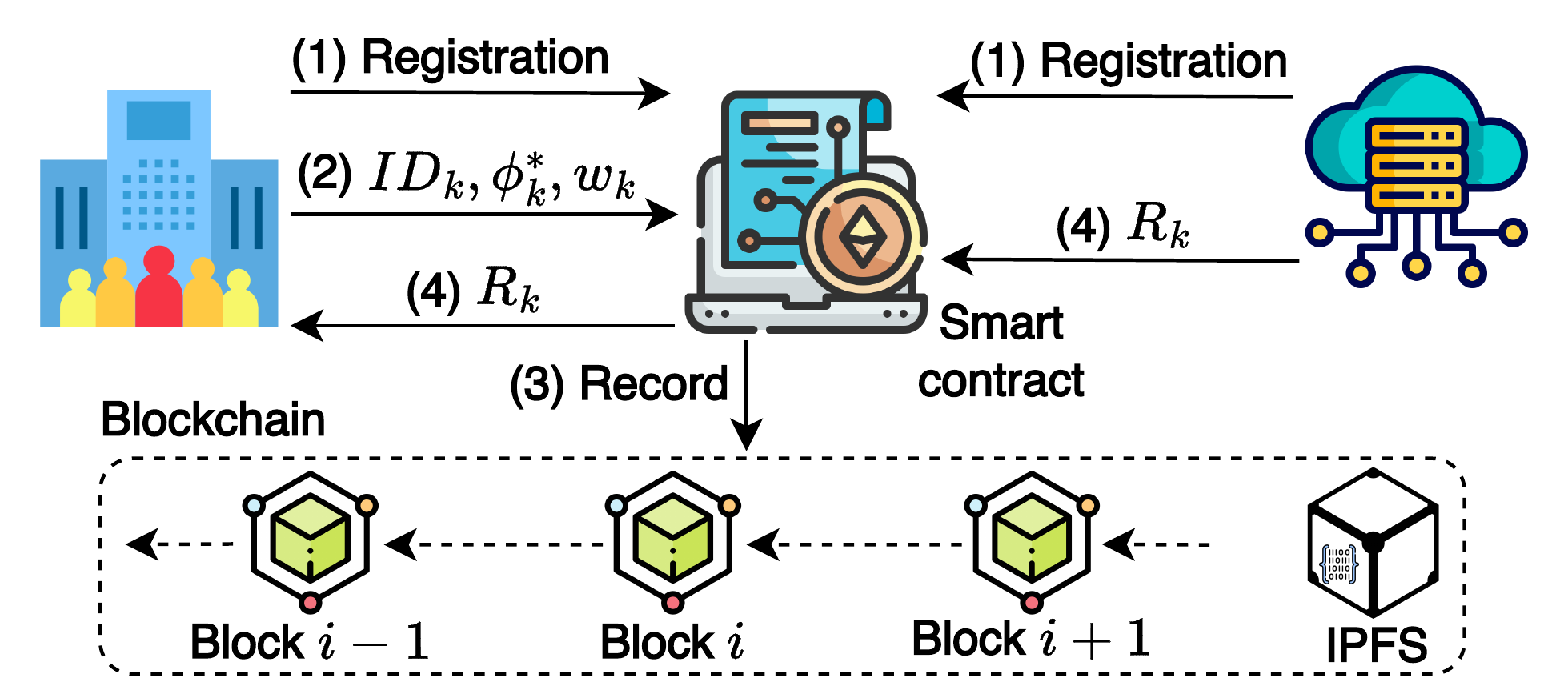}
	\caption{Procedure of {\fontfamily{qcr}\selectfont R3T}-based smart contracts.}
    \label{fig:smart_contract}
\end{figure}

\paragraph{{\proposed} smart contracts} Fig.~\ref{fig:smart_contract} illustrates the functionality of blockchain and smart contracts, including four main operations. 1) First, clients and the cloud initiate their participation. During this process, {\fontfamily{qcr}\selectfont R3T} records information, such as IDs and wallet addresses, for both clients and the cloud server to ensure that only registered participants can interact within the system. Additionally, the cloud publishes a contract set $\phi^*$, where $\phi^*$ is a set of optimal contract items determined by the cloud in Section~\ref{sec:optimal_incentive_design}, directly onto the blockchain for transparency and accessibility to clients. 2) Upon completing the training process, clients upload their strategy profiles or selected contract items $\phi_k^*$ with their IDs, the location of the model weights, and hashed model weights to the blockchain by invoking \textit{contributionSubmit()} function in the \textit{Contribution} smart contract. 3) To enhance scalability, minimize on-chain storage overhead, and provide robustness to single-point-of-failure attacks for the cloud server of {\fontfamily{qcr}\selectfont R3T}, \textit{dataStore()} in the \textit{Contribution} smart contract is employed. Global model weights are uploaded to private IPFS built by authorized edge servers while the resulting hash and corresponding training round number are recorded on-chain, ensuring data integrity and efficient and decentralized model management. 4) Finally, contract-based rewards $R_k$ (e.g., tokens or cryptocurrencies such as Ether in the Ethereum blockchain platform) are calculated and distributed through \textit{reward()} function in the \textit{Reward} smart contract.
 
Next, we analyze \text{\fontfamily{qcr}\selectfont R3T} under two information conditions over \text{\fontfamily{qcr}\selectfont CLPs} and \text{\fontfamily{qcr}\selectfont non-CLPs}. \textit{1) Complete Information Case:} Clients' types are public and known to the cloud and other clients. However, this case is impractical due to privacy leakage risks from exposing individual properties. \textit{2) Incomplete Information Case:} Clients' types are private and measured locally, but the cloud may statistically know the distribution of clients' types via market research or surveys \cite{10.1145/3209582.3209588}.
\section{{\proposed} - The Proposed Incentive Mechanism for critical learning periods }

\label{sec:optimal_incentive_design}
\subsection{Problem Formulation}\label{subsec:formulation}

A feasible contract set ensures clients, who choose their true preference types, receive equitable rewards that are commensurate with their costs and effort. Besides, in each training round, the cloud server can only afford a given budget $P$ for rewarding. To achieve this, it must satisfy the following individual rationality (IR), incentive compatibility (IC), and budget feasibility (BF) constraints:
\begin{constraint}[Individual Rationality]
\label{cons:ir}
Each type-$k$ client chooses the contract item if and only if its utility is non-negative,
\begin{align}
\label{eq:ir}
\mathcal{U}_n (e_k, t_k, r_k) \ge 0, \forall{k \in \mathcal{K}}, \forall{t_k \in \mathcal{T}}.
\end{align}
\end{constraint}

\begin{constraint}[Incentive Compatibility]
\label{cons:ic}
Each type-$k$ client maximizes its utility by choosing the contract item $\{e_k, t_k, r_k\}$, which is specifically designed for its type and is presented by
\begin{align} 
\label{eq:ic}
\mathcal{U}_n (e_k, t_k, r_k) \ge \mathcal{U}_n (e_{k'}, t_{k'}, r_{k'}), \forall{k, k' \in \mathcal{K}},  k \neq k'.
\end{align}
\end{constraint}

\begin{assumption}[Rationality]
\label{assump:rationality}
Each client is willing to join and contribute its effort to the training process, which guarantees IR and IC constraints and leads to the best utility for itself \cite{bolton2004contract}. For example, realizing the importance of {\fontfamily{qcr}\selectfont CLPs} with a correspondingly huge amount of rewards announced by the cloud server\footnote{That is considered as common knowledge \cite{bolton2004contract}.}, each client tends to join and allocate efforts in this period.
\end{assumption}

\begin{constraint}[Budget Feasibility]
\label{cons:bf}
The total rewards for all participating type-$k$ clients do not exceed cloud budget $P$, i.e., 
\begin{align} 
\label{eq:bf}
\sum\nolimits_{k=1}^{K} R_k \le P.
\end{align}
\end{constraint}

To maximize the utility, the cloud server offers the optimal contract set $\boldsymbol{\phi^*} = \{\boldsymbol{e^*}, \boldsymbol{t^*}, \boldsymbol{r^*}\}$ to clients. The optimal contract set is the solution to the following problem\footnote{\label{foot:blockchain_cost}The total blockchain mining rewards are constant because the computing resources required for this task are the same across all edge servers in each global training round \cite{wang2023incentive}.  Thus, this factor does not affect the optimization and analysis in this paper, so we omit it here.}:
\begin{subequations}
\begin{align}
\label{eq:optimal_contract_design_1}
\max_{(\boldsymbol{e}, \boldsymbol{t}, \boldsymbol{r})} \quad & \mathcal{U}_{c}{(\boldsymbol{e}, \boldsymbol{t}, \boldsymbol{r})}, \\
\textrm{s.t.} \quad & \eqref{eq:ir}, \eqref{eq:ic}, \eqref{eq:bf}, \forall t_k \in \mathcal{T}, \forall k \in \mathcal{K} \label{opt:ic}.
\end{align}
\end{subequations}

Given the cloud server's contract, we model the game among clients as follows:
\begin{itemize}[leftmargin=*]
\item \textit{Players:} $|\mathcal{S}^{(t)}|$ clients are in the set $\mathcal{N}$ per training round $t$.
\item \textit{Strategies:} each client $n \in \mathcal{S}^{(t)}$ decides which contract item $\phi_k = \{e_k, t_k, r_k \} \in \phi$ to choose based on its capability for model training task published by the cloud server.
\item \textit{Objectives:} each type-$k$ client $n$ aims to maximize its utility $\mathcal{U}_{n}(e_k, t_k, r_k)$ expressed in \eqref{eq:client_utility}.
\end{itemize}

Once $t_k$ and $r_k$ are given, we determine that the utility of a client in \eqref{eq:client_utility} is a strictly concave function with respect to its effort $e_k$. Hence, the optimal choice of effort $e_k$ for a type-$k$ client can be obtained by setting the first-order derivative of the client's utility function with respect to $e_k$ to zero. Specifically, 
\begin{align}
\label{eq:optimal_effort_2}
e^*_k = \frac{\theta_k h(t_k)}{\delta}.
\end{align}
From \eqref{eq:time_aware_bonus} and \eqref{eq:optimal_effort_2}, we can see that the optimal effort of each type-$k$ client is greater than zero and independent of $r_k$, but is increasing with its time and system capability $\theta_k$ and is decreasing with joining time $t_k$. In other words, if a client is willing to join early and has strong system capabilities, it has a higher chance of receiving more bonuses and rewards.

Substituting 
$e^*_k$ into \eqref{eq:define_reward}, \eqref{eq:client_utility}, \eqref{eq:cloud_utility_1}, the utilities of a type-$k$ client and the cloud server are rewritten respectively as
\begin{align}
R_k = r_k + \frac{\theta_k^2 h^2(t_k)}{\delta}, \label{eq:defined_reward_2} \end{align}
\begin{align}
\mathcal{U}_n (t_k, r_k) = \frac{{\theta}_k^2 h^2(t_k)}{2\delta} -  (\beta - 1)r_k, \label{eq:substituted_client_utility}
\end{align}
\begin{align}
\mathcal{U}_{c}(\boldsymbol{t}, \boldsymbol{r}) =  \sum_{k=1}^{K}{\left[\lambda g\left(\frac{\theta_k h^2(t_k)}{\delta}\right) - \left({r_k + \frac{\theta_k^2 h^2(t_k)}{\delta}}\right) \right]}. \label{eq:cloud_utility_2}
\end{align}
The optimal contract design can be rewritten by
\begin{subequations}
\begin{align}
\label{eq:optimal_contract_design_2}
\max_{(\boldsymbol{t}, \boldsymbol{r})} \quad & \mathcal{U}_{c}{(\boldsymbol{t}, \boldsymbol{r})}, \\
\textrm{s.t.} \quad &  \eqref{eq:ir}, \eqref{eq:ic}, \eqref{eq:bf}, { \forall t_k \in \mathcal{T}}, { \forall k \in \mathcal{K}}.
\end{align}
\end{subequations}%
\subsection{{\proposed} Incentivization under Complete Information Case}\label{subsec:complete_infor}
We study scenarios where the cloud server knows clients' time and system capabilities precisely before any interaction. This serves as an ideal benchmark, but it may be infeasible in real-world applications due to privacy regulations.

With precise knowledge of each client's type (i.e., client's available time and system capabilities), the cloud server can tailor personalized contracts to clients. Each type-$k$ client receives a corresponding contract item $\phi_k = \{e_k, t_k, r_k\}$, ensuring that IC and BF constraints in \eqref{eq:ic} and \eqref{eq:bf} are satisfied. The cloud server only considers the IR constraints in \eqref{eq:ir} for feasible contract design. The optimal contract design under the complete information case can be formulated as
\begin{subequations}
\begin{align}
\label{eq:optimal_contract_complete_info_1}
\max_{(\boldsymbol{t}, \boldsymbol{r})} \quad & \sum_{k=1}^{K}{\left[\lambda g\left(\frac{\theta_k h^2(t_k)}{\delta}\right) - {r_k - \frac{\theta_k^2 h^2(t_k)}{\delta}} \right]}, \\
\noindent \textrm{s.t.} \quad & \frac{{\theta}_k^2 h^2(t_k)}{2\delta} -  (\beta - 1)r_k \ge 0,  \\
& \sum_{k=1}^{K} \left[r_k + \frac{\theta_k^2 h^2(t_k)}{\delta}\right] \le P, { \forall t_k \in \mathcal{T}}, { \forall k \in \mathcal{K}} \label{subeq:bf}.
\end{align}
\end{subequations}

\begin{theorem}
\label{theorem_1}
All optimal contract items satisfy the condition $\frac{{\theta}_k^2 h^2(t_k)}{2\delta} -  (\beta - 1)r_k = 0,  \forall{k \in \mathcal{K}}$ under the complete information case, that is the client's utility is zero.
\end{theorem}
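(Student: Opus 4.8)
The statement is the textbook \emph{full-surplus-extraction} fact for contracting under symmetric information, so the plan is a short optimality (exchange) argument rather than a computation. The first step is to prune the feasible region of problem \eqref{eq:optimal_contract_complete_info_1}: because the cloud observes every client's type, it can hand each type-$k$ client exactly the item $\phi_k$ designed for that type, so the incentive-compatibility family \eqref{eq:ic} is automatically met and may be discarded, leaving the $K$ individual-rationality inequalities \eqref{eq:ir} (one per type) together with the single aggregate budget inequality \eqref{subeq:bf}. After inserting the optimal effort $e_k^{*}=\theta_k h(t_k)/\delta$ from \eqref{eq:optimal_effort_2}, the base salary $r_k$ enters the objective \eqref{eq:cloud_utility_2} only through the linear term $-r_k$ and enters the constraints only through the type-$k$ IR expression $\tfrac{\theta_k^2 h^2(t_k)}{2\delta}-(\beta-1)r_k$ and through the budget term $\sum_k R_k\le P$; in particular, once the joining times $\boldsymbol{t}$ are fixed, the problem is separable in the $r_k$'s up to that one coupling inequality.

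The core of the argument is that, since $\mathcal{U}_c$ is affine and strictly monotone in each $r_k$, an optimal $r_k$ must sit on the boundary of the interval that the constraints permit, and one then identifies that boundary as the IR one. I would make this precise by contradiction: suppose $(\boldsymbol{t}^{*},\boldsymbol{r}^{*})$ is optimal but the type-$j$ IR is slack, i.e. $\mathcal{U}_n(t_j^{*},r_j^{*})>0$. Holding $\boldsymbol{t}^{*}$ and every coordinate of $\boldsymbol{r}^{*}$ except $r_j$ fixed, perturb $r_j$ by a small step in the direction that reduces the cloud's payment on item $j$. Because the $j$-th IR is strictly slack, a nonzero such step is feasible; it leaves all other items — hence all other IR constraints and the now-vacuous IC constraints — untouched, and it only relaxes \eqref{subeq:bf} since the total payment $\sum_k R_k$ strictly decreases. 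But by the monotonicity of $\mathcal{U}_c$ in $r_j$, this strictly increases $\mathcal{U}_c$, contradicting optimality. Hence no type's IR can be slack, so $\mathcal{U}_n(t_k^{*},r_k^{*})=0$, equivalently $\tfrac{\theta_k^2 h^2(t_k)}{2\delta}-(\beta-1)r_k=0$, for every $k\in\mathcal{K}$.

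The step I expect to be the crux is the monotonicity/active-constraint bookkeeping in that exchange: one must pin down, from the explicit signs of the coefficient of $r_j$ in $\mathcal{U}_c$ and of $(\beta-1)$ in $\mathcal{U}_n$, that the payment-reducing perturbation of $r_j$ is precisely the one that eventually runs into the IR boundary rather than into the budget inequality or a sign restriction, so that it is IR — and IR alone — that binds at the optimum. The remaining points are routine: feasibility of the one-coordinate perturbation is immediate since only $r_j$ moves and it moves in the direction that slackens \eqref{subeq:bf}; the conclusion is pointwise in $k$ even though \eqref{subeq:bf} couples the types, because the perturbation touches a single coordinate and only loosens that coupling constraint; and $\boldsymbol{t}$ being a decision variable is harmless, since one may optimize the $r_k$'s after fixing $\boldsymbol{t}^{*}$, as the objective and constraints decouple in $r_k$ once $\boldsymbol{t}$ is held fixed.
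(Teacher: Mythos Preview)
Your exchange argument is the natural one for a standard principal--agent model, but in \emph{this} paper's model the sign structure is non-standard, and that is exactly where your ``crux'' step fails. By \eqref{eq:substituted_client_utility} the client's post-effort utility is
\[
\mathcal{U}_n(t_k,r_k)\;=\;\frac{\theta_k^2 h^2(t_k)}{2\delta}\;-\;(\beta-1)\,r_k,
\]
and since $\beta>1$ the coefficient of $r_k$ is \emph{negative}: the client's utility is \emph{decreasing} in the base salary. Consequently the IR inequality $\mathcal{U}_n\ge 0$ is an \emph{upper} bound on $r_k$, namely $r_k\le \tfrac{\theta_k^2 h^2(t_k)}{2\delta(\beta-1)}$, not a lower bound. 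Your ``payment-reducing'' perturbation (decrease $r_j$) therefore moves \emph{away} from the IR boundary, not toward it: it strictly increases $\mathcal{U}_c$, relaxes \eqref{subeq:bf}, \emph{and} strictly increases $\mathcal{U}_n$. The contradiction you obtain does not force IR to bind; it only shows that, with the constraints as written in \eqref{eq:optimal_contract_complete_info_1}, $r_j$ would be driven downward until it hits some \emph{other} constraint (a sign restriction $r_j\ge 0$, say), at which point IR is still slack. So the sentence ``the payment-reducing perturbation of $r_j$ is precisely the one that eventually runs into the IR boundary'' is false here, and the conclusion $\mathcal{U}_n=0$ does not follow from your argument.

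The paper's own proof pushes $r_k$ in the \emph{opposite} direction: it argues (heuristically) that a rational cloud increases both $h(t_k)$ and $r_k$ to induce early joining, and raises $r_k$ until the IR inequality becomes an equality. Whether or not one finds that reasoning fully rigorous, it at least has the direction consistent with IR being the active constraint in this model, whereas your perturbation does not. If you want to repair your approach, you would need to identify an explicit lower bound on $r_k$ in the formulation and then argue separately why the optimum sits at the IR cap rather than at that lower bound; the one-line monotonicity argument you sketched cannot do this on its own.
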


\begin{proof}
  A rational cloud server aims to exploit contributions (i.e., effort and early-available time) from clients, leading to zero utility for clients. To be more specific, the cloud server always chooses a larger effort $e_k$ (i.e., a larger $h(t_k)$ where $e_k = \frac{\theta_k h(t_k)}{\delta}$ shown in \eqref{eq:optimal_effort_2}), and a larger salary $r_k$. By increasing these variables, the cloud incentivizes clients to join promptly and invest time and effort in model training. In other words, the cloud exhausts its budget to quickly achieve model convergence as long as the IR constraints are not violated, specifically, $\frac{{\theta}_k^2 h^2(t_k)}{2\delta} -  (\beta - 1)r_k = 0$.
\end{proof}


It is important to note that a larger $h(t_k)$ means a smaller $t_k$ that is at the early training stage. The optimal contract design in \eqref{eq:optimal_contract_complete_info_1} is rewritten by:
\begin{subequations}
\begin{align}
\label{eq:optimal_contract_complete_info_2}
\max_{(\boldsymbol{t}, \boldsymbol{r})} \quad & \sum_{k=1}^{K}{\left[\lambda g\left(\frac{\theta_k h^2(t_k)}{\delta}\right) - r_k - \frac{\theta_k^2 h^2(t_k)}{\delta}\right]}, \\
\noindent \textrm{s.t.} \quad & \frac{{\theta}_k^2 h^2(t_k)}{2\delta} -  (\beta - 1)r_k = 0,  \label{eq:cic_reduced_ir}\\
& \sum_{k=1}^{K} \left[r_k + \frac{{\theta}_k^2 h^2(t_k)}{2\delta}\right] \le P, { \forall t_k \in \mathcal{T}}, { \forall k \in \mathcal{K}}.
\end{align}
\end{subequations}
In \eqref{eq:cic_reduced_ir}, by replacing $r_k = \frac{{\theta}_k^2 h^2(t_k)}{2\delta (\beta - 1)}$, we have
\begin{subequations}
\label{eq:optimal_contract_complete_info_3_main}
\begin{align}
\label{eq:optimal_contract_complete_info_3}
    \max_{\boldsymbol{t}} \quad & \sum_{k=1}^{K}{\left[\lambda g\left(\frac{\theta_k h^2(t_k)}{\delta}\right) - \frac{{\theta}_k^2 h^2(t_k)}{2\delta(\beta - 1)} - \frac{\theta_k^2 h^2(t_k)}{\delta}\right]}, \\
\noindent \textrm{s.t.} \quad  & \sum_{k=1}^{K} \left[\frac{{\theta}_k^2 h^2(t_k)}{2 \delta(\beta - 1)} + \frac{\theta_k^2 h^2(t_k)}{\delta}\right] \le P, { \forall t_k \in \mathcal{T}}, { \forall k \in \mathcal{K}}.
\end{align}
\end{subequations}
\noindent Therefore, by utilizing exhaustive search, we can calculate optimal values of $h(t_k)$ and $t_k$, then $r_k$, $R_k$, and $e_k$. \label{para:complexity1}{{Note that the exhaustive search is performed only over the discrete joining-time vector $\boldsymbol{t}$, where each $t_k \in \{1,\ldots,T\}$. Once a candidate $\boldsymbol{t}$ is fixed, the corresponding $h(t_k)$ is obtained from Eq.~\eqref{eq:time_aware_bonus}, and then $e_k$, $r_k$, and $R_k$ are computed in closed form using Eqs.~\eqref{eq:optimal_effort_2}-\eqref{eq:defined_reward_2}. Therefore, the search is not over the full $(t_k,r_k,e_k)$ space, but only over the $K$-dimensional discrete time-assignment space. The raw worst-case search space contains $T^K$ candidates, and each candidate can be evaluated in $\mathcal{O}(K)$ time under the complete-information case. Hence, the worst-case complexity is $\mathcal{O}(KT^K)$.}}

\subsection{{\proposed} Incentivization under Incomplete Information Case} \label{subsec:iic}
We now introduce the contract game in an incomplete information setting. Clients choose contract items (i.e., effort, joining time, and corresponding reward) offered by the cloud. While the cloud aims to optimize its utility (e.g., global model performance), clients seek to maximize their incentives. 





\begin{lemma}[Monotonicity in {\fontfamily{qcr}\selectfont CLPs}]
\label{lem:monotonicity}
The higher-type client should receive a higher reward $R$ when they join the training process early and contribute more effort. Otherwise, all clients would opt for higher-type contract items but join later and contribute less effort. Thus, under the IC constraints, if $\theta_{k} > \theta_{k'}, \forall k,k' \in \mathcal{K}$, then the reward (including salary and bonus), effort, and joining time satisfy the following inequalities:
\begin{align} 
\label{eq:monotonicity_1}
  R_k > R_{k'}, 
 r_k \ge r_{k'},
 B_k > B_{k'},
 e_k > e_{k'},
 t_k \leq t_{k'}.
\end{align}
\end{lemma}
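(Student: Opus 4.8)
The plan is to prove the chain of inequalities in \eqref{eq:monotonicity_1} by a standard contract-theoretic argument, exploiting the IC constraints \eqref{eq:ic} together with the closed-form optimal effort \eqref{eq:optimal_effort_2} and the monotone ordering of types in Definition~\ref{def:condition_for_theta}. First I would substitute the optimal effort $e_k^\ast = \theta_k h(t_k)/\delta$ into the client utility so that every contract item is effectively parametrized by $(t_k, r_k)$, giving $\mathcal{U}_n$ as in \eqref{eq:substituted_client_utility}: $\mathcal{U}_n(t_k,r_k) = \theta_k^2 h^2(t_k)/(2\delta) - (\beta-1)r_k$. The key observation is that the ``productivity'' term $\theta_k^2 h^2(t_k)/(2\delta)$ is increasing in $\theta_k$ for a fixed time slot, which is the single-crossing / Spence--Mirrlees structure that drives monotonicity.

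Next I would write the two IC inequalities that bind a type-$k$ client against a type-$k'$ client and vice versa (with $\theta_k > \theta_{k'}$): from $\mathcal{U}_n$ evaluated at $\phi_k$ versus $\phi_{k'}$ and the symmetric one, adding the two inequalities yields $(\theta_k^2 - \theta_{k'}^2)\big(h^2(t_k) - h^2(t_{k'})\big) \ge 0$. Since $\theta_k > \theta_{k'} > 0$, this forces $h^2(t_k) \ge h^2(t_{k'})$, i.e. $h(t_k) \ge h(t_{k'})$, and because $h$ is (weakly) decreasing in the time slot by \eqref{eq:time_aware_bonus} (strictly on \text{\fontfamily{qcr}\selectfont CLPs}), this gives $t_k \le t_{k'}$; combined with $e_k^\ast = \theta_k h(t_k)/\delta$ and $\theta_k > \theta_{k'}$, $h(t_k) \ge h(t_{k'})$, we get $e_k > e_{k'}$, hence also $B_k = \theta_k h(t_k) e_k > \theta_{k'} h(t_{k'}) e_{k'} = B_{k'}$. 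For the salary ordering $r_k \ge r_{k'}$, I would use the single downward IC constraint $\mathcal{U}_n(t_k,r_k) \ge \mathcal{U}_n(t_{k'},r_{k'})$, which rearranges to $(\beta-1)(r_k - r_{k'}) \ge \theta_k^2\big(h^2(t_{k'}) - h^2(t_k)\big)/(2\delta)$; but since we have already shown $h^2(t_k) \ge h^2(t_{k'})$, I would instead use the \emph{upward} IC constraint of type $k'$ deviating to $\phi_k$, namely $(\beta-1)(r_k - r_{k'}) \le \theta_{k'}^2\big(h^2(t_k) - h^2(t_{k'})\big)/(2\delta)$ — wait, the correct move is: the downward IC for type $k$ combined with $h(t_k)\ge h(t_{k'})$ gives $(\beta-1)(r_k-r_{k'}) \ge \theta_k^2(h^2(t_{k'})-h^2(t_k))/(2\delta)$, and since the right side could be negative I additionally invoke the IR constraint and the fact that a rational cloud never pays strictly more than needed, or more cleanly, the upward IC for type $k'$: $(\beta-1)r_{k'} - \theta_{k'}^2 h^2(t_{k'})/(2\delta) \ge \dots$; together these sandwich $r_k - r_{k'}$ between two nonnegative quantities whenever $h(t_k) > h(t_{k'})$, and handle the boundary case $h(t_k) = h(t_{k'})$ separately (then types in the same slot, and IC forces $r_k = r_{k'}$, giving $r_k \ge r_{k'}$). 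Finally, $R_k > R_{k'}$ follows immediately from $R_k = r_k + B_k$, $r_k \ge r_{k'}$, and $B_k > B_{k'}$.

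The main obstacle I anticipate is the salary ordering $r_k \ge r_{k'}$ and the strictness bookkeeping. Unlike the effort, bonus, and time inequalities — which fall out directly from adding the two IC constraints and the strict monotonicity of $\theta \mapsto \theta^2$ — the salary inequality requires carefully choosing \emph{which} IC constraint to use and arguing that the relevant deviation payoff differences have the right sign; one must also rule out the degenerate situation where two adjacent types are pooled on the same time slot with different salaries, which would violate IC. I would resolve this by treating the strict case $h(t_k) > h(t_{k'})$ and the pooled case $h(t_k) = h(t_{k'})$ separately, using the IR/zero-utility structure from Theorem~\ref{theorem_1} (or its incomplete-information analogue) to pin down $r_{k'}$ and then showing $r_k$ must be at least as large for the type-$k$ IC to hold. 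A secondary subtlety is that $h$ is only \emph{weakly} decreasing (constant on \text{\fontfamily{qcr}\selectfont non-CLPs}), so the claimed strictness of $e_k > e_{k'}$ and $B_k > B_{k'}$ relies on $\theta_k > \theta_{k'}$ rather than on $h$; I would make explicit that $e_k = \theta_k h(t_k)/\delta > \theta_{k'} h(t_{k'})/\delta = e_{k'}$ holds because $\theta_k > \theta_{k'}$ and $h(t_k) \ge h(t_{k'}) > 0$.
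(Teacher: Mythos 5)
Your proposal is correct and follows essentially the same route as the paper: sum the two cross-IC inequalities to obtain $h^2(t_k) \ge h^2(t_{k'})$, deduce $t_k \le t_{k'}$ from the monotonicity of $h$, read off $e_k > e_{k'}$ from $e_k^{*} = \theta_k h(t_k)/\delta$ together with $\theta_k > \theta_{k'}$, and then conclude $B_k > B_{k'}$ and $R_k > R_{k'}$. The only place you wobble is the salary ordering, where your sign bookkeeping goes back and forth: the clean step (and the one the paper uses) is just the lower type's IC constraint, which rearranges directly to $(\beta-1)(r_k - r_{k'}) \ge \tfrac{\theta_{k'}^2}{2\delta}\,[h^2(t_k) - h^2(t_{k'})] \ge 0$, so no sandwich argument, no appeal to IR or Theorem~\ref{theorem_1}, and no separate treatment of the pooled case $h(t_k)=h(t_{k'})$ is needed.
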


\begin{proof}
According to the IC constraints in \eqref{eq:ic}, we have 
\begin{align}
& \frac{{\theta^2_k} h^2(t_k)}{2\delta} -  (\beta - 1)r_k \ge \frac{{\theta^2_k} h^2(t_{k'})}{2\delta}-  (\beta - 1)r_{k'}, \label{eq:IC1} \\
& \frac{{\theta^2_{k'}} h^2(t_{k'})}{2\delta} -  (\beta - 1)r_{k'} \ge \frac{{\theta^2_{k'}} h^2(t_k)}{2\delta} -  (\beta - 1)r_{k}, \label{eq:IC2}
\end{align}
 with $k, k' \in \mathcal{K}, k \neq k'$. Summing up \eqref{eq:IC1} and \eqref{eq:IC2}, we obtain
\begin{align}
\label{eq:proof_monotonicity_2}
\frac{h^2(t_k)}{2\delta}({\theta^2_k} - {\theta^2}_{k'}) \ge \frac{h^2(t_{k'})}{2\delta}({\theta^2_k} - {\theta^2}_{k'}).
\end{align}

\noindent As $\theta_k > \theta_{k'} > 0$, we have $({\theta_k}^2 - {\theta}_{k'}^2) > 0$. Dividing both sides by $({\theta_k}^2 - {\theta}_{k'}^2)$, we have $h^2(t_k) \geq h^2(t_{k'})$ and as $h(t_k) \ge 1, \forall{k \in \mathcal{K}}$, we conclude $h(t_k) \geq h(t_{k'})$. In addition, because $h(t_k)$ is a strictly decreasing function of $t_k$, we can obtain $t_{k} \leq t_{k'}$. Also, we prove if $h(t_k) \geq h(t_{k'})$, then $r_k \geq r_{k'}$ under the IC constraints. Based on \eqref{eq:IC2}, we have 
 \begin{align} 
\label{eq:proof_monotonicity_3}
& (\beta - 1)r_{k} -  (\beta - 1)r_{k'} \ge \frac{{\theta^2_{k'}} h^2(t_k)}{2\delta} -  \frac{{\theta^2_{k'}} h^2(t_{k'})}{2\delta}, \nonumber \\
& (\beta - 1)(r_{k} - r_{k'}) \ge \frac{{\theta}_{k'}^2}{2\delta} [h^2(t_{k}) - h^2(t_{k'})] \geq 0.
\end{align}
\noindent Since $\beta > 1$ and $h(t_k) \geq h(t_{k'})$, thus $r_k \geq r_{k'}$. Next, with the conditions of $\theta_{k} > \theta_{k'}, h(t_{k}) \geq h(t_{k'})$, and \eqref{eq:optimal_effort_2}, we have $e_{k} > e_{k'}$. As a result, we conclude $B_k > B_{k'}$, thereby $R_k > R_{k'}$ based on \eqref{eq:define_reward}. This completes the proof.
\end{proof}
\begin{lemma}[Client utility condition]
\label{lem:client_utility}
For any feasible contract item $(e_k, t_k, r_k)$, the type-$k$ client's utility satisfies
\begin{align}
 \mathcal{U}_n(e_1, t_1, r_1) < ... < \mathcal{U}_n(e_k, t_k, r_k) &< ... \notag \\ & < \mathcal{U}_n(e_K, t_K, r_K).
\end{align}
\end{lemma}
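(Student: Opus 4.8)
The plan is to prove the strict chain of client‑utility inequalities as a direct corollary of the monotonicity result in Lemma~\ref{lem:monotonicity}, together with the IC constraints from Definition~\ref{def:ic}. The key observation is that for any two types $k, k'$ with $\theta_k > \theta_{k'}$, I want to show $\mathcal{U}_n(e_k,t_k,r_k) > \mathcal{U}_n(e_{k'},t_{k'},r_{k'})$, where both sides are evaluated using the \emph{same} client's cost structure --- i.e. I compare the type-$k$ client's utility under its own contract item against the utility it would get from choosing the type-$k'$ item.

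First I would fix $k > k'$ and start from the IC inequality \eqref{eq:ic}, which gives $\mathcal{U}_n(e_k,t_k,r_k) \ge \mathcal{U}_n(e_{k'},t_{k'},r_{k'})$ for the type-$k$ client; using the substituted form \eqref{eq:substituted_client_utility}, the right-hand side is $\frac{\theta_k^2 h^2(t_{k'})}{2\delta} - (\beta-1)r_{k'}$. Then I would compare this against the type-$k'$ client's own utility $\frac{\theta_{k'}^2 h^2(t_{k'})}{2\delta} - (\beta-1)r_{k'}$. The difference between these two quantities is $\frac{h^2(t_{k'})}{2\delta}(\theta_k^2 - \theta_{k'}^2)$, which is \emph{strictly} positive because $\theta_k > \theta_{k'} > 0$ (Definition~\ref{def:condition_for_theta}) and $h(t_{k'}) \ge 1$ (from \eqref{eq:time_aware_bonus}, so $h^2(t_{k'}) > 0$). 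Chaining these two steps yields
\begin{align}
\mathcal{U}_n(e_k,t_k,r_k) \ge \frac{\theta_k^2 h^2(t_{k'})}{2\delta} - (\beta-1)r_{k'} > \frac{\theta_{k'}^2 h^2(t_{k'})}{2\delta} - (\beta-1)r_{k'} = \mathcal{U}_n(e_{k'},t_{k'},r_{k'}),
\end{align}
so the strict inequality holds for each consecutive pair $(k, k+1)$; applying this for $k = 1, \dots, K-1$ and transitivity gives the full chain. I would present the argument for a generic consecutive pair and then invoke induction/transitivity for the complete ordering.

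The main subtlety --- rather than a genuine obstacle --- is being careful about \emph{whose} cost function appears in each term: the notation $\mathcal{U}_n(e_k,t_k,r_k)$ in the lemma statement must be read as ``the utility accruing to a type-$k$ client from its designated item,'' since $\mathcal{U}_n$ as defined in \eqref{eq:client_utility} already bakes in the type parameter $\theta_k$ through the bonus term. So the step ``IC gives $\mathcal{U}_n(e_k,t_k,r_k) \ge \mathcal{U}_n(e_{k'},t_{k'},r_{k'})$'' uses the type-$k$ client's valuation on \emph{both} sides (mimicking item $k'$ keeps $\theta_k$ but swaps $(t_{k'}, r_{k'})$), whereas the final term $\mathcal{U}_n(e_{k'},t_{k'},r_{k'})$ on the right of the displayed chain is the type-$k'$ client's valuation of \emph{its own} item (genuine $\theta_{k'}$). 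The positivity of the gap $\frac{h^2(t_{k'})}{2\delta}(\theta_k^2 - \theta_{k'}^2)$ bridges exactly this change of valuation, and once that is spelled out the proof is a two-line computation. One should also note that we do not even need the IR constraint here --- only IC and the strict ordering of types --- which is consistent with the fact that Lemma~\ref{lem:client_utility} is about relative utilities, not their sign.
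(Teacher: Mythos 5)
Your proposal is correct and follows essentially the same route as the paper's own proof: apply the IC constraint to get $\mathcal{U}_n(e_k,t_k,r_k) \ge \frac{\theta_k^2 h^2(t_{k'})}{2\delta} - (\beta-1)r_{k'}$, then use $\theta_k > \theta_{k'} > 0$ and $h^2(t_{k'}) > 0$ to strictly lower-bound this by the type-$k'$ client's own utility, and chain over consecutive types. Your explicit remark about which type's valuation $\theta$ appears on each side of the IC step is a useful clarification that the paper leaves implicit, but the argument itself is the same.
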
 \vspace{-2.5mm}
\begin{proof}
According to Lemma~\ref{lem:monotonicity}, high-type clients who receive more rewards must contribute greater efforts and join as early as possible (i.e., $ R_k > R_{k'}, e_k > e_{k'} \text{ and } t_k \leq t_{k'}$ are imposed together). Upon the conditions that ${\theta}_k > {\theta}_{k'}$ and \eqref{eq:optimal_effort_2}, we have
\begin{align}
 & \mathcal{U}_n(e_k, t_k, r_k) = \frac{{\theta^2_k} h^2(t_k)}{2\delta} -  (\beta - 1)r_k  \ge \frac{{\theta^2_k} h^2(t_{k'})}{2\delta}-  (\beta - 1)r_{k'}  \nonumber \\ 
& > \frac{{\theta^2_{k'}} h^2(t_{k'})}{2\delta} - (\beta - 1)r_{k'} = \mathcal{U}_n(e_{k'}, t_{k'}, r_{k'}) .
\end{align}
This completes the proof.
\end{proof}

\begin{lemma}[Simplify the IR constraints]
\label{lem:reduce_ir_constraints}
In the optimal contract, given that IC constraints in \eqref{eq:ic} are satisfied, the IR constraint for type-$1$ client is binding, i.e., $\frac{{\theta^2_{1}} h^2(t_{1})}{2\delta} -  (\beta - 1)r_1 = 0$.
\end{lemma}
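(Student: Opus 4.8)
The plan is to reduce the $K$ individual-rationality (IR) constraints to the single type-$1$ constraint, and then to show by a rent-extraction/contradiction argument that this surviving constraint is tight at any optimal contract; this is the incomplete-information counterpart of Theorem~\ref{theorem_1} and is the familiar ``no rent for the worst type'' property of screening contracts~\cite{bolton2004contract}. For the reduction I would invoke Lemma~\ref{lem:client_utility}: whenever the IC constraints \eqref{eq:ic} hold, $\mathcal{U}_n(e_1,t_1,r_1)$ is the strict minimum of $\{\mathcal{U}_n(e_k,t_k,r_k)\}_{k\in\mathcal{K}}$, so $\mathcal{U}_n(e_1,t_1,r_1)\ge 0$ forces $\mathcal{U}_n(e_k,t_k,r_k)\ge 0$ for every $k$. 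Hence in problem \eqref{eq:optimal_contract_design_2} every IR constraint other than the type-$1$ one is redundant, and it suffices to prove $\mathcal{U}_n(e_1,t_1,r_1)=\frac{\theta_1^2 h^2(t_1)}{2\delta}-(\beta-1)r_1=0$ at the optimum, using the reduced utility \eqref{eq:substituted_client_utility}.

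For tightness, suppose toward a contradiction that an optimal contract $\boldsymbol{\phi}^{*}$ leaves a strictly positive surplus $\epsilon:=\frac{\theta_1^2 h^2(t_1^{*})}{2\delta}-(\beta-1)r_1^{*}>0$ for the type-$1$ client. I would construct a feasible perturbation that keeps every joining time $t_k^{*}$ fixed -- hence every bonus and every induced effort $e_k^{*}=\theta_k h(t_k^{*})/\delta$ of \eqref{eq:optimal_effort_2} -- and modifies only the salary schedule, shifting all $r_k^{*}$ by a common constant of magnitude $\epsilon/(\beta-1)$. The crucial observation is that the IC inequalities \eqref{eq:IC1} and \eqref{eq:IC2} involve the salaries only through their pairwise differences $r_k-r_{k'}$, so a uniform shift preserves every IC constraint; it drives the type-$1$ utility exactly to $0$ while, by Lemma~\ref{lem:client_utility}, keeping every higher type strictly positive; and it affects the cloud's objective \eqref{eq:cloud_utility_2} only through the additive $-\sum_k r_k$ term. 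One then checks that the perturbed contract still obeys the budget constraint \eqref{eq:bf} and strictly increases $\mathcal{U}_c$, which contradicts optimality and forces $\epsilon=0$.

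The step I expect to be the main obstacle is the feasibility-and-improvement bookkeeping for this perturbation: one must argue at the same time that the uniform salary shift neither pushes $\sum_k R_k$ past the budget $P$ in \eqref{eq:bf} nor moves $-\sum_k r_k$ in a direction that hurts the cloud, and that resolving the type-$1$ rent through the salaries alone -- without disturbing the joining times $t_k$, which jointly control the performance gain $\lambda g(\cdot)$ and the bonus cost $\theta_k^2 h^2(t_k)/\delta$ -- is genuinely sufficient. By contrast, the IC preservation (via the ``differences-only'' structure) and the IR reduction (via Lemma~\ref{lem:client_utility}) are routine.
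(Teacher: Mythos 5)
Your first step --- reducing the $K$ IR constraints to the single type-$1$ constraint via Lemma~\ref{lem:client_utility} and the chain $\mathcal{U}_n(e_k,t_k,r_k)\ge\mathcal{U}_n(e_1,t_1,r_1)\ge 0$ --- is exactly the paper's proof, and in fact it is \emph{all} the paper proves: the published argument establishes only that the type-$1$ IR constraint implies the others, and the binding itself is asserted (by appeal to the same rent-extraction intuition used for Theorem~\ref{theorem_1}) rather than derived from the optimization. So you are attempting to prove strictly more than the paper does, which is laudable, but the tightness half of your argument has a genuine gap.

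The problem is the direction of the perturbation. In this model the reduced client utility \eqref{eq:substituted_client_utility} is $\mathcal{U}_n(t_k,r_k)=\tfrac{\theta_k^2h^2(t_k)}{2\delta}-(\beta-1)r_k$ with $\beta>1$, so it is \emph{decreasing} in the salary $r_k$; the IR constraint is therefore an \emph{upper} bound $r_k\le\tfrac{\theta_k^2h^2(t_k)}{2\delta(\beta-1)}$. To drive the type-$1$ surplus $\epsilon>0$ down to zero your uniform shift of magnitude $\epsilon/(\beta-1)$ must \emph{raise} every salary (since $\Delta\mathcal{U}_1=-(\beta-1)\Delta r_1$). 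But the cloud's objective \eqref{eq:cloud_utility_2} carries the term $-\sum_k r_k$, so raising the salaries \emph{lowers} $\mathcal{U}_c$ by $K\epsilon/(\beta-1)$, and it also increases $\sum_k R_k$, tightening rather than relaxing the budget constraint \eqref{eq:bf}. The perturbed contract is strictly worse for the cloud, so no contradiction with optimality arises and $\epsilon=0$ is not forced. The usual ``no rent for the worst type'' logic fails here precisely because the principal and the agent both dislike higher salaries: within the formal program \eqref{eq:optimal_contract_design_2} the cloud would push $r_k$ toward its lower limit, leaving the type-$1$ IR constraint slack. Any honest proof of the binding must therefore invoke something outside that program (e.g., the participation-attraction argument sketched in the proof of Theorem~\ref{theorem_1}); your IC-preservation observation (salaries enter \eqref{eq:IC1}--\eqref{eq:IC2} only through differences) is correct but cannot rescue the improvement step.
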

\begin{proof}
According to Lemma~\ref{lem:client_utility}, we have 
\begin{align}
\label{eq:satisfy_ir_constraints}
\frac{{\theta^2_k} h^2(t_k)}{2\delta} -  (\beta - 1)r_k  & \ge \frac{{\theta^2_{1}} h^2(t_{1})}{2\delta}  -  (\beta - 1)r_{1} \ge 0.
\end{align}
\noindent As such, if the IR constraint of type-$1$ client is binding, that of all other types holds. This completes the proof.
\end{proof}
The IR constraints in \eqref{eq:ir} can be replaced by Lemma \ref{lem:reduce_ir_constraints}, which indicates that this type-$1$ client gains or loses nothing in participating in training, while other clients' utilities are higher than that of the binding one.
Next, we refer to \cite{zhang2015contract} to introduce the following definitions for simplifying IC constraints:
\begin{definition}[Conditions for IC constraints]
1) {Downward incentive constraints (DICs)} are IC constraints between client types $k$ and ${k'}$, \text{where } $1 \le {k'} \le k - 1$, 2) {local downward incentive constraints (LDIC)} is an IC constraint between client types $k$ and ${k'}, \text{where } {k'} = k - 1$,
3) {upward incentive constraints (UICs)} are IC constraints between client types $k$ and ${k'}, \text{where } {k}+1 \le k' \le {K}$, and    
4) {local upward incentive constraints (LUIC)} is an IC constraint between client types $k$ and ${k'}, \text{where }  k'  = {k} + 1$.
\end{definition}
 

\begin{lemma}[Simplify the IC constraints]
\label{def:further_reduce_ic_constraints}
In the optimal contract, we can replace the IC constraints in \eqref{eq:ic} by
\begin{align}
\label{eq:reduced_ldic}
   \frac{{\theta^2_k} h^2(t_k)}{2\delta} -  (\beta - 1)r_{k} = \frac{{\theta^2_k} h^2(t_{k-1})}{2\delta} -  (\beta - 1)r_{k-1}.
\end{align}
\end{lemma}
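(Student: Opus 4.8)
The plan is to prove Lemma~\ref{def:further_reduce_ic_constraints} in the standard contract-theory way: first reduce the full set of IC constraints to the local downward constraints (LDICs) plus monotonicity, and then argue that in the optimal contract each LDIC must bind. I will carry this out in three stages, mirroring the structure already set up in the excerpt (Lemmas~\ref{lem:monotonicity}--\ref{lem:reduce_ir_constraints} and the DIC/LDIC/UIC/LUIC definitions).

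\emph{Stage 1: LDICs imply all DICs.} Assume the LDICs hold with the monotonicity properties of Lemma~\ref{lem:monotonicity} (in particular $h(t_k)\ge h(t_{k-1})$ and $\theta_k>\theta_{k-1}$). I would show by induction on the ``distance'' $k-k'$ that the general downward constraint $\mathcal{U}_n(e_k,t_k,r_k)\ge\mathcal{U}_n(e_{k'},t_{k'},r_{k'})$ follows. The inductive step chains the LDIC between $k$ and $k-1$ with the (already established) DIC between $k-1$ and $k'$, and then uses that $\theta_k>\theta_{k-1}$ together with $h^2(t_{k-1})\ge h^2(t_{k'})$ to replace $\theta_{k-1}^2$ by $\theta_k^2$ in the appropriate term, i.e. the ``single-crossing''/sorting inequality $\frac{\theta_k^2-\theta_{k-1}^2}{2\delta}\bigl(h^2(t_{k-1})-h^2(t_{k'})\bigr)\ge 0$. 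This is exactly the type of manipulation already used in the proof of Lemma~\ref{lem:monotonicity}, so it should go through cleanly.

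\emph{Stage 2: LUICs (hence all UICs) are implied as well.} By a symmetric argument, starting from the LDICs and monotonicity one shows the upward constraints are slack (or at least satisfied), so that the binding constraints are exactly the downward-adjacent ones. Combined with Stage~1 this means the only IC constraints that can be active are the LDICs, and it suffices to retain them.

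\emph{Stage 3: each retained LDIC binds at the optimum.} This is the crux. Suppose for contradiction that for some $k$ the LDIC between $k$ and $k-1$ is strict, i.e. $\frac{\theta_k^2 h^2(t_k)}{2\delta}-(\beta-1)r_k > \frac{\theta_k^2 h^2(t_{k-1})}{2\delta}-(\beta-1)r_{k-1}$. Then the cloud can raise $r_k$ by a small $\varepsilon>0$ — wait, that decreases cloud utility; rather, the cloud can \emph{lower} $r_k$ by a small $\varepsilon>0$. I would check that this perturbation (i) strictly increases the cloud's objective $\mathcal{U}_c$ in \eqref{eq:cloud_utility_2}, since $-r_k$ appears with a negative sign and $r_k$ does not enter the $g(\cdot)$ term or $e_k^*$; (ii) keeps the downward constraints from types above $k$ satisfied, because lowering $r_k$ only makes $\mathcal{U}_n(e_k,t_k,r_k)$ larger, hence harder to mimic from above — one must verify using Stage~1 that the $k\!+\!1\to k$ LDIC and thence all higher DICs still hold after re-optimizing; (iii) preserves IR via Lemma~\ref{lem:reduce_ir_constraints}, i.e. type-$k$'s utility stays nonnegative because it was strictly positive for $k\ge 2$ by Lemma~\ref{lem:client_utility} and we can take $\varepsilon$ small; and (iv) relaxes the budget constraint \eqref{eq:bf}. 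Since a feasible improving perturbation exists, the original contract was not optimal — contradiction. Therefore every LDIC binds, giving \eqref{eq:reduced_ldic}.

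\emph{Main obstacle.} The delicate point is Stage~3 item (ii): after lowering $r_k$ one must ensure \emph{all} IC constraints remain satisfiable, not just the adjacent one, and in particular that lowering $r_k$ does not force a violation of the LDIC between $k+1$ and $k$ (which involves $r_k$ on the right-hand side, now smaller, making it \emph{easier} — good) while simultaneously not breaking the LDIC between $k$ and $k-1$ (which we are making slack-to-binding — fine) — the subtlety is that a single-variable perturbation may have to be accompanied by compensating adjustments of $r_{k-1},\dots,r_1$ to keep those binding, and one must confirm this chain of adjustments is budget-feasible and IR-feasible. I expect this to require a careful ``cascade'' argument (adjust $r_k$ down, then propagate to re-bind the LDICs below, noting each step only frees budget), analogous to the standard rent-extraction argument in Bolton--Dewatripont; it is routine in spirit but needs the monotonicity of Lemma~\ref{lem:monotonicity} at every step. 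The other steps are essentially the textbook reduction and should be straightforward given the lemmas already proved.
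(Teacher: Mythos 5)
Your Stages~1 and~2 reproduce the paper's argument almost exactly: the paper likewise uses the LDIC between types $k$ and $k-1$ together with $h^2(t_k)\ge h^2(t_{k-1})$ and $\theta_{k+1}>\theta_k$ to chain adjacent constraints into all DICs, and then asserts the symmetric reduction for LUICs/UICs. Be aware, though, that the paper's own proof stops at the LDIC \emph{inequality} and never formally establishes the \emph{equality} in \eqref{eq:reduced_ldic}; your Stage~3 is therefore an attempt to close a step the paper leaves implicit, and it is exactly there that your argument breaks.

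The failure is a sign issue specific to this model. By \eqref{eq:substituted_client_utility} the client's utility is $\frac{\theta_k^2h^2(t_k)}{2\delta}-(\beta-1)r_k$ with $\beta>1$: the salary enters the reward as $+r_k$ but the cost as $\beta r_k$, netting to $-(\beta-1)r_k$, so the client's utility is \emph{decreasing} in $r_k$. Consequently, lowering $r_k$ by $\varepsilon$ \emph{raises} the left-hand side of the $k\to k-1$ LDIC while leaving its right-hand side unchanged, so the constraint you are trying to drive to equality only becomes \emph{more} slack; your perturbation can never force it to bind. Moreover, your parenthetical claim that the $k+1\to k$ LDIC ``involves $r_k$ on the right-hand side, now smaller, making it easier'' is backwards: that right-hand side is $\frac{\theta_{k+1}^2h^2(t_k)}{2\delta}-(\beta-1)r_k$, which \emph{increases} when $r_k$ decreases, so the constraint becomes harder, not easier, to satisfy. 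Written as an upper bound, the $k\to k-1$ LDIC reads $(\beta-1)(r_k-r_{k-1})\le\frac{\theta_k^2}{2\delta}\bigl[h^2(t_k)-h^2(t_{k-1})\bigr]$; to make it bind while improving the cloud's objective you must perturb $r_{k-1}$ \emph{downward} (this improves $\mathcal{U}_c$, relaxes type-$(k-1)$'s IR and its own downward constraint, and tightens only the constraints of types $\ge k$ evaluated at bundle $k-1$), and then cascade down the type ladder. Your instinct that a cascade is needed is right, but the perturbed variable and its direction must be corrected before the argument goes through.
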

\begin{proof} 
We consider three client types where ${\theta}_{k-1} < {\theta}_{k} < {\theta}_{k+1}, \forall k \in \mathcal{K}$. Under IC constraints, we have two LDICs inequalities as follows
\begin{align} 
& \frac{{\theta^2_{k+1}} h^2(t_{k+1})}{2\delta} -  (\beta - 1)r_{k+1} \ge \frac{{\theta^2_{k+1}} h^2(t_k)}{2\delta} -  (\beta - 1)r_k,  \label{eq:ldic_proof1} \\
& \frac{{\theta^2_k} h^2(t_k)}{2\delta} -  (\beta - 1)r_k \ge \frac{{\theta^2_k} h^2(t_{k-1})}{2\delta} -  (\beta - 1)r_{k-1}. \label{eq:ldic_proof2} 
\end{align}
By virtue of monotonicity in Lemma~\ref{lem:monotonicity}, the condition of $\theta$, and \eqref{eq:ldic_proof2}, we have
\begin{align}
 \frac{{\theta}_{k+1}^2}{2\delta}\left[h^2(t_{k}) - h^2(t_{k-1})\right] & \ge  \frac{{\theta}_{k}^2}{2\delta} \left[h^2(t_{k}) - h^2(t_{k-1})\right] \notag\\ & \ge (\beta - 1)(r_{k} - r_{k-1}).
\end{align}
\noindent As a result, we convert \eqref{eq:ldic_proof1} as
\begin{align}
  \frac{{\theta^2_{k+1}} h^2(t_{k+1})}{2\delta} -  (\beta - 1)r_{k+1} & \ge \frac{{\theta^2_{k+1}} h^2(t_k)}{2\delta} -  (\beta - 1)r_k \nonumber \\ & \ge \frac{{\theta}_{k+1}^2 h^2(t_{k-1})}{2\delta} -  (\beta - 1)r_{k-1}.
\end{align}
Therefore, we conclude that LDIC has transitivity, i.e., if there is an LDIC between type-$(k-1)$ and type-$k$ clients, then we can further extend incentive constraints from type-$(k-1)$ client to type-$1$ client, that is, all DICs hold, i.e., 
\begin{align}
  \frac{{\theta^2_{k+1}} h^2(t_{k+1})}{2\delta} -  (\beta - 1)r_{k+1} & \ge \frac{{\theta}_{k+1}^2 h^2(t_{k})}{2\delta} -  (\beta - 1)r_{k}\nonumber \\
  & \ge \frac{{\theta}_{k+1}^2 h^2(t_{k-1})}{2\delta} -  (\beta - 1)r_{k-1} \nonumber \\  
  & \ge ... \nonumber \\ 
  & \ge \frac{{\theta}_{k+1}^2 h^2(t_{1})}{2\delta} -  (\beta - 1)r_{1}.
\end{align}
Thus, we have proved that with the LDIC, all DICs hold. Similarly, with the LUIC, all UICs are satisfied. We can write a generalization of DICs and UICs, respectively, as follows
\begin{align}
  \frac{{\theta^2_k} h^2(t_k)}{2\delta} -  (\beta - 1)r_{k}  & \ge \frac{{\theta}_{k}^2 h^2(t_{k{'}})}{2\delta}  -  (\beta - 1)r_{k{'}}, \notag \\ & 1 \le k{'} < k \le K, \notag \\
    \frac{{\theta^2_k} h^2(t_k)}{2\delta} -  (\beta - 1)r_{k}  & \ge \frac{{\theta}_{k}^2 h^2(t_{k{'}})}{2\delta}  -  (\beta - 1)r_{k{'}}, \notag \\ &1 \le k < k{'} \le K.
\end{align}
Therefore, the IC constraints are reduced to 
\begin{align}
  \frac{{\theta^2_k} h^2(t_k)}{2\delta} -  (\beta - 1)r_{k}  & \ge \frac{{\theta}_{k}^2 h^2(t_{k-1})}{2\delta}  -  (\beta - 1)r_{k-1},
\end{align}
which replace IC constraints in \eqref{eq:ic}. This completes the proof.
\end{proof}
\setlength{\textfloatsep}{0pt}
\begin{algorithm}[t!]
\footnotesize	
\caption{Game and Training Process of  {\fontfamily{qcr}\selectfont R3T}}
\label{alg:r3t}
  \KwInput{A set of clients $\mathcal{N}$, number of training rounds $T$, learning rate $\eta$, local loss function $F$, initial global parameter {$\mathbf{w}^{(0)}$}, bonus unit coefficient $\vartheta$.}
  \KwOutput{Final global model parameter {$\mathbf{w}^{(T)}$.}}
  \KwData{Training, validation, and test set $\mathcal{D}$.}
    
\For{$t$ = 1, 2,..., T}  {
    $\textcolor{gray}{\rhd{\textit{ Cloud performs:}}}$


    Obtain an optimal contract set $\boldsymbol{\phi^*}$ via \eqref{eq:optimal_contract_complete_info_3_main} or \eqref{eq:optimal_contract_design_4_main}. \label{alg:line:optimalvalue}
    
    Send the set $\boldsymbol{\phi^*} = \{\boldsymbol{e^*}, \boldsymbol{t^*}, \boldsymbol{r^*}\} = \{\phi_k^*\}_{k \in \mathcal{K}} $ to $\mathcal{N}$. 
    
    Select a client subset $\mathcal{S}^{(t)}$ from $\mathcal{N}$.
    
    $\textcolor{gray}{\rhd{\textit{ Clients perform:}}}$
    
    \For{each client $n \in \mathcal{S}^{(t)}$ in parallel} {
    Sign contract items $\{\phi_k^*\}_{k \in \mathcal{K}}$.
    
        \If{$t^*_k  = t$} { 
        Download  {$\mathbf{w}^{(t-1)}$}.
        
        Perform local training with $e^*_k$ using $\mathcal{D}_k$.
    
    
        ${\mathbf{w}^{(t)}_{k}} \gets {\mathbf{w}^{(t-1)}_{k}} - \eta \nabla {F}_k ({\mathbf{w}^{(t-1)}_{k}}, \mathcal{D}_k)$
        
        } 
    }
    $\textcolor{gray}{\rhd{\textit{ Blockchain performs:}}}$
    
    Cross-validate and record {{on-chain}} \{$ID_k$, $hash({\mathbf{w}^{(t)}_{k})}$, $\phi_k^*$\}.
    
    $\textcolor{gray}{\rhd{\textit{ Cloud performs: with FGN proxy in Eqs.~\eqref{eq:fgn}--\eqref{eq:fgn_condition}}}}$ 
    
    \If{$t \in {\clps}$}{ \label{alg:identified_clps}
    
        $|\mathcal{S}^{(t+1)}| \leftarrow \text{min}\{2|\mathcal{S}^{(t)}|, N\}$ \label{alg:line:clp}

        $h(t^*_k) = h(t) = 1 + \frac{\vartheta}{\ln(2t)}$
    }
    \Else{
        $|\mathcal{S}^{(t+1)}| \gets \text{max} \{\frac{1}{2}|\mathcal{S}^{(t)}|, \frac{1}{2} N \}$ \label{alg:line:nonclp}
        
    	$h(t^*_k) = 1$
    }
        
    
    ${\mathbf{w}^{(t)}} \gets \sum_{n \in \mathcal{S}^{(t)}}{\frac{e^*_{n,k}}{\sum_{j \in \mathcal{S}^{(t)}} e^*_{j, k}}} {\mathbf{w}^{(t)}_{n,k}}$
    

$\textcolor{gray}{\rhd{\textit{ Blockchain performs:}}}$

Record \{$h(t^*_k), {\mathbf{w}^{(t)}}$\}.
}
$\textcolor{gray}{\rhd{\textit{ Blockchain performs:}}}$

Settle the accumulated rewards to type-$k$ clients according to Eq.~\eqref{eq:define_reward}.

\textbf{Return} {$\mathbf{w}^{(T)}$}.
\end{algorithm}


\begin{theorem}
\label{theorem_2}
For any type-$k$ client, the contract salary satisfies
$$r_k = \sum_{i=1}^{k} \frac{\theta^2_{i}[h^2(t_{i}) - h^2(t_{i-1})]}{2\delta(\beta -1)}.$$
\end{theorem}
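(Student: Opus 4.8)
The plan is to derive the closed form by a short induction on the client type $k$, bootstrapping from the two simplifications already established: the binding type-$1$ IR constraint (Lemma~\ref{lem:reduce_ir_constraints}) as the base case, and the reduced local downward IC constraint (Lemma~\ref{def:further_reduce_ic_constraints}) as the one-step recursion. For the base case, Lemma~\ref{lem:reduce_ir_constraints} gives $\tfrac{\theta_1^2 h^2(t_1)}{2\delta} - (\beta-1)r_1 = 0$, and since $\beta>1$ this is $r_1 = \tfrac{\theta_1^2 h^2(t_1)}{2\delta(\beta-1)}$; adopting the natural convention $h(t_0)\equiv 0$ (a ``type-$0$'' null contract with no participation and no effort), this is exactly the $k=1$ instance of the claimed formula, the single summand $\tfrac{\theta_1^2[h^2(t_1)-h^2(t_0)]}{2\delta(\beta-1)}$ collapsing to $r_1$.

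For the inductive step, suppose the formula holds at $k-1$. Lemma~\ref{def:further_reduce_ic_constraints} states that in the optimal contract the type-$k$ local downward constraint is tight, i.e. $\tfrac{\theta_k^2 h^2(t_k)}{2\delta} - (\beta-1)r_k = \tfrac{\theta_k^2 h^2(t_{k-1})}{2\delta} - (\beta-1)r_{k-1}$. Rearranging and dividing by $\beta-1>0$ yields the recursion $r_k = r_{k-1} + \tfrac{\theta_k^2[h^2(t_k)-h^2(t_{k-1})]}{2\delta(\beta-1)}$. Substituting the inductive hypothesis for $r_{k-1}$ and appending the new term to the sum gives $r_k = \sum_{i=1}^{k} \tfrac{\theta_i^2[h^2(t_i)-h^2(t_{i-1})]}{2\delta(\beta-1)}$, which closes the induction; equivalently one can telescope $r_k = r_1 + \sum_{i=2}^k (r_i - r_{i-1})$ in a single line.

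The algebra here is routine, so the real content is the provenance of the two ingredients rather than the manipulation. The delicate point is that Lemma~\ref{def:further_reduce_ic_constraints} delivers the downward constraint as an \emph{equality}: this is where cost minimization by the cloud enters — each $r_k$ is shaved down to the smallest value still consistent with incentive compatibility — and without that tightening the same steps would yield only the bound $r_k \ge \sum_{i=1}^k(\cdot)$. Hence the genuine obstacle sits upstream (establishing that the optimal contract binds the local downward constraints, which Lemma~\ref{def:further_reduce_ic_constraints} together with the monotonicity of Lemma~\ref{lem:monotonicity} and the utility ordering of Lemma~\ref{lem:client_utility} supplies), while the present proof merely has to chain those equalities and handle the $k=1$ / $h(t_0)=0$ boundary term carefully; the standing assumptions $\delta>0$ and $\beta>1$ guarantee every division is legitimate.
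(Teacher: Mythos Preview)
Your proof is correct and follows essentially the same approach as the paper: use the binding type-$1$ IR constraint from Lemma~\ref{lem:reduce_ir_constraints} for the base case, then unroll the binding LDIC equality from Lemma~\ref{def:further_reduce_ic_constraints} as a recursion, with the convention $h(t_0)=0$. Your explicit induction and your remark that the \emph{equality} (not inequality) in Lemma~\ref{def:further_reduce_ic_constraints} is where the cloud's cost minimization enters are helpful clarifications, but the underlying argument is identical to the paper's.
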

\begin{proof} 
\label{proof_theorem_1}
Based on Lemma~\ref{lem:reduce_ir_constraints}, the salary of type-$1$ client can be calculated as follows
\begin{align}
\label{solve_ir}
    r_1 = \frac{\theta^2_1 h^2(t_1)}{2\delta(\beta -1)}.
\end{align}
Then, according to Lemma~\ref{def:further_reduce_ic_constraints} and \eqref{solve_ir}, we can further derive 
\begin{align}
    r_2 = r_1 + \frac{\theta^2_2 [h^2(t_2)-h^2(t_1)]}{2\delta(\beta -1)}.
\end{align} 
By iterating these steps, we can obtain
\begin{align}
\label{eq:optimal_r}
    r_k &= r_{k-1} + \frac{\theta^2_{k} [h^2(t_{k})-h^2(t_{k-1})]}{2\delta(\beta -1)} \nonumber \\
    &= \frac{\theta^2_{k} [h^2(t_{k})-h^2(t_{k-1})]}{2\delta(\beta -1)} + ... + \frac{\theta^2_{1}h^2(t_{1})}{2\delta(\beta -1)} \nonumber \\
    &= \sum_{i=1}^{k} \frac{\theta^2_{i}[h^2(t_{i}) - h^2(t_{i-1})]}{2\delta(\beta -1)}.
\end{align}
\label{explain_t0} At $i=1$, we have $h(t_0) = 0$ because $t_0 = 0$ {denotes the pre-training initialization phase}. This completes the proof.
\end{proof}
By using the simplified constraints above and substituting $e_k$ and $r_k$ with $h(t_k)$, the final form of the optimization problem is as follows:
%
\begin{subequations}
\label{eq:optimal_contract_design_4_main}
\begin{align}
\label{eq:optimal_contract_design_4}
\max_{\boldsymbol{t}} \quad & \sum_{k=1}^{K}\Big[\lambda g(\frac{\theta_k h^2(t_k)}{\delta}) - \sum_{i=1}^{k}\frac{\theta^2_{i}[h^2(t_{i}) - h^2(t_{i-1})]}{2\delta(\beta -1)} \nonumber \\
& {\quad \quad  - \text{ } \frac{\theta^2_k h^2(t_k)}{\delta}\Big]}, \\
\textrm{s.t.} \quad & \sum_{k=1}^{K} \left[\sum_{i=1}^{k}\frac{\theta^2_{i}[h^2(t_{i}) - h^2(t_{i-1})]}{2\delta(\beta -1)}+ \frac{\theta_k^2 h^2(t_k)}{\delta}\right] \le P, \\
& \eqref{eq:monotonicity_1}, { \forall t_k \in \mathcal{T}}, { \forall k \in \mathcal{K}}.
\end{align}
\end{subequations}

\label{para:complexity2}{{Using an exhaustive search algorithm, we determine the optimal joining-time vector $\boldsymbol{t}$, from which $h(t_k)$, $r_k$, $R_k$, and $e_k$ are then obtained analytically from Eq.~\eqref{eq:time_aware_bonus}, Theorem~\ref{theorem_2}, Eq.~\eqref{eq:define_reward}, and Eq.~\eqref{eq:optimal_effort_2}, respectively. Thus, the exhaustive search is conducted only over the discrete time-assignment space. In the raw worst case, this search space contains $T^K$ candidate vectors. For each candidate, the salary computation and feasibility verification must be performed jointly across contract items, since the recursive expression of $r_k$ depends on lower-type items and the IC, monotonicity, and budget-feasibility conditions couple multiple contract items. Under a direct implementation, this yields a per-candidate cost of $\mathcal{O}(K^2)$ and hence an overall worst-case complexity of $\mathcal{O}(K^2T^K)$. We further note that a moderate value of $K$ is usually sufficient to capture the heterogeneity in client training capabilities while preserving computational tractability. In addition, an excessively large $K$ is often undesirable because it increases optimization complexity, aggravates reward-budget pressure on the cloud under the IR, IC, and BF constraints, and may create risks of private information leakage at an unnecessarily fine granularity, which will be discussed more in Section~\ref{subsec:feasibility}.}}

Based on the analyses and proofs above, we now analyze monotonicity conditions in the {\fontfamily{qcr}\selectfont non-CLPs}. 
\begin{lemma}[Monotonicity in {\fontfamily{qcr}\selectfont non-CLPs}]
\label{lem:monotonicity-nonclp}
In the {\fontfamily{qcr}\selectfont non-CLPs}, {\proposed} still ensures incentive fairness by enabling a higher-type client to receive a higher reward. Thus, to satisfy the IC constraints in \eqref{eq:ic}, if $\theta_{k} > \theta_{k'}, \forall k,k' \in \mathcal{K}$, then the reward, salary, and effort must adhere to the following inequalities:
\begin{align} 
\label{eq:monotonicity_nonclp}
R_k > R_{k'},
 r_k = r_{k'} ,
  e_{k} > e_{k'}.
\end{align}
\end{lemma}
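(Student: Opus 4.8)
\textbf{Proof proposal for Lemma~\ref{lem:monotonicity-nonclp}.}
The plan is to mirror the argument of Lemma~\ref{lem:monotonicity}, but exploit the fact that the time‑aware bonus unit function is \emph{flat} on the non‑{\fontfamily{qcr}\selectfont CLPs} regime. Concretely, by \eqref{eq:time_aware_bonus} we have $h(t_k)=1$, hence $h^2(t_k)=1$, for every $t_k\in\text{\fontfamily{qcr}\selectfont non-CLPs}$. Substituting this into the simplified client utility \eqref{eq:substituted_client_utility} gives $\mathcal{U}_n(e_k,t_k,r_k)=\frac{\theta_k^2}{2\delta}-(\beta-1)r_k$ for any type assigned to a non‑{\fontfamily{qcr}\selectfont CLP} slot, so within this regime the joining‑time variable $t_k$ no longer enters the utility and only the salary $r_k$ is free.

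First I would write down the two pairwise IC constraints of \eqref{eq:ic} between types $k$ and $k'$ (both placed in non‑{\fontfamily{qcr}\selectfont CLPs}) with $\theta_k>\theta_{k'}$: the constraint that type‑$k$ does not prefer $k'$'s item, and the constraint that type‑$k'$ does not prefer $k$'s item. Because $h^2(t_k)=h^2(t_{k'})=1$, in each inequality the $\frac{\theta^2}{2\delta}$ terms on the two sides are identical and cancel, leaving $-(\beta-1)r_k\ge-(\beta-1)r_{k'}$ from the first and $-(\beta-1)r_{k'}\ge-(\beta-1)r_{k}$ from the second. Since $\beta>1$, these two inequalities together force $r_k=r_{k'}$, which is the salary claim in \eqref{eq:monotonicity_nonclp}. (As in Lemma~\ref{def:further_reduce_ic_constraints}, it suffices to argue the local constraints and invoke transitivity so that the equality propagates across all non‑{\fontfamily{qcr}\selectfont CLP} types.)

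Next I would recover the effort and reward orderings. From the optimal‑effort expression \eqref{eq:optimal_effort_2}, $e_k^*=\theta_k h(t_k)/\delta=\theta_k/\delta$ in the non‑{\fontfamily{qcr}\selectfont CLPs}, so $\theta_k>\theta_{k'}$ immediately yields $e_k>e_{k'}$. For the reward, substitute $h(t_k)=1$ and $e_k^*$ into \eqref{eq:define_reward} to get $R_k=r_k+\theta_k h(t_k)e_k=r_k+\theta_k^2/\delta$; combining $r_k=r_{k'}$ with $\theta_k^2>\theta_{k'}^2$ gives $R_k>R_{k'}$, completing all three inequalities of \eqref{eq:monotonicity_nonclp}.

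The calculations here are essentially routine; the only point that needs care is the scope of the IC constraints invoked. The subtle step is justifying that it is legitimate to restrict attention to IC constraints \emph{among} the types whose optimal items lie in non‑{\fontfamily{qcr}\selectfont CLPs} (so that $h=1$ on both sides and the cancellation is valid), rather than also mixing in cross‑period deviations toward {\fontfamily{qcr}\selectfont CLP} items; this should follow from the monotonic sorting already established in Definition~\ref{def:condition_for_theta} and Lemma~\ref{lem:monotonicity} (higher‑type clients occupy earlier, {\fontfamily{qcr}\selectfont CLP} slots), so that the non‑{\fontfamily{qcr}\selectfont CLP} types form a contiguous block on which the local IC argument applies verbatim.
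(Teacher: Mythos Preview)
Your proposal is correct, and the effort and reward orderings are handled exactly as the paper does (via \eqref{eq:optimal_effort_2} and \eqref{eq:define_reward} with $h(t_k)=1$). The only genuine difference is how you obtain $r_k=r_{k'}$. The paper does not argue from the two pairwise IC inequalities as you do; instead it simply invokes the closed-form salary formula of Theorem~\ref{theorem_2}, namely $r_k=\sum_{i=1}^{k}\theta_i^2[h^2(t_i)-h^2(t_{i-1})]/(2\delta(\beta-1))$, and observes that when $h(t_i)=h(t_{i-1})=1$ every term with $i\ge 2$ vanishes, so $r_k=r_{k'}=r_1$. Your route is slightly more elementary and self-contained: it shows directly that IC alone forces $r_k=r_{k'}$ in the non-{\fontfamily{qcr}\selectfont CLP} block, without first deriving the optimal-contract salary schedule. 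The paper's route is shorter but logically downstream of Theorem~\ref{theorem_2} (which assumes binding LDIC in the optimal contract); your argument establishes the same conclusion for any IC-feasible menu on non-{\fontfamily{qcr}\selectfont CLP} slots, which arguably matches the lemma's phrasing (``to satisfy the IC constraints'') more tightly. Your closing caveat about restricting to within-block IC deviations is a fair scoping remark; the paper's proof implicitly makes the same restriction and does not discuss cross-period deviations either.
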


\begin{proof}
    According to \eqref{eq:define_reward}, $h(t_k) = 1$ and the impact of efforts in \text{\fontfamily{qcr}\selectfont non-CLPs} is not of importance as {\fontfamily{qcr}\selectfont CLPs}, thereby $ h(t_{k'}) = h(t_k) = 1, \forall{t_k, t_{k'} \notin \text{\fontfamily{qcr}\selectfont CLPs}}$ . This leads to $e_{k} > e_{k'}$ based on \eqref{eq:optimal_effort_2}. Moreover, via \eqref{eq:optimal_r} and $ h(t_{k'}) = h(t_k)$, we can conclude that $r_k = r_{k'} = r_1 = \frac{\theta^2_1 h^2(t_1)}{2\delta(\beta - 1)}$. Next, with the proved conditions (i.e., $h(t_k) = h(t_{k'}) \text{ and } r_k = r_{k'}$), and the expression \eqref{eq:define_reward}, we have $R_k > R_{k'}$. This completes the proof.
\end{proof}

\begin{remark}
\label{remark:nonclp}
    In \text{\fontfamily{qcr}\selectfont non-CLPs}, the impact of the client's contribution is less significant compared to \text{\fontfamily{qcr}\selectfont CLPs}. As Lemma~\ref{lem:monotonicity-nonclp} outlines, although all clients receive a fixed salary regardless of their types and do not receive weighted bonuses (i.e., $h(t_k)=1$) for specific joining times, {\proposed} ensures the fairness of their total rewards relative to their corresponding efforts.
\end{remark}

\begin{remark}
\label{remark:nonclp2}
    {{\proposed} mitigates information asymmetry between clients and the cloud via a self-revelation mechanism. Specifically, by offering a menu of contract items, {\proposed} incentivizes each client to select the contract item aligned with its effective type, while the IC constraint guarantees that such truthful type-matched selection is utility-maximizing. {Accordingly, client-type knowledge in {\proposed} is interpreted as a privacy-preserving estimate, rather than exact per-client prior knowledge. Furthermore, {\proposed} ensures fair reward allocation in both \text{\fontfamily{qcr}\selectfont CLPs} and \text{\fontfamily{qcr}\selectfont non-CLPs}, as established in Lemma~\ref{lem:monotonicity} and Lemma~\ref{lem:monotonicity-nonclp}.}} 
\end{remark}


The game and training processes of {\proposed} over $T$ training rounds are depicted in Algorithm~\ref{alg:r3t}. We have a tuple of $(e_k^*, t_k^*, r_k^*)$ being the optimal strategy of the type-$k$ client (see line~\ref{alg:line:optimalvalue}). {\clps} can be identified by using the $\mathrm{FGN}$ proxy (see line \ref{alg:identified_clps})~\cite{yan2023criticalfl}. {Algorithm~\ref{alg:r3t} presents the full {\proposed} policy, where client participation can be temporally incentivized and redistributed toward {\clps}, since insufficient high-quality contributions during these periods can cause irreversible impairment to the final model performance.} Specifically, {\proposed} increases the selected-client size during {\clps} to attract more higher-type clients through the proposed contract set (see line~\ref{alg:line:clp}), while decreasing the selected-client size during {\nonclps} to reduce communication overhead and budget consumption (see line~\ref{alg:line:nonclp}). Importantly,\textit{ this policy does not increase the total number of selected clients over the entire training process}, since the expanded participation during {\clps} is offset by reduced participation during {\nonclps}. {In the proof-of-concept experiments in Section~\ref{subsec:poc}, we further disable the adaptive client-count policy and keep $|\mathcal{S}^{(t)}|$ fixed for all $t$, so as to isolate the effect of the proposed {\clp}-aware incentive mechanism.}

\section{Experimental Results}
\label{sec:experiment}



\begin{figure*}[ht!]
\centering
\begin{subfigure}[t]{.32\textwidth}
    \centering
    \includegraphics[width=\linewidth]{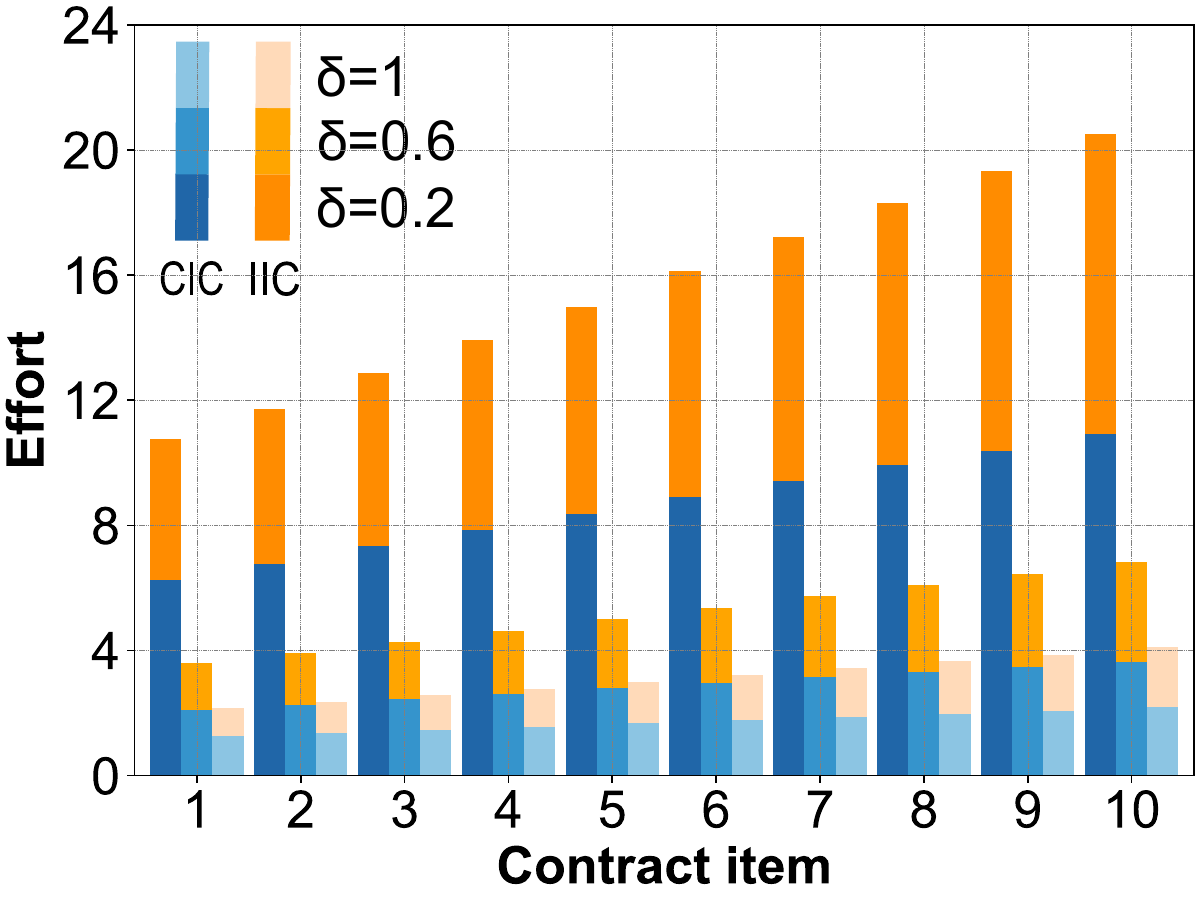}  
    \caption{Client's effort.}
    \label{cf:delta_effort}
\end{subfigure} \hfill
\begin{subfigure}[t]{.32\textwidth}
    \centering
    \includegraphics[width=\linewidth]{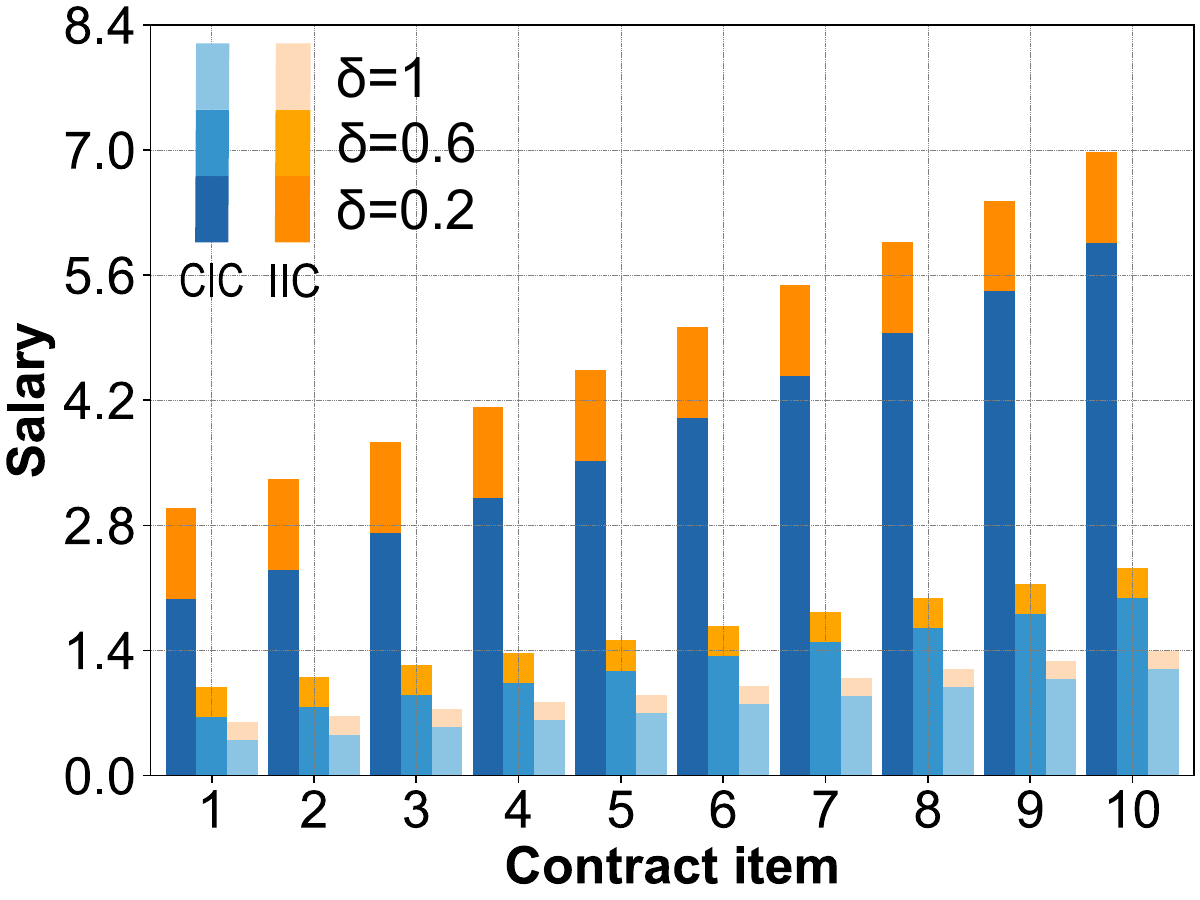}  
    \caption{Client's fixed salary.}
    \label{cf:delta_fixed_salary}
\end{subfigure} \hfill
\begin{subfigure}[t]{.32\textwidth}
    \centering
    \includegraphics[width=\linewidth]{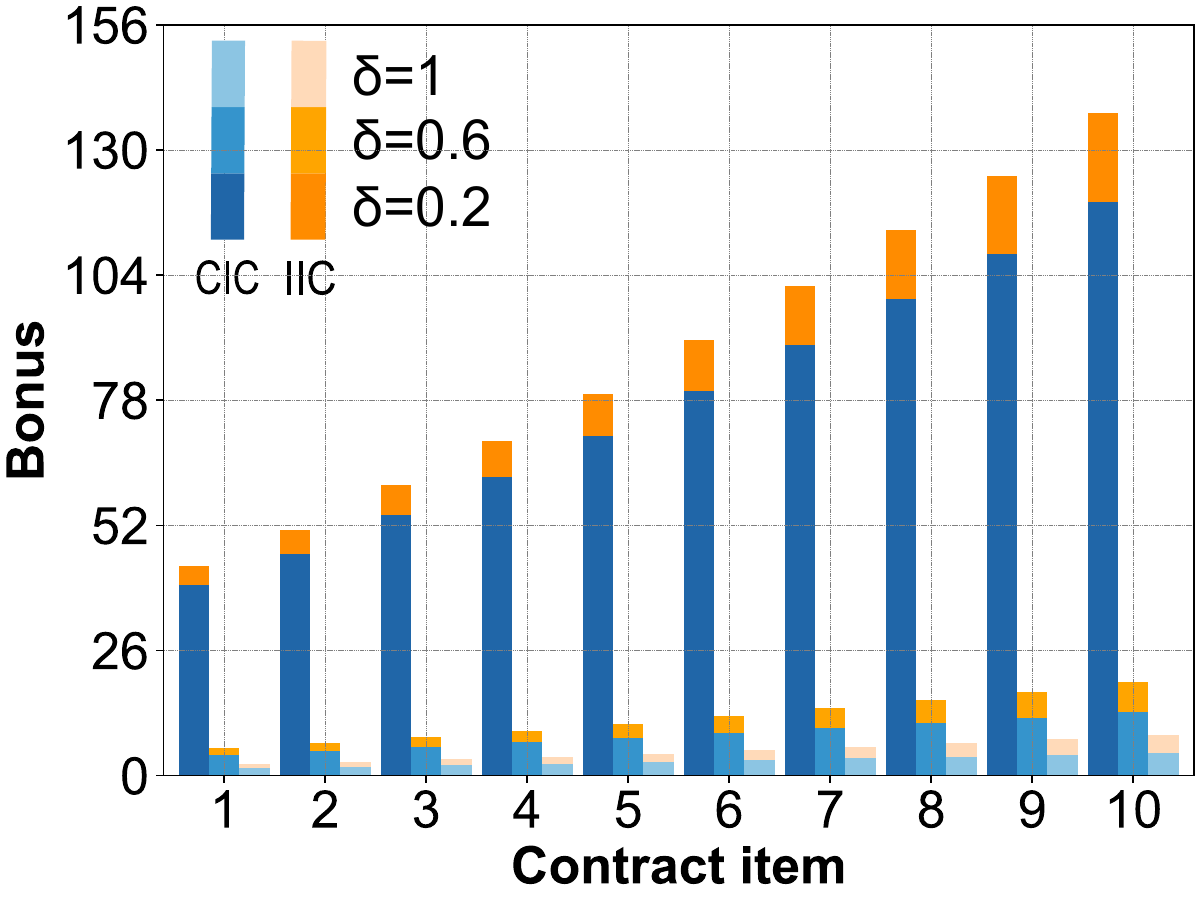}  
    \caption{Client's bonus.}
    \label{cf:delta_client_bonus}
\end{subfigure} \hfill
\begin{subfigure}[t]{.32\textwidth}
    \centering
    \includegraphics[width=\linewidth]{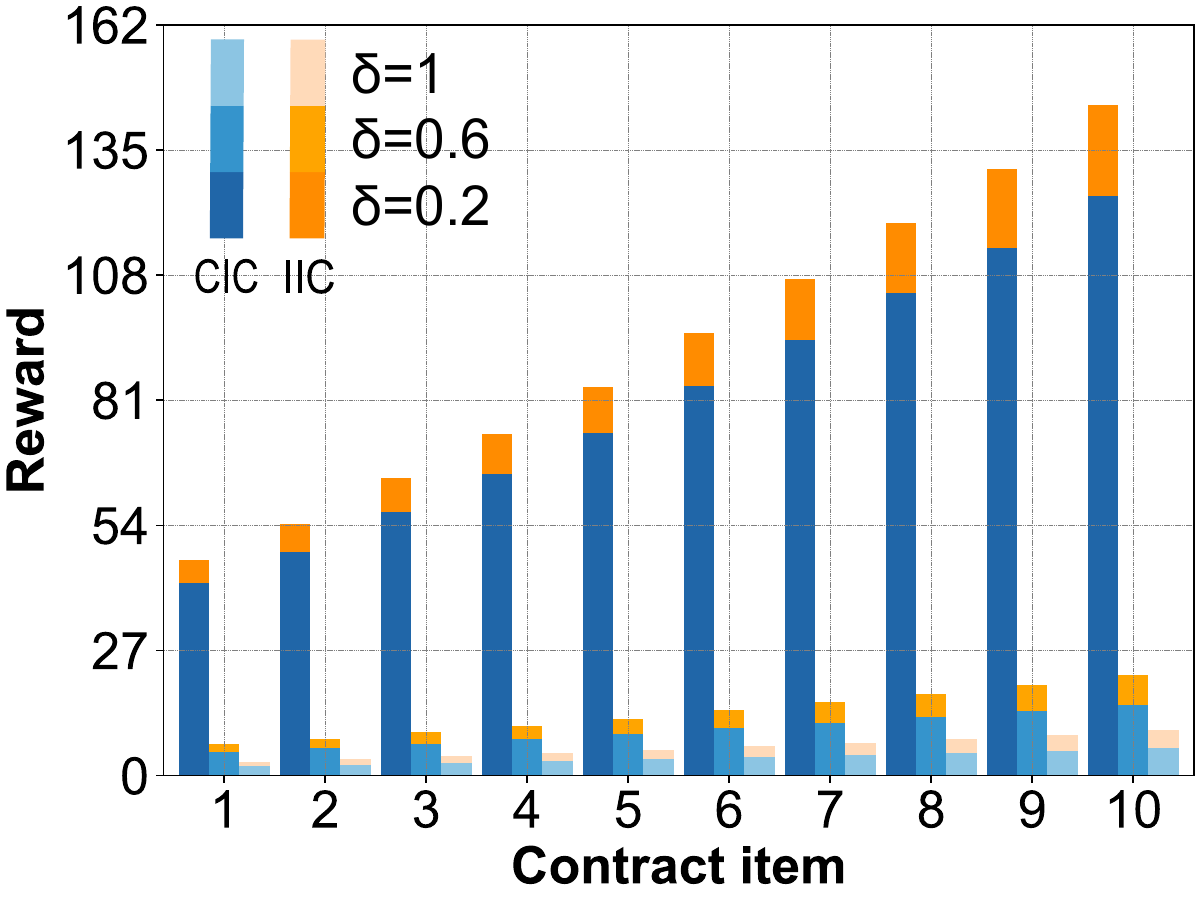}  
    \caption{Client's reward.}
    \label{cf:delta_client_reward}
\end{subfigure} \hfill
\begin{subfigure}[t]{.32\textwidth}
    \centering
    \includegraphics[width=\linewidth]{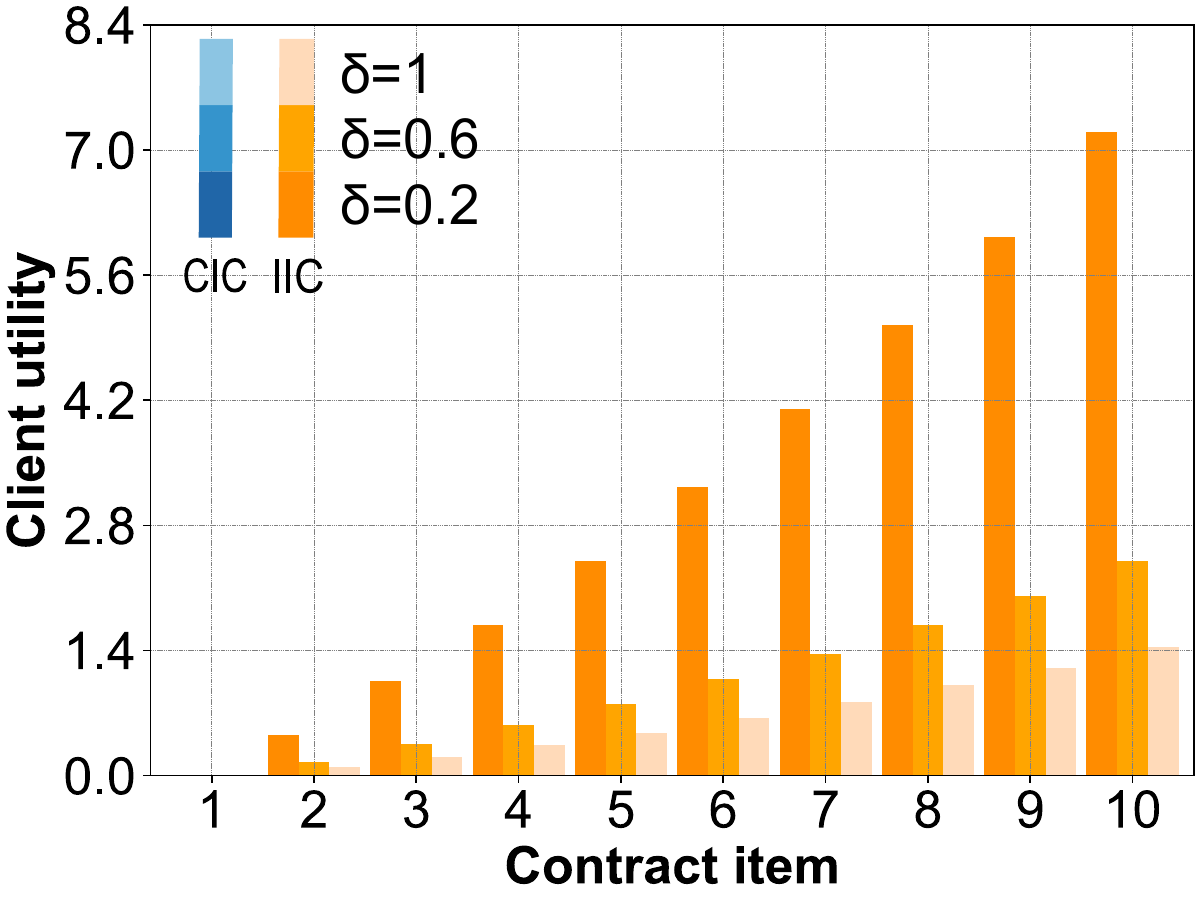}  
    \caption{Client utility.}
    \label{cf:delta_client_utility}
\end{subfigure} \hfill
\begin{subfigure}[t]{.32\textwidth}
    \centering
    \includegraphics[width=\linewidth]{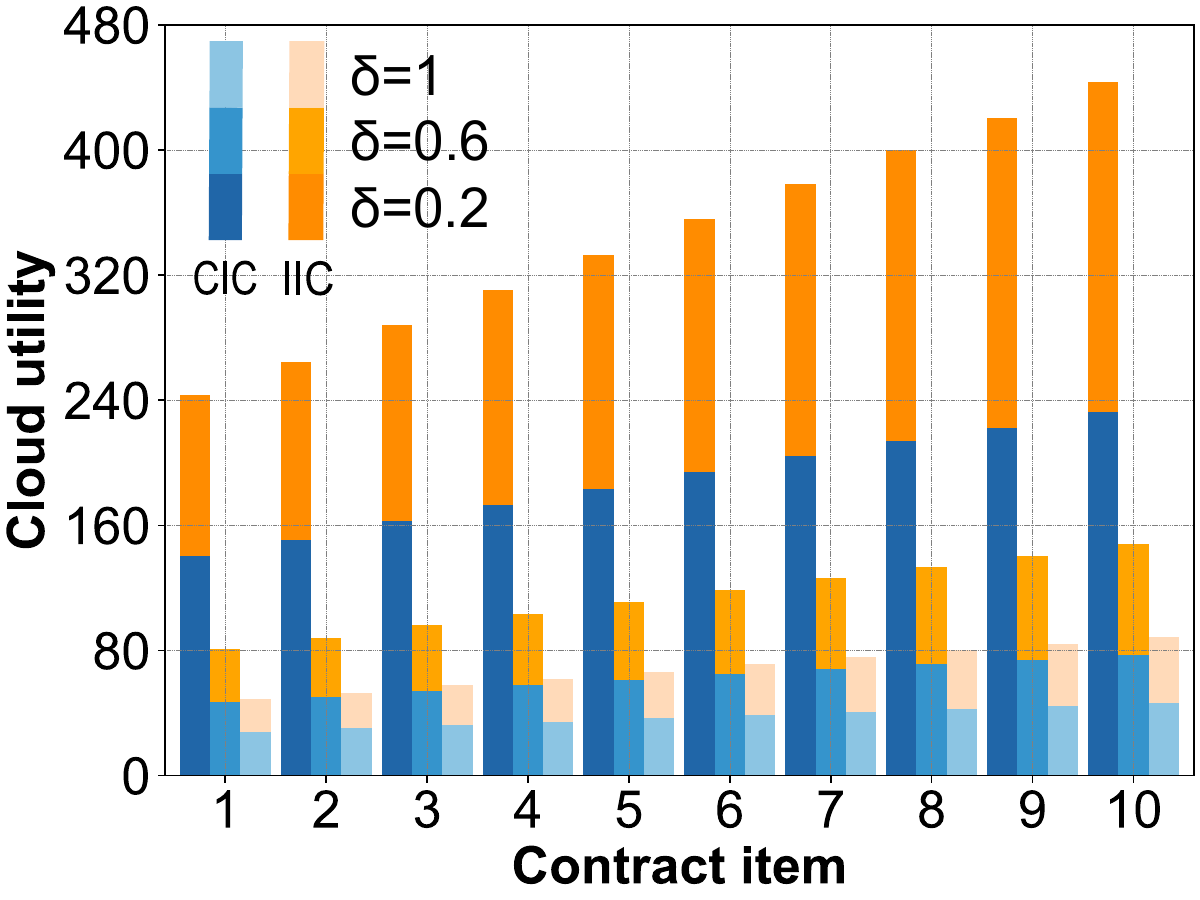}  
    \caption{Cloud utility.}
    \label{cf:delta_cloud_utility}
\end{subfigure} \hfill
\caption{The impact of unit effort cost $\delta$ on the client's effort, fixed salary, bonus, total reward, client utility, and cloud utility.}
\label{fig:impact_unit_effort_cost}
\end{figure*}
In this section, we conduct extensive experiments to analyze the feasibility and efficiency of {\proposed}. Then, we evaluate {\proposed}'s proof of concept on prominent benchmark datasets.

\subsection{Experiment Setup}\label{subsec:exp_setup}
\paragraphalphastyle
\setcounter{paragraph}{0}

\paragraph{Simulation parameter settings}\label{para:simulation_settings} Without loss of generality, our experimental settings refer to the setups in \cite{zhan2020learning, zhao2021incentive}. {We instantiate the performance-gain function as $g(\cdot)=h(t)e$. This affine function is a weakly concave special case of the general concave $g(\cdot)$ used in the system model. We also note that the main contract-theoretic results derived in the preceding sections do not rely on the strict concavity of $g(\cdot)$}. { To validate both contract feasibility and contract efficiency, we consider 25 training rounds (i.e., $T_{\text{sim}} = 25$), with the duration of {\clps} set to 10 rounds. Unless otherwise stated, key parameter settings are summarized in Table~\ref{tab_main:parameters}.}

\paragraph{Proof of concept} We develop a system to implement a proof of concept of {\proposed}, leveraging smart contracts on an Ethereum blockchain using Ganache \cite{ganache}. The Flower framework \cite{beutel2020flower} is integrated with the blockchain to facilitate federated learning experiments for model training and evaluation.  The system is deployed on an Ubuntu 22.04 system equipped with an Intel Core i7-1365U CPU@1.80 GHz and 32GB of memory. To enable interaction between clients, the cloud server, and the smart contracts, we utilize the Web3 API \cite{web3api} and FastAPI frameworks \cite{fastapi}.  Also, we develop two smart contracts for federated training and storage, and reward calculation using the Solidity programming language. 


\begin{table}[t!]
    \caption{Experimental parameters}
    \label{tab_main:parameters}
    \footnotesize
    \centering
    \begin{tabular}{|cc|cc|cc|}
        \hline
        \rowcolor[HTML]{EFEFEF}
        \textbf{Param} & \textbf{Value} 
        & \textbf{Param} & \textbf{Value} 
        & \textbf{Param} & \textbf{Value} \\
        \hline \hline
        $\lambda_{\text{{\fontfamily{qcr}\selectfont CLPs}}}$ & 21 
        & $\lambda_{\text{{\fontfamily{qcr}\selectfont non-CLPs}}}$ & 20 
        & $\theta_k$ & \textit{U}(1, 2) \\
        
        $T_{\text{sim}}$ & 25 
        & $N$ & \{10, 15\} 
        & $K$ & 10 \\
        
        $C$ & 2.4 
        & $\beta$ & 3  
        & $t_{\nonclps}$ & [11, $T$] \\
        
        $\tau_{\clp}^{\text{cifar10}}$
        & \makecell[c]{{\color{blue}0.01}\\[-0.2em]\cite{huang2025pa3fed, yan2023criticalfl}} 
        & $\tau_{\clp}^{\text{fmnist}}$
        & \makecell[c]{0.01\\[-0.2em]\cite{huang2025pa3fed, yan2023criticalfl}} 
        & $T_{\text{poc}}$
        & 81 \\
        \hline
    \end{tabular}
\end{table}

{\paragraph{Model and Datasets}\label{para:model_data} We employ a convolutional neural network comprising three convolutional blocks and one fully connected classification head.  Each convolutional block consists of a Conv2D layer {{with He uniform initialization and ReLU activation}}, followed by batch normalization and, where applicable, max pooling for spatial dimension reduction. {{Dropout regularization is applied after the first block with a rate of 0.3, after the third block with a rate of 0.2, and after the dense layer with a rate of 0.1}}. The classification head consists of a 256-unit dense layer with ReLU activation and a 10-class softmax output layer. The model is optimized using Adam with a learning rate of {{$\eta = 3 \times 10^{-4}$}}, sparse categorical cross-entropy loss, and a local batch size of $B = 32$. Each client performs 2 local training epochs per communication round, over a total of $T_{\text{poc}} =81$ global rounds in proof-of-concept experiments.

For model training and evaluation, we use CIFAR-10 \cite{krizhevsky2009learning} and Fashion-MNIST (FMNIST)~\cite{xiao2017fashion} datasets under non-independent and non-identically distributed (non-IID) settings. To simulate statistical heterogeneity, both the number of samples and the label distribution across clients are unbalanced by sampling $p\sim\textit{Dir}{(\alpha)}$, where $\alpha \in \{0.1, 0.4, 0.7\}$ controls the degree of heterogeneity. Smaller values of $\alpha$ correspond to greater data skewness. {{Furthermore, without loss of generality, the number of data samples assigned to each client is sorted in ascending order with respect to client index, such that clients with higher IDs possess larger and more diverse local datasets.}}}

\paragraph{Benchmarks}\label{para:benchmark} We consider the following benchmarks in {\fontfamily{qcr}\selectfont CLPs} and {\fontfamily{qcr}\selectfont non-CLPs} for comparison with our proposed methods, which are labeled as {\proposed} in Complete Information Case ({\proposed}-CIC) and {\proposed} in Incomplete Information Case ({\proposed}-IIC).

\begin{itemize}
    \item \textbf{Contract theory Without considering Time under CIC (CTWT-CIC)}. Time property is not considered in the existing work (e.g., \cite{zhang2015contract,sun2021pain,zhan2020learning, wang2022infedge}), that is, $t$ is excluded from the contract item. All training time slots are treated equally.
    \item \textbf{Contract theory Without considering Time under IIC (CTWT-IIC)}. Similar to CTWT-CIC, this method does not consider temporal importance. In addition, the cloud server only knows the distribution of clients’ types.
    \item \textbf{Linear Pricing}. Under IIC, the cloud server determines a unit price $C$ for the client's effort $e$ \cite{bolton2004contract}.
    \item \textbf{Conventional Federated Learning ($\mathrm{CFL}$)}. FL methods integrating conventional incentive mechanisms (e.g., \cite{wang2022infedge, sun2021pain, ding2023joint}), which do not consider {\fontfamily{qcr}\selectfont CLPs}, are used to compare with {\proposed} proof of concept.
\end{itemize}

\begin{figure}[t!]
\centering
\begin{subfigure}[t]{.375\textwidth}
    \centering
    \includegraphics[width=\linewidth]{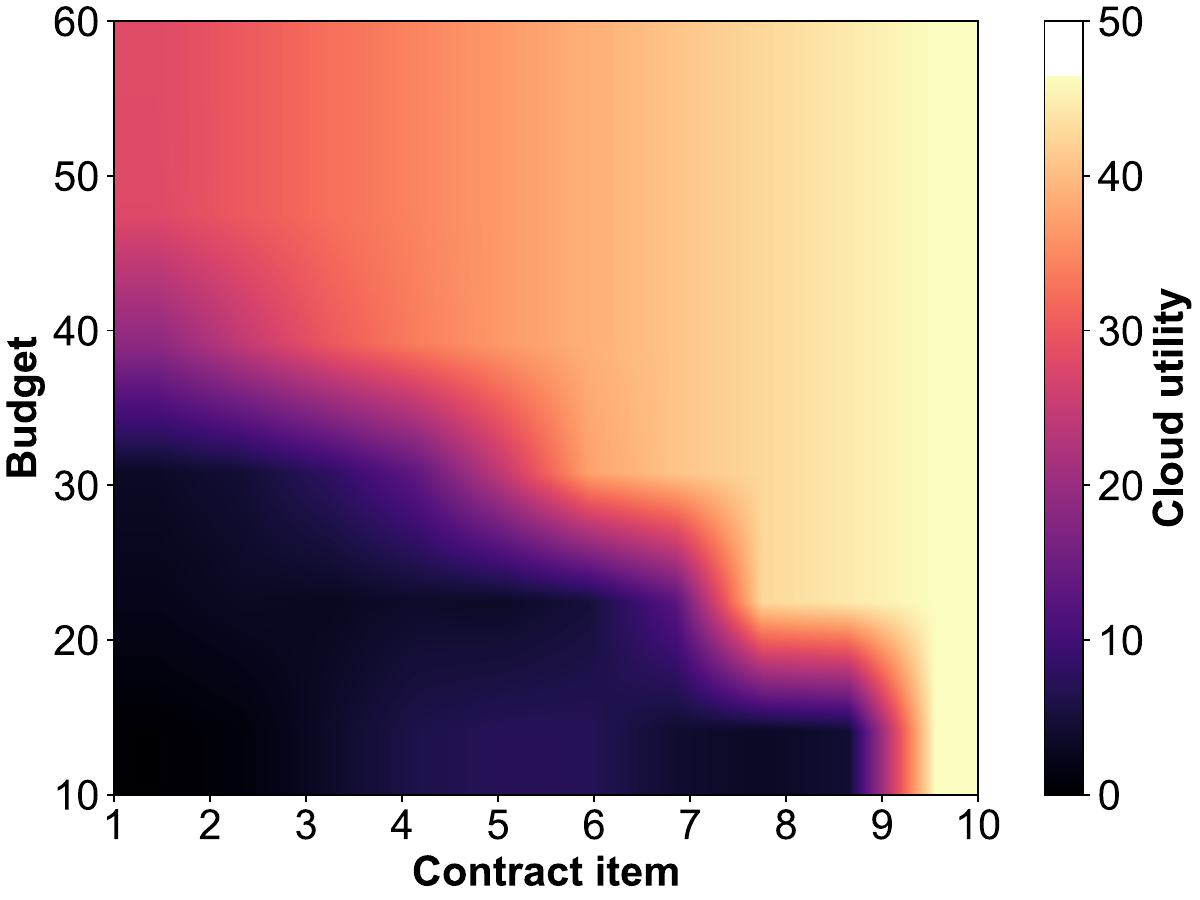}  
    \caption{Under CIC.}
    \label{cf:in_clps}
\end{subfigure}
\begin{subfigure}[t]{.375\textwidth}
    \centering
    \includegraphics[width=\linewidth]{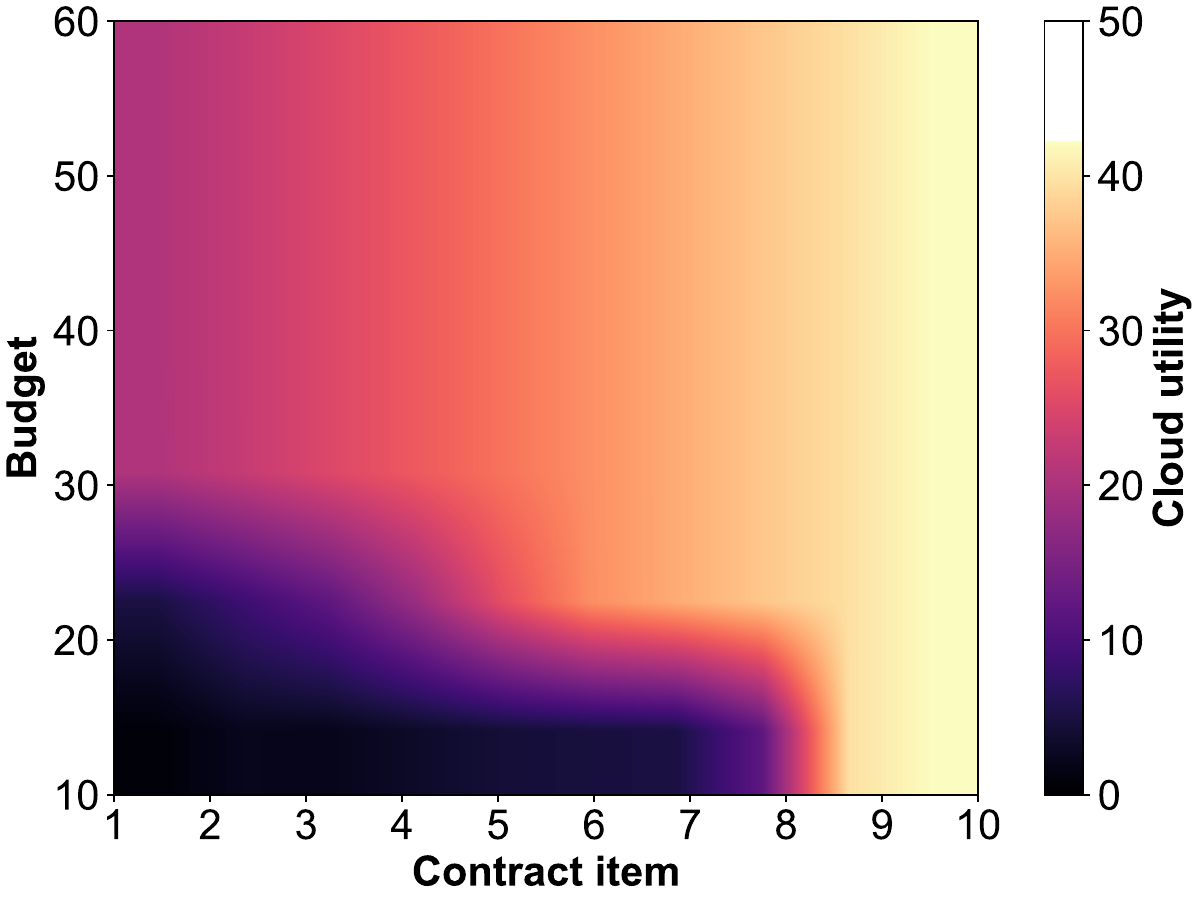}  
    \caption{Under IIC.}
    \label{cf:in_non_clps}
\end{subfigure}
\caption{The impact of cloud budget $P$ on cloud utility per contract item and total cloud utility.}
\label{fig:impact_of_budget}
\end{figure}

\subsection{Contract Feasibility}\label{subsec:feasibility}
\paragraphalphastyle
\setcounter{paragraph}{0}

We first examine the impacts of varying unit effort cost $\delta$ and cloud budget $P$ on {\proposed} over $N = 10$ clients in {\proposed} during the initial training round of {\fontfamily{qcr}\selectfont CLPs}. Then, we evaluate cloud utility, effort, reward, and client utility against contract items (i.e., client types) in {\proposed} during the initial training round of {\fontfamily{qcr}\selectfont CLPs} and {\fontfamily{qcr}\selectfont non-CLPs} starting at $t=1$ and $t=11$, respectively, where we set the cloud budget $P = 60$ per round and $\delta = 1$.

\paragraph{Impacts of different unit effort costs} In Figs.~ \ref{fig:impact_unit_effort_cost}(\subref{cf:delta_effort})-(\subref{cf:delta_cloud_utility}), under CIC and IIC in {\fontfamily{qcr}\selectfont CLPs}\footnote{We focus on one global training round in \text{{\fontfamily{qcr}\selectfont CLPs}} here, but analogous trends observed in {\fontfamily{qcr}\selectfont non-CLPs}.}, the client's fixed salary, bonus, effort, total reward, utility, and cloud utility have shown decreasing trends per contract item along with the increase of unit effort cost $\delta$, where we set $\delta = \{0.2, 0.6, 1\}$. In other words, clients are not willing to join and contribute efforts when $\delta$ becomes large. For example, given the tenth contract item under the IIC cases, the values of the client's effort units are around \{9.60, 3.20, 1.92\} corresponding to $\delta = \{0.2, 0.6, 1\}$. Besides, under CIC, where the cloud server possesses complete client information, the cloud can fine-tune and optimize its contract items to incentivize clients to join early and contribute more effort. This results in higher cloud utility while offering clients higher fixed salaries and bonuses, compared to IIC (e.g., 232.28 vs. 211.13, 5.96 vs. 1.01, and 119.21 vs. 18.47 given the tenth contract item with $\delta = 0.2$, respectively). In addition, due to being aware of client information, the cloud server exploits the client's efforts as much as possible while guaranteeing IR constraints (cf. Theorem~\ref{theorem_1}) in CIC, leading to zero client utility across all contract items (cf., Fig.~\ref{fig:impact_unit_effort_cost}(\subref{cf:delta_client_utility})).
\begin{figure*}[ht!]
\centering
\begin{subfigure}[t]{.32\textwidth}
    \centering
    \includegraphics[width=\linewidth]{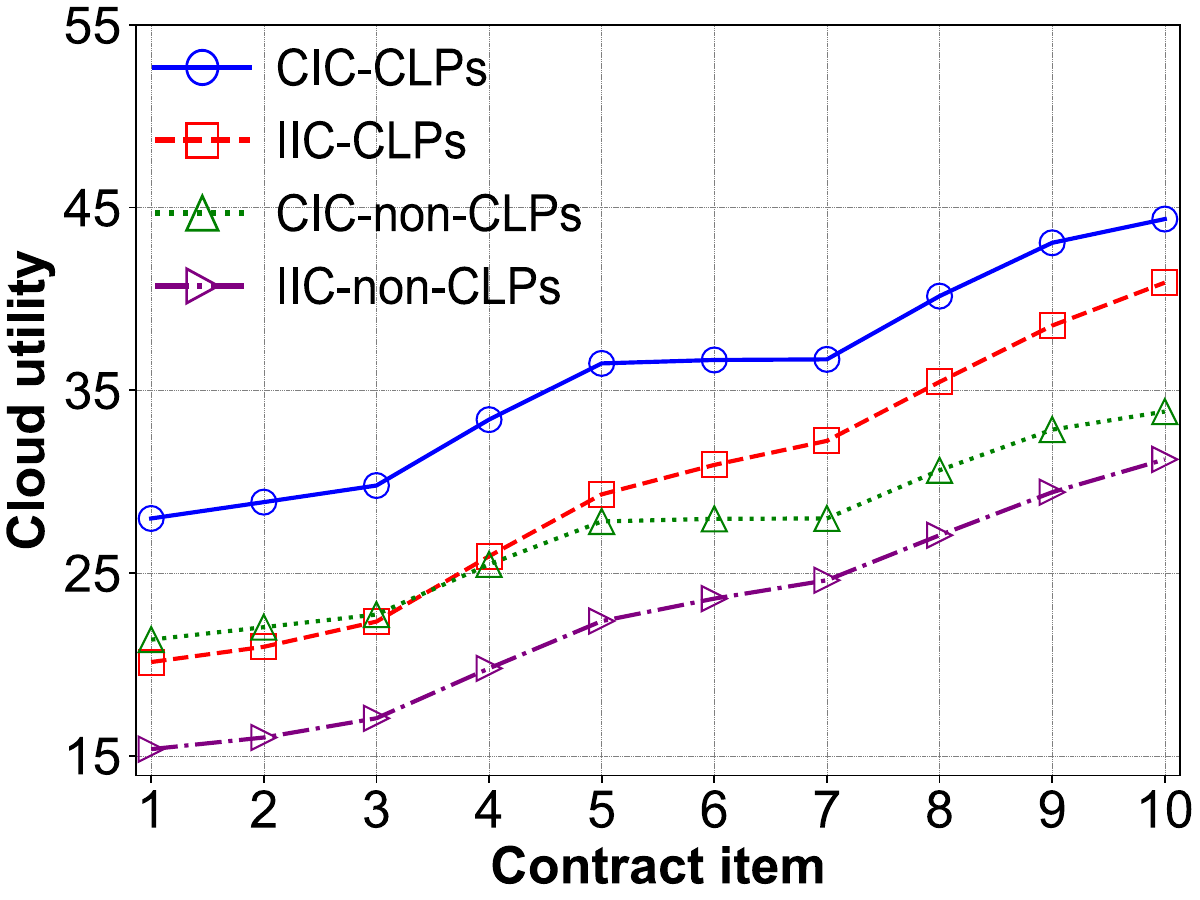}  
    \caption{Cloud utility.}
    \label{cf:cloud_utility}
\end{subfigure}\;
\begin{subfigure}[t]{.32\textwidth}
    \centering
    \includegraphics[width=\linewidth]{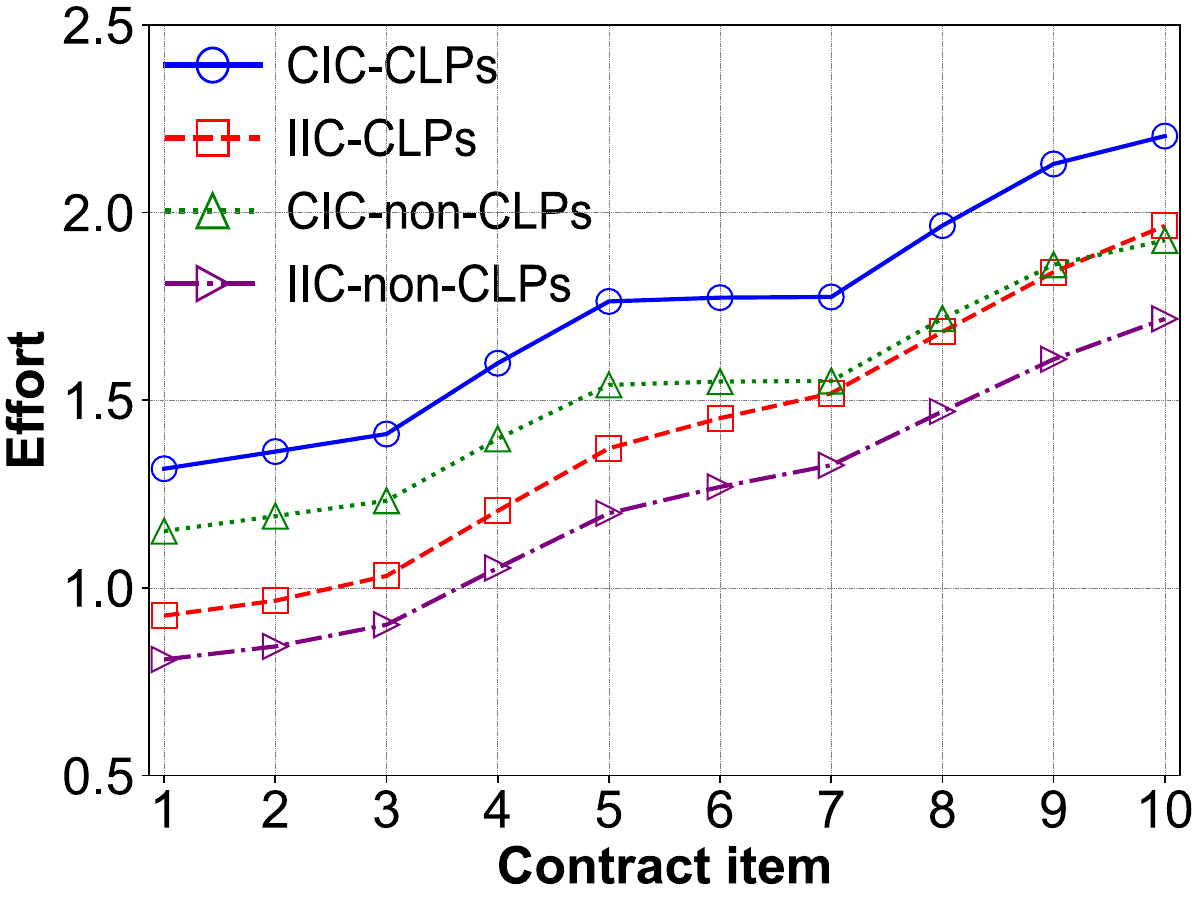}  
    \caption{Effort.}
    \label{cf:effort}
\end{subfigure}\;
\begin{subfigure}[t]{.32\textwidth}
    \centering
    \includegraphics[width=\linewidth]{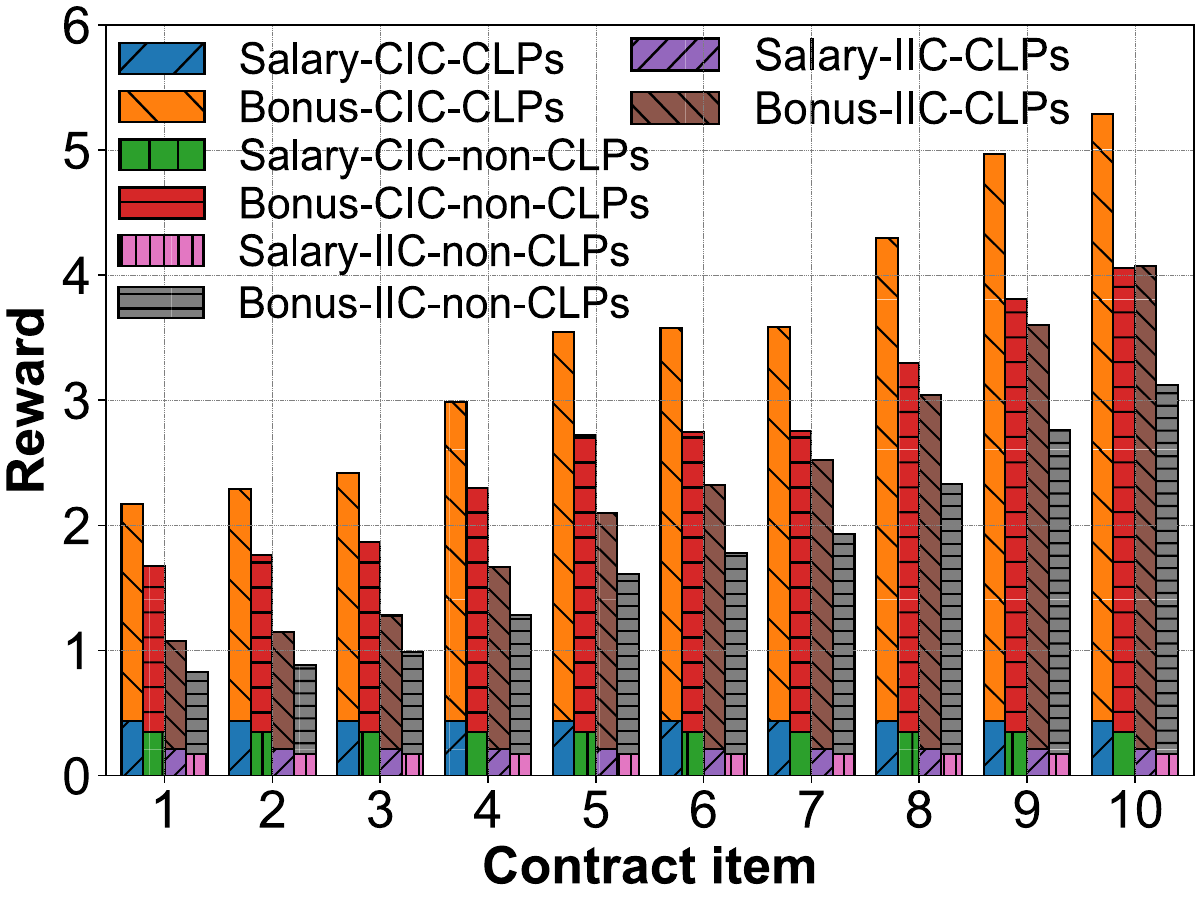}  
    \caption{Reward.}
    \label{cf:reward}
\end{subfigure}
\caption{Performance evaluation of {\proposed} per training round.}
\label{fig:contract_feasibility}
\end{figure*}

\paragraph{Impacts of different cloud budgets} Figs. ~\ref{fig:impact_of_budget}(\subref{cf:in_clps})-(\subref{cf:in_non_clps}) show the effect of varying cloud budgets $P$ on the cloud utility contributed by type-$k$ clients (i.e., $k$-$\text{th}$ contract items) in a global training round under both CIC and IIC  in \text{{\fontfamily{qcr}\selectfont CLPs}}. As $P$ increases, more clients can be incentivized to participate in and allocate efforts, resulting in greater total cloud utility over all joined clients. For instance, at $P = 60$, the total cloud utility of all type-$k$ clients reaches approximately 375 in CIC and 312 in IIC, compared to 266 in CIC and 311 in IIC at $P = 30$, respectively. Especially at low budgets (i.e., $P = \{10, 20, 30\}$), the total cloud utility under IIC exceeds that under CIC. This is supported by Assumption~\ref{assump:rationality} and the fact that the cloud server is aware of clients' types in CIC and only the distribution of clients' types in IIC, thereby resulting in different optimal contract designs. Consequently, in CIC, the cloud server may adopt a greedy allocation strategy to exhaust efforts from higher-type clients (cf. Theorem \ref{theorem_1}), but due to a limited budget, it leads to fewer participating clients and thus lower total cloud utility compared with that of IIC. Conversely, when the budget is huge enough (i.e., $P \geq 40$), the total cloud utility in CIC is far larger than that in IIC (i.e., 375 versus 311 at $P = 60$).

\paragraph{Performance evaluation} We consider cloud utility, client's effort, and client's reward according to contract items in a training round in {\fontfamily{qcr}\selectfont CLPs} and {\fontfamily{qcr}\selectfont non-CLPs} under conditions of CIC and IIC as shown in Figs.~\ref{fig:contract_feasibility}(\subref{cf:cloud_utility})-(\subref{cf:reward}). In Fig.~\ref{fig:contract_feasibility}(\subref{cf:cloud_utility}), the cloud server achieves the maximum utilities in both
{\fontfamily{qcr}\selectfont CLPs} and {\fontfamily{qcr}\selectfont non-CLPs} in CIC compared to IIC. This outcome occurs because the cloud server has complete knowledge of the client's capabilities, enabling it to fully leverage clients' effort and time without breaching IR constraints, thereby maximizing utility as per Theorem~\ref{theorem_1}. Conversely, under IIC,  the cloud server does not obtain complete information but only the distribution of the clients' types; therefore the cloud server's utility remains constrained by the utility attainable under CIC.
Figs. \ref{fig:contract_feasibility}(\subref{cf:effort})-(\subref{cf:reward}) contrast efforts and rewards (comprising salaries and bonuses) associated with different contract items. These results validate the monotonicity conditions, which are detailed in Lemma~\ref{lem:monotonicity} and Lemma~\ref{lem:monotonicity-nonclp}, and Remark~\ref{remark:nonclp}. As can be seen from the figures,  higher-type clients contribute more effort and, consequently, receive higher rewards.

 \begin{figure}[t!]
    \centering
    \includegraphics[width=.40\textwidth]{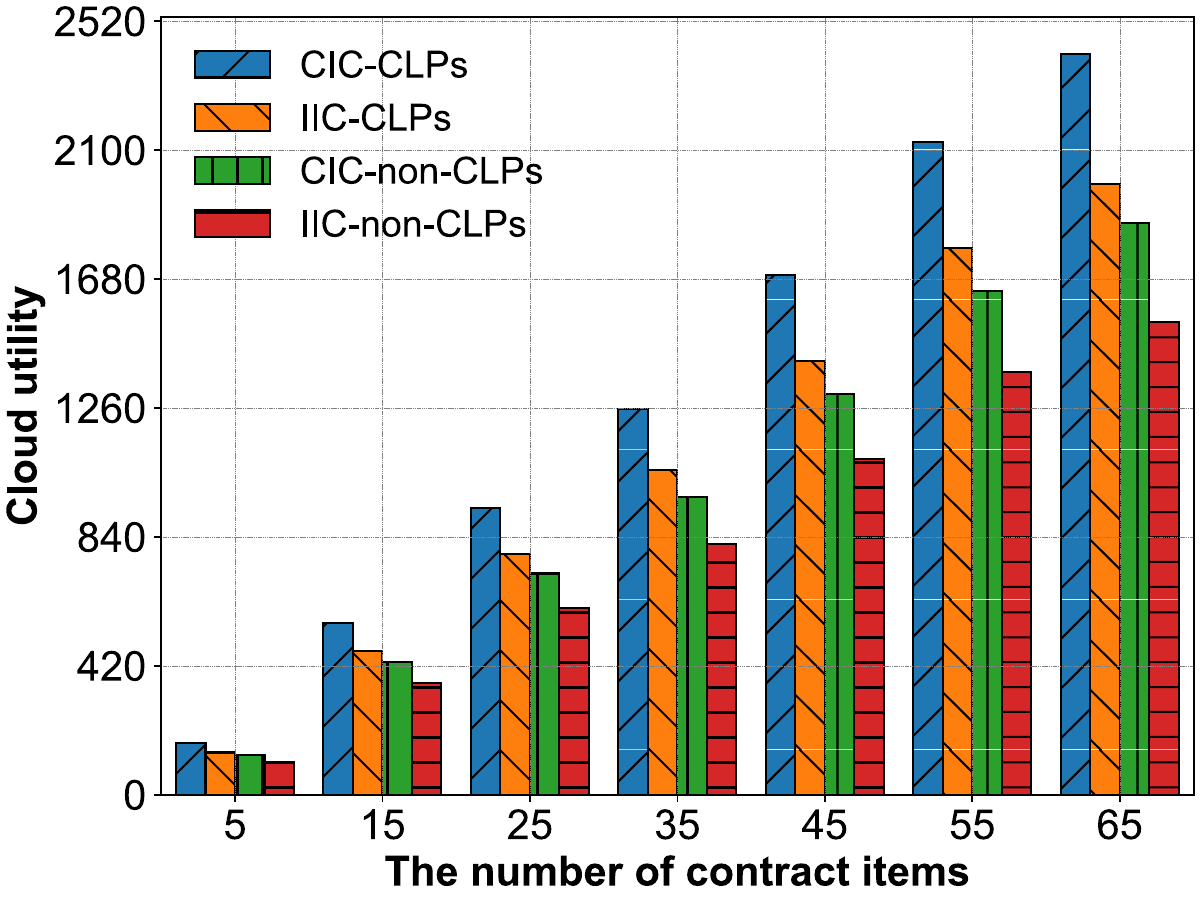}  
    \caption{Cloud utility with a different number of contract items.}
    \label{fig:contract_scalability}
\end{figure}

{\paragraph{Scalability and Privacy}\label{para:scalability} In Fig.~\ref{fig:contract_scalability}, we examine the effect of the number of contract items, denoted by $K$, on the cloud utility. As $K$ increases, the cloud server partitions clients into finer training capability types, which improves the precision of reward allocation and enables more accurate differentiation among heterogeneous clients. This allows the cloud to better motivate higher-type clients to participate earlier and exert greater effort. Accordingly, {\proposed} remains applicable to large and heterogeneous environments, such as the Internet-of-Things (IoT). {However, increasing $K$ also introduces important trade-offs. First, finer type partitioning increases the granularity of capability inference under the self-revelation mechanism. Upon contract item selection, the cloud may precisely identify each client's private training capability, raising privacy concerns in highly regulated applications such as healthcare. Second, a larger $K$ increases the reward budget required to satisfy the IC, IR, and BF constraints across all client types, thereby intensifying the budget pressure on the cloud. Finally, a larger $K$ also enlarges the optimization space and the number of feasibility checks, increasing the computational burden of contract design.}}

\paragraph{Contract feasibility} In Fig. \ref{fig:client_utility}, the utilities of four client types are compared when they select the same contract item. Their utilities follow the inequality $U_1 < U_4 < U_7 < U_{10}$ and each client can achieve maximum utility if and only if they select the contract item designed for their types, corroborating the results presented in Lemma~\ref{lem:client_utility} and Lemma~\ref{lem:monotonicity}, respectively, which explain the IC constraint. This contract design ensures that clients' types are automatically disclosed to the cloud server upon contract selection, effectively addressing information asymmetry between clients and the cloud server. Furthermore, clients receive non-negative utilities when they choose the contract items corresponding to their types, which validates the IR constraint.

\subsection{Contract Efficiency}\label{subsec:efficiency}

Fig.~\ref{fig:contract_efficiency1} and Fig.~\ref{fig:contract_efficiency2} examine the efficiency of {\proposed} with CTWT and linear pricing with the same cloud budget $P=60$, where the number of clients is $N = 15$ and unit effort cost $\delta = 1$. In this comparison, we assume that {\fontfamily{qcr}\selectfont CLPs} span from $t=1$ to $t=10$, and {\fontfamily{qcr}\selectfont non-CLPs} begin from $t=11$ to the end.


\paragraph{Cloud utility} Fig. \ref{fig:contract_efficiency1}(\subref{ce:cloud_utility}) shows cloud utilities in the {\fontfamily{qcr}\selectfont CLPs} and {\fontfamily{qcr}\selectfont non-CLPs}. {\proposed} demonstrably outperforms other methods by effectively attracting the highest-quality contributions in the {\fontfamily{qcr}\selectfont CLPs}. Consequently, the cloud server maintains a significantly higher utility level compared with other approaches during the {\fontfamily{qcr}\selectfont CLPs}, which determine the final model performance. Although the cloud utility of {\proposed} shows a gradual decrease, which becomes more pronounced once the learning and training process transitions to the {\fontfamily{qcr}\selectfont non-CLPs} (i.e., $t \ge 11$), {\proposed}'s total cloud utility still achieves higher results in comparison with conventional incentive benchmarks, as shown in Fig. \ref{fig:contract_efficiency1}(\subref{ce:total_cloud_utility}). For example, {\proposed} achieves cloud utility gains of up to approximately 7.2\% and 23.1\% over CTWT-IIC and linear pricing methods, respectively, in the IIC. 

Moreover, {\proposed} demonstrates higher economic efficiency compared to conventional benchmarks, as shown in Fig. \ref{fig:contract_efficiency1}(\subref{ce:economic_efficienct}).  \textit{Economic efficiency reflects the effectiveness of the cloud server in calculating and distributing \textbf{the total  reward to clients} based on their efforts and joining times}. For instance, {\proposed} allocates approximately  20\% and 96\% less total reward compared with CTWT and linear pricing, respectively, while still reaching higher final cloud utility in the IIC setting. \textit{This highlights the importance of providing the right reward at the right time}. 

\begin{figure}[t!]
    \centering
    \includegraphics[width=.40\textwidth]{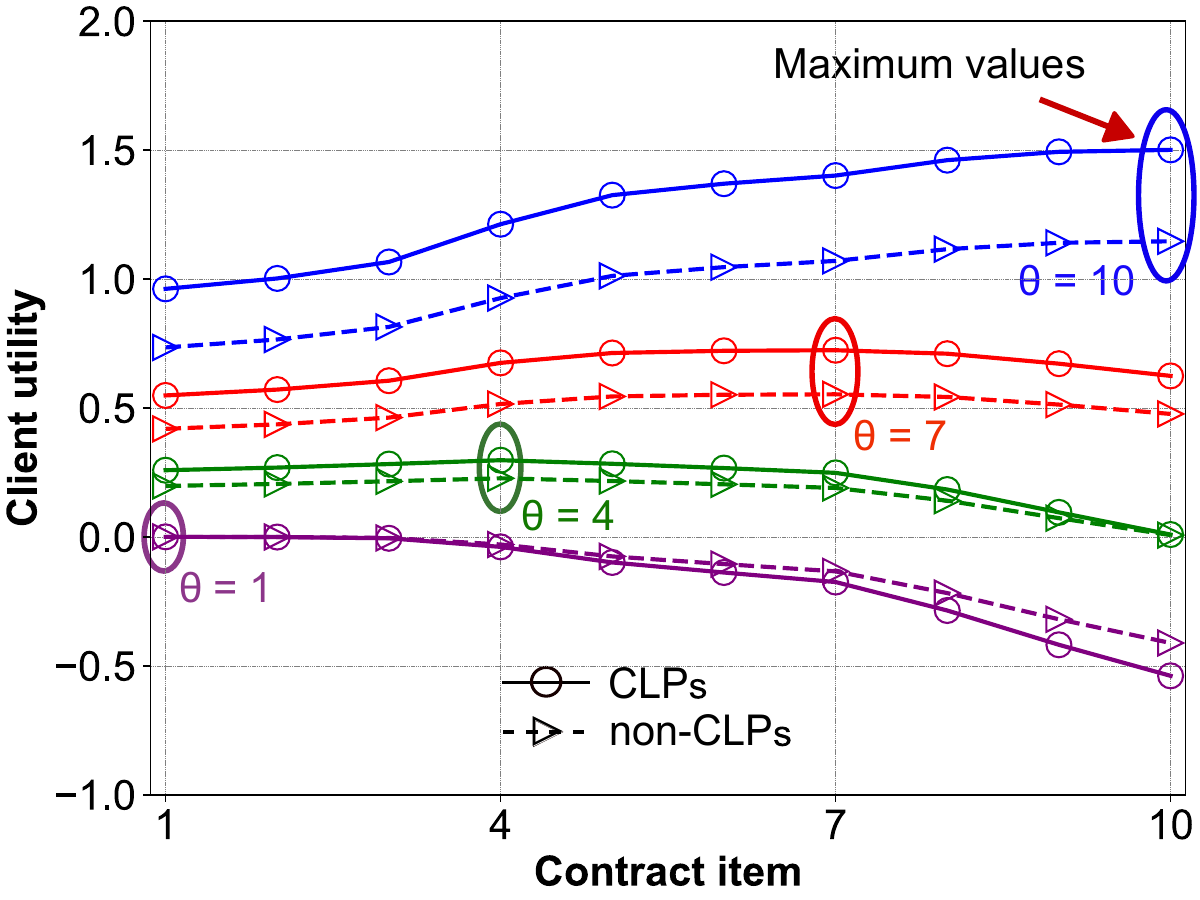}  
    \caption{Client utility under different contract items.}
    \label{fig:client_utility}
\end{figure}

\begin{figure*}[t!]
\centering
\begin{subfigure}[t]{.32\textwidth}
    \centering
    \includegraphics[width=\linewidth]{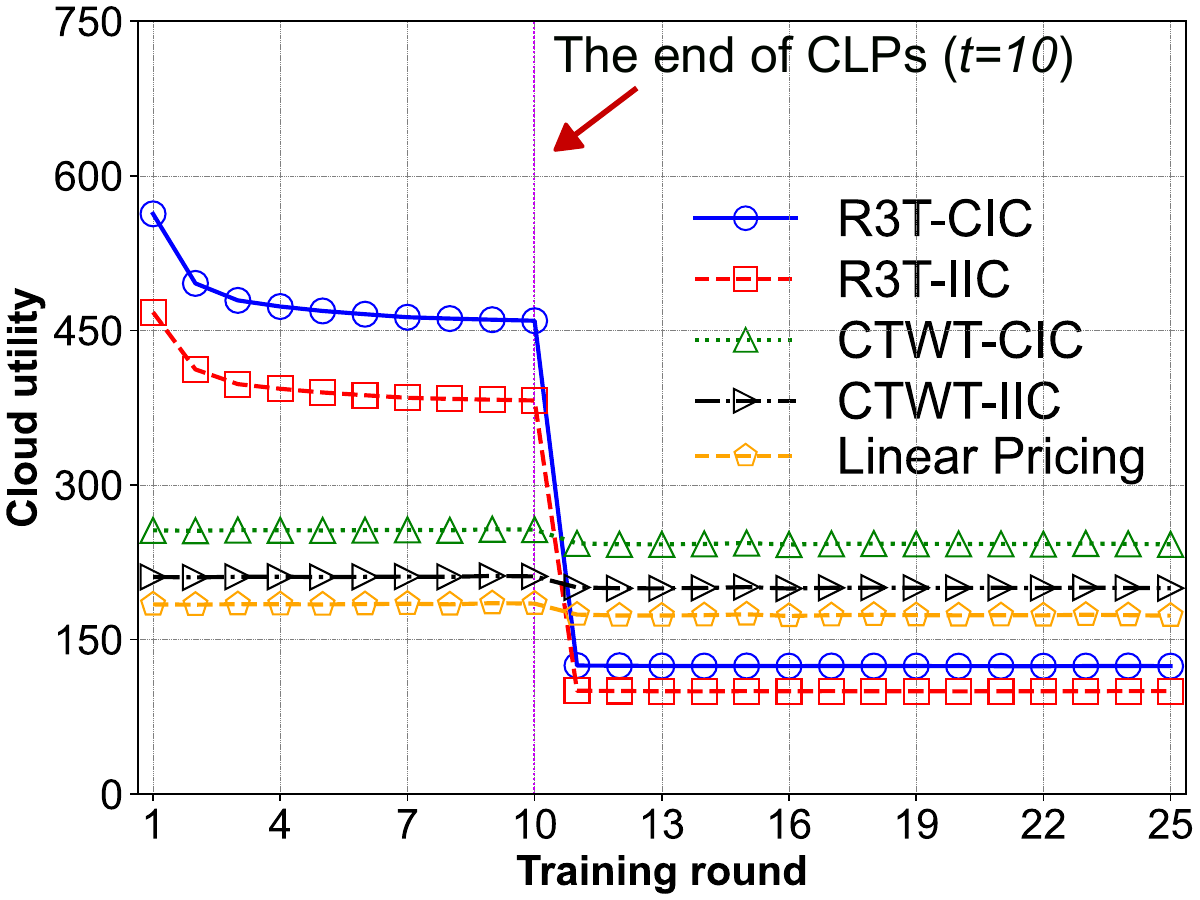}  
    \caption{Cloud utility.}
    \label{ce:cloud_utility}
\end{subfigure}\;
\begin{subfigure}[t]{.32\textwidth}
    \centering
    \includegraphics[width=\linewidth]{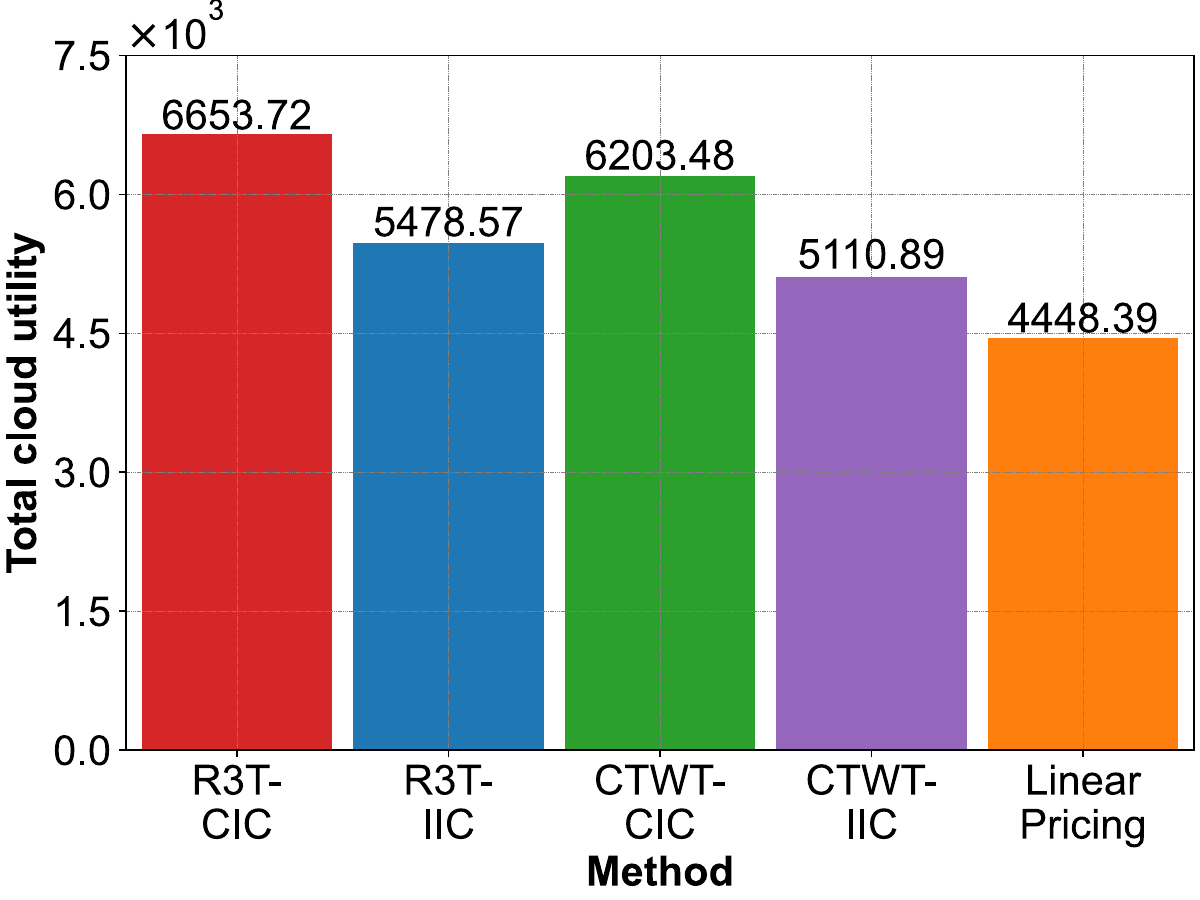}  
    \caption{Total cloud utility.}
    \label{ce:total_cloud_utility}
\end{subfigure}\;
\begin{subfigure}[t]{.32\textwidth}
    \centering
    \includegraphics[width=\linewidth]{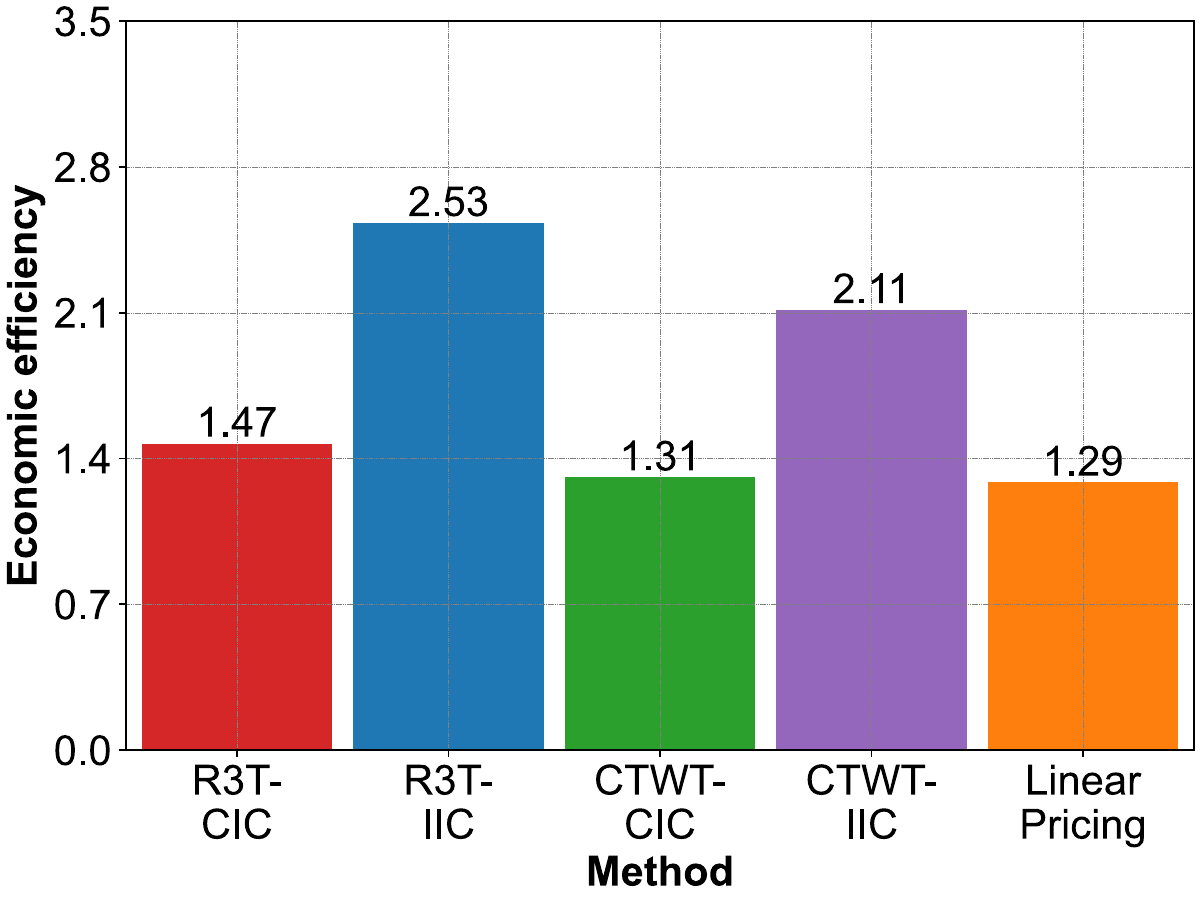}  
    \caption{Economic efficiency.}
    \label{ce:economic_efficienct}
\end{subfigure}
\caption{Cloud gain over $T$ training rounds.}
\label{fig:contract_efficiency1}
\end{figure*}

\begin{figure}[ht!]
\centering
\begin{subfigure}[t]{.4\textwidth}
    \centering
    \includegraphics[width=\linewidth]{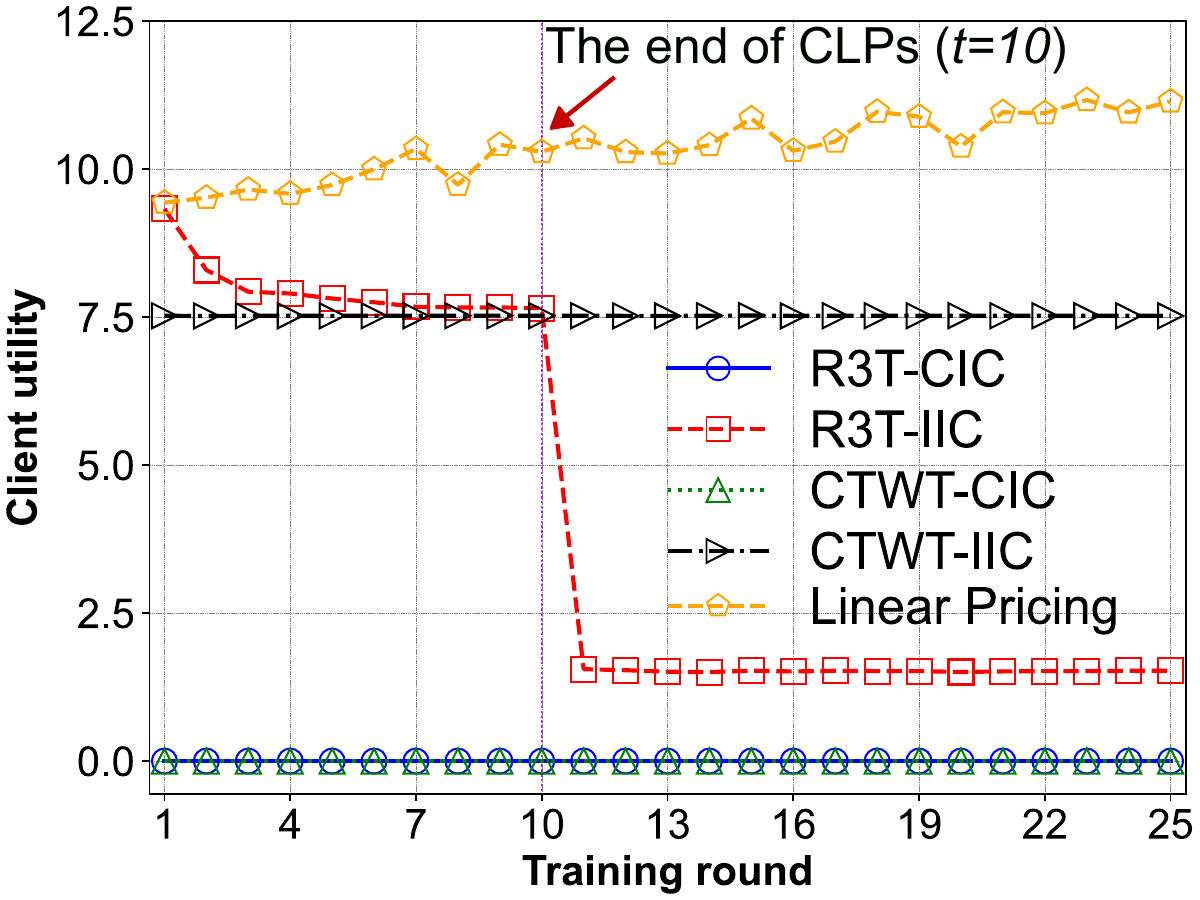}  
    \caption{Client utility.}
    \label{ce:client_utility}
\end{subfigure}
\begin{subfigure}[t]{.4\textwidth}
    \centering
    \includegraphics[width=\linewidth]{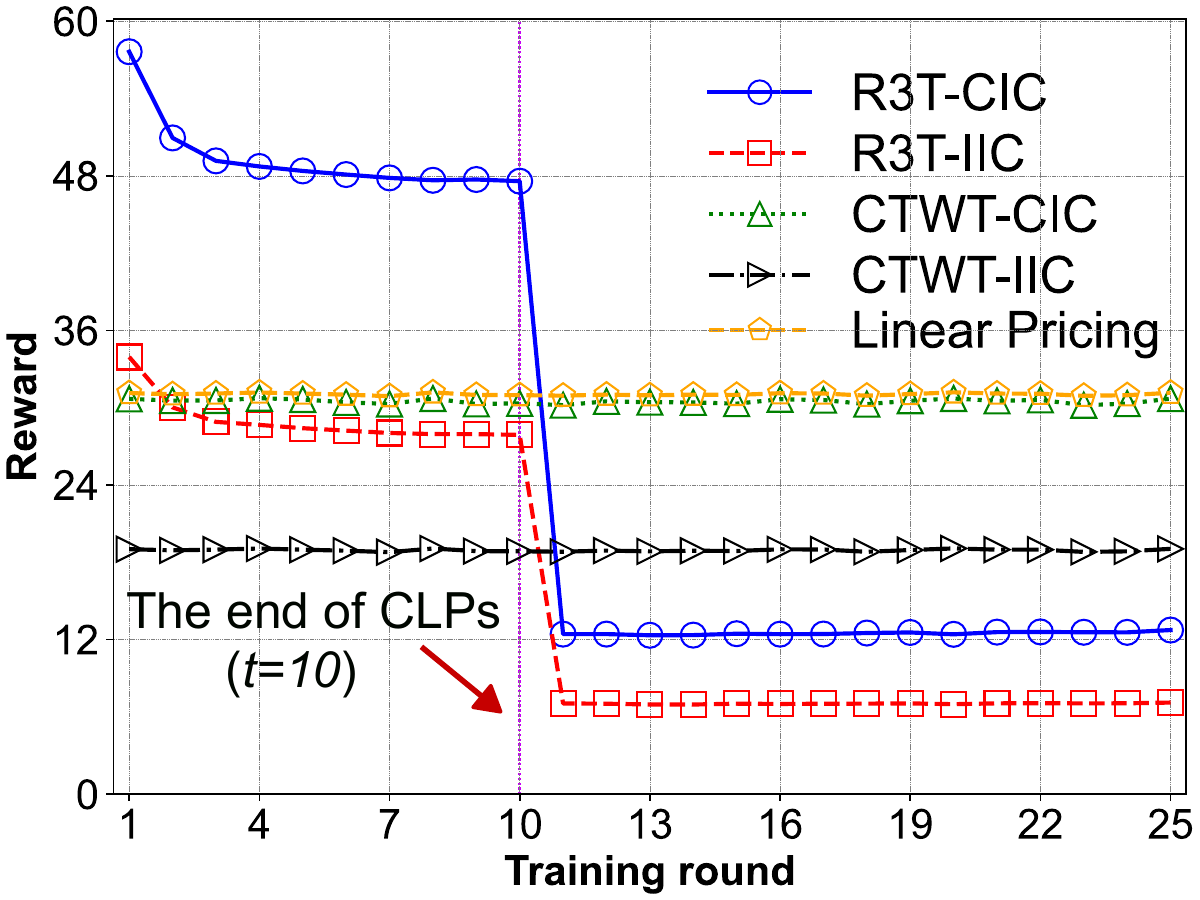}  
    \caption{Client's reward.}
    \label{ce:client_reward}
\end{subfigure}
\caption{Client gain over $T$ training rounds.}
\label{fig:contract_efficiency2}
\end{figure}

\paragraph{Client gain} As the importance of training rounds diminishes in {\fontfamily{qcr}\selectfont non-CLPs}, client utilities trend downward in {\proposed}, as shown in Fig. \ref{fig:contract_efficiency2}(\subref{ce:client_utility}). Meanwhile, the CTWT methods, which treat training rounds equally and randomly select a number of clients per round without early participation incentives, maintain stable client utilities - roughly 7.5 in CTWT-IIC methods and 0 in CTWT-CIC methods as indicated in Theorem~\ref{theorem_1} throughout the $T$ training rounds. The linear pricing method exhibits the lowest cloud utility but the highest client utility due to its fixed price per unit of client effort. This fixed pricing approach limits information gathering for the cloud and prevents optimal price and reward adjustments to match varying client capabilities, efforts, and costs.

Fig. \ref{fig:contract_efficiency2}(\subref{ce:client_reward}) shows the total client reward over $T$ training rounds. While {\proposed} allocates a significant amount of rewards to attract high-quality clients during the {\fontfamily{qcr}\selectfont CLPs}, the amount of rewards of other benchmarks remains generally stable in every training round. This again stems from the strategic allocation of rewards, prioritizing efforts made earlier in the training process in {\proposed}, and further explaining the economic efficiency of the cloud server.

\subsection{Proof of Concept of {\proposed}}{\label{subsec:poc}}
In Fig.~\ref{fig:testbed}, we introduce a proof of concept of {\proposed}. Clients with higher IDs are assigned higher training capabilities, characterized by larger data sizes and increased data diversity. We assume all 10 clients have identical system capabilities and act rationally and strategically {over 81 training rounds (i.e., $T_{\text{poc}} = 81$)} under Assumption~\ref{assump:rationality}. {{\proposed} and $\mathrm{CFL}$ select the same number of clients in each training round (i.e., 
$|\mathcal{S}_{\mathrm{\proposed}}^{(t)}|=|\mathcal{S}_{\mathrm{CFL}}^{(t)}|$, $\forall t\in\mathcal{T}$)}. Besides, {{\fontfamily{qcr}\selectfont CLPs} are detected as ending after training round 30 using the $\mathrm{FGN}$ proxy defined in Section~\ref{subsec:contract}(\ref{para:clp_probing})}. 
Clients register on the blockchain, and their information (e.g., IDs, addresses, efforts, and joined training rounds) is recorded throughout the training. Therefore, non-repudiation and transparency are guaranteed thanks to blockchain properties.

\begin{figure}[t]
	\centering
	\includegraphics[width=0.48\textwidth]{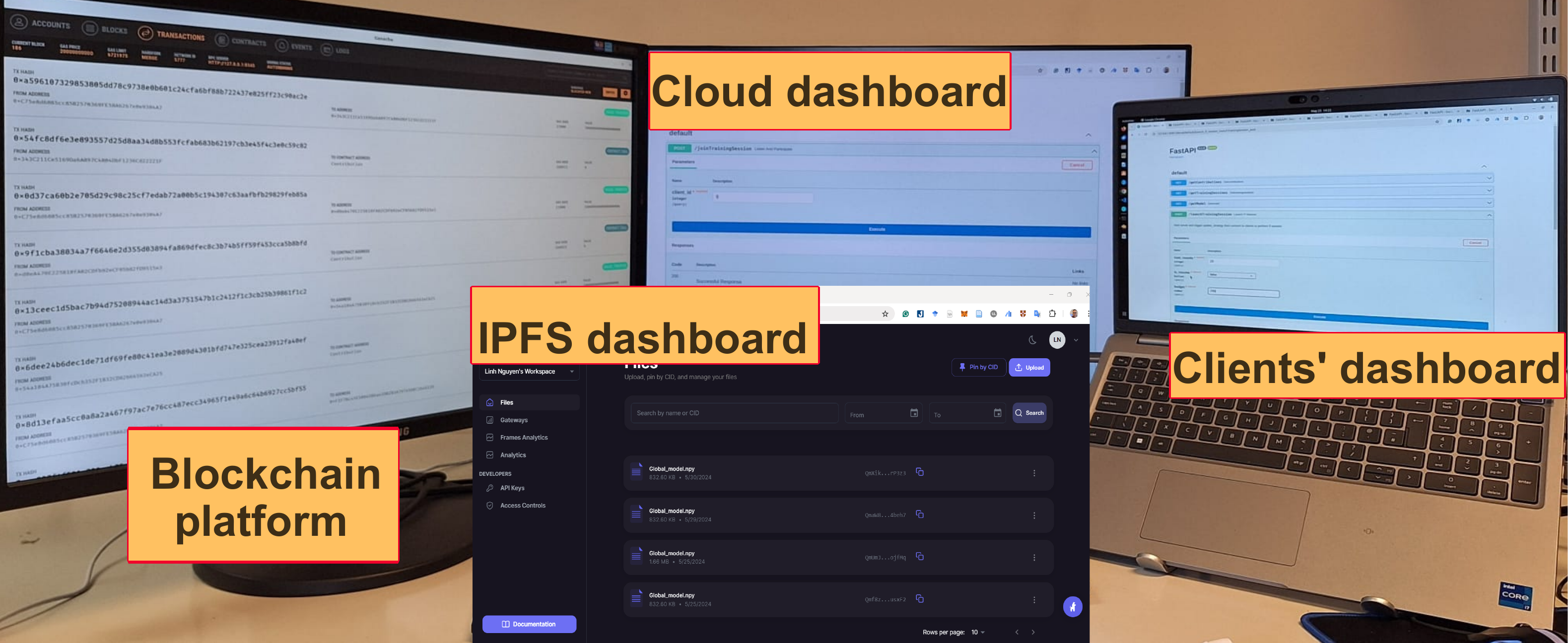}
	\caption{{{\proposed}'s proof of concept. The system includes the Ethereum blockchain, online IPFS storage to address scalability issues, a cloud dashboard, and clients' dashboards.}}
    \label{fig:testbed}
\end{figure}



\begin{figure}[t]
\centering
\begin{subfigure}[t]{.48\textwidth}
    \centering
    \includegraphics[width=\linewidth]
    {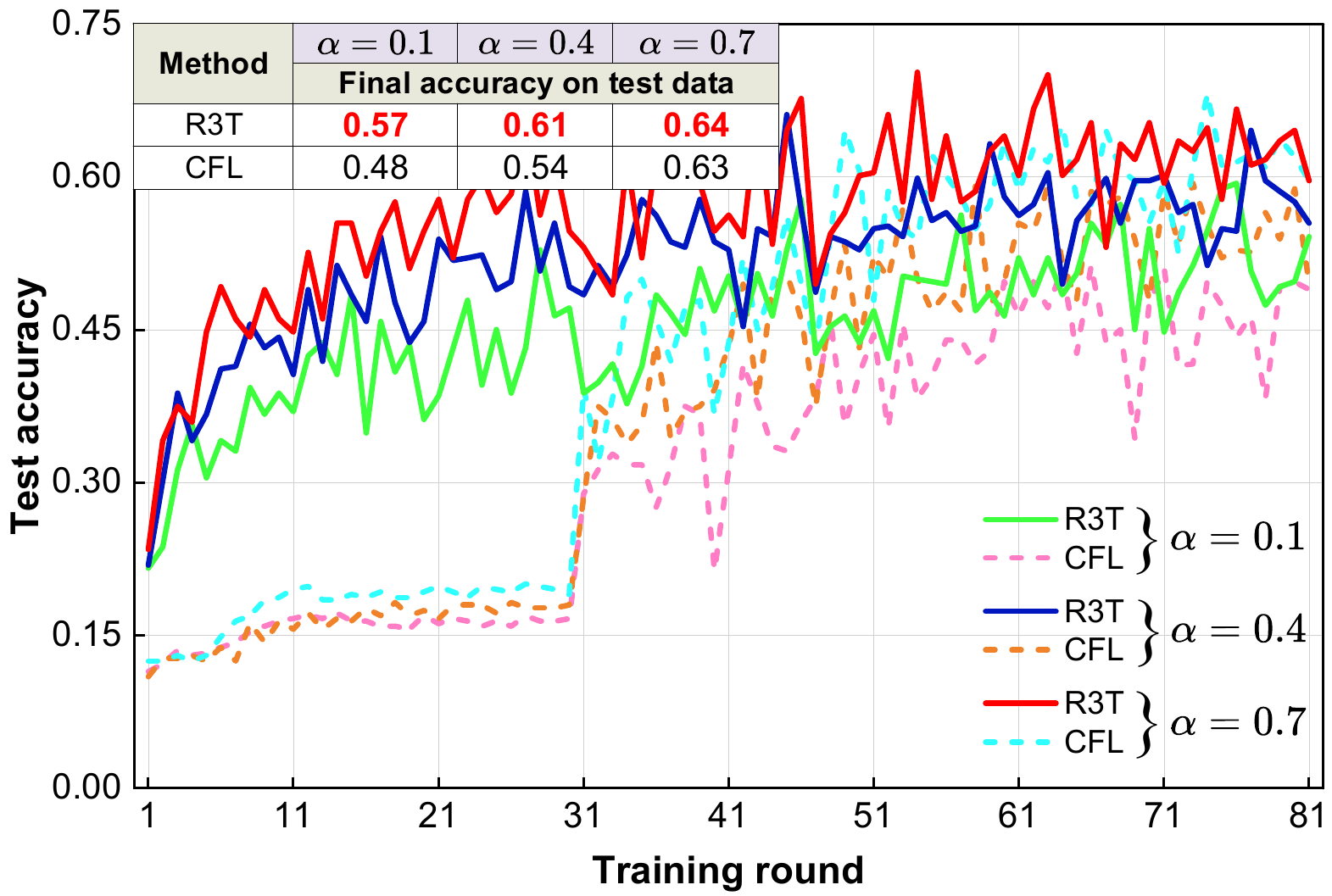}
    \label{fig:cifar10}
    \vspace{-1\baselineskip}
    \caption{CIFAR-10.}
\end{subfigure}
\begin{subfigure}[t]{.48\textwidth}
    \centering
    \includegraphics[width=\linewidth]
     {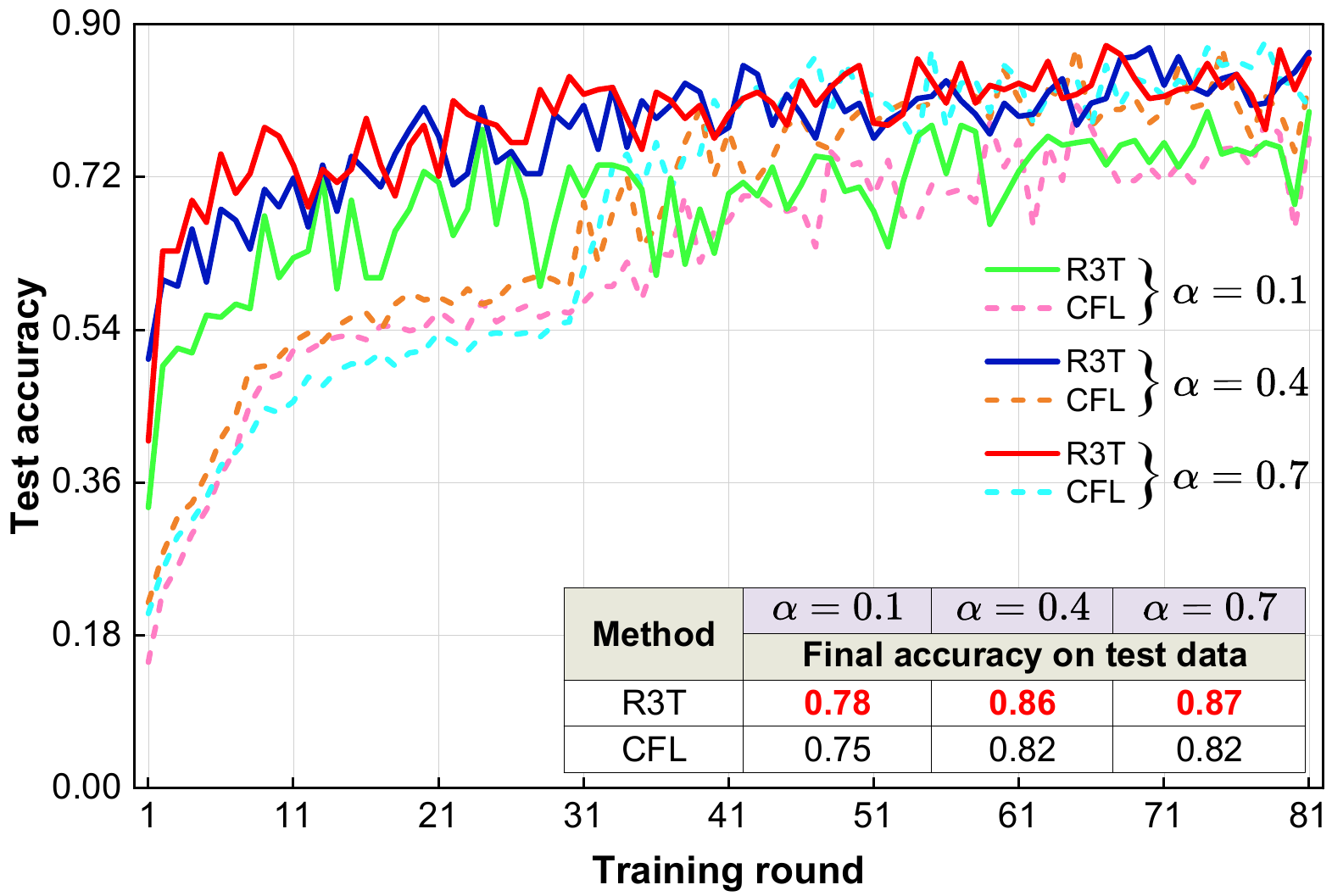}
    \caption{FMNIST.}
    \label{fig:fmnist}
\end{subfigure}
\caption{Test accuracy of {\proposed} and CFL over $T$ rounds.}
\label{fig:testbed_results}
\end{figure}

\paragraph{Convergence rate}\label{para:convergence_rate} As illustrated  in Fig.~\ref{fig:testbed_results}, {\proposed} demonstrates its efficiency by surpassing $\mathrm{CFL}$ in \textit{improving convergence rate} under non-IID settings for both CIFAR-10 and FMNIST datasets. Particularly, {\proposed} achieves a significant increase in performance during the initial FL training phase. This improvement is attributed to its strategy of attracting the highest-quality clients by offering an optimal set of contract items, which includes timely and substantial rewards, early in the training process. For example, {\proposed} tends to reach the final target accuracy in just 20 rounds, compared with approximately 61 rounds for $\mathrm{CFL}$ on the datasets. 
 
\paragraph{Long-term performance}\label{para:longtermperf} As shown in Fig.~\ref{fig:testbed_results}, for $\alpha = 0.7$ in both datasets, both {\fontfamily{qcr}\selectfont R3T} and CFL maintain stable test accuracies after convergence. Under highly non-IID settings (i.e., $\alpha \in \{0.1, 0.4\}$), {\proposed} demonstrates \textit{more sustainable performance} compared to CFL. In $\mathrm{CFL}$, the practice of treating clients' contributions and efforts equally across all training rounds, especially under a significant quantity and label distribution skew, detrimentally affects the long-term performance of the FL model, regardless of subsequent efforts. {{Specifically, {\proposed} outperforms $\mathrm{CFL}$ in terms of the final accuracy evaluated on the entire test set by up to 9\%.}} Besides, although {\proposed} exhibits slight accuracy fluctuations after {\clps}, the overall test accuracy remains stable around the highest bound and {leads to higher final test accuracy}. These observations demonstrate that attracting high-quality clients to join early yields a 2-3$\times$ training speedup, reduces the number of clients required to reach comparable target learning performance by 5.2\%-47.6\%, and improves final accuracy by up to 9\%.


\begin{table}[t!]
\centering
\footnotesize
{
\caption{Comparison of execution costs between {\proposed} and CFL evaluated on FMNIST and CIFAR-10 datasets}
\label{tab:overhead}
\begin{tabular}{|l|c|c|c|c|c|c|}
\hline
\rowcolor[HTML]{EFEFEF} 
\textbf{Dataset} & \textbf{Meth.} & \textbf{$\alpha$} & \textbf{Latency} & \textbf{Comm.} & \textbf{TPS} & \textbf{Cost} \\
\hline \hline 
\multirow{6}{*}{FMNIST} 
& \multirow{3}{*}{{\proposed}} 
& 0.1 & 1809.37 & 243876768 & 8.79 & 1.1521 \\
& & 0.4 & 1823.61 & 243876768 & 8.89 & 1.1521 \\
& & 0.7 & 1883.50 & 243124068 & 8.86 &1.1487\\
\cline{2-7}
& \multirow{3}{*}{CFL} 
& 0.1 & 1808.77 & 243874800 & N/A & N/A \\
& & 0.4 & 1823.03 & 243874800 & N/A & N/A \\
& & 0.7 & 1882.91 & 243122100 & N/A & N/A \\
\hline
\multirow{6}{*}{CIFAR-10} 
& \multirow{3}{*}{{\proposed}} 
& 0.1 & 2240.88 &  404248992 & 8.48 & 1.1521 \\
& & 0.4 & 2146.87 &  404248992 & 8.74  & 1.1522 \\
& & 0.7 & 2155.94 & 404248992  & 8.90 &  1.1522 \\
\cline{2-7}
& \multirow{3}{*}{CFL} 
& 0.1 &  2240.32 & 404247024 & N/A & N/A \\
& & 0.4 & 2146.32 & 404247024 & N/A & N/A \\
& & 0.7 & 2155.40 & 404247024 & N/A & N/A \\
\hline
\end{tabular}
}
\begin{minipage}{0.95\linewidth}
\footnotesize
\textbf{Meth.} denotes the method. 
\textbf{Latency} is reported in \textit{seconds}, and \textbf{Comm.} denotes the communication overhead in \textit{bytes}. 
\textbf{Cost} denotes the gas cost in \textit{ETH}, computed using $1$ gas $= 2 \times 10^{-8}$ ETH~\cite{nguyen2021blockchain}.
\end{minipage}
\end{table}

{  \paragraph{System overhead}\label{para:overhead} {Table~\ref{tab:overhead} presents the execution overhead of {\proposed} and $\mathrm{CFL}$ on the FMNIST and CIFAR-10 datasets across $\alpha \in \{0.1, 0.4, 0.7\}$. Unlike  $\mathrm{CFL}$, {\proposed} incorporates blockchain smart contracts for decentralized training and incentive management, thereby introducing additional on-chain operations such as IDs, contract-selection recording, model-update referencing, hash digests of local and global model updates, client contribution-metadata logging, and reward settlement.
Accordingly, blockchain-specific metrics, including transactions per second (TPS) of the blockchain under the FL workload and gas cost, are not applicable to  $\mathrm{CFL}$. Despite these additional operations, {\proposed} incurs only marginal additional overhead in latency and communication relative to $\mathrm{CFL}$. This efficiency stems from the architectural design in which the blockchain layer is confined to lightweight coordination tasks, including integrity verification, auditability, and payment settlement, while computationally intensive FL operations, including local model training and server-side aggregation, are executed entirely off-chain. Consequently, the incremental overhead remains negligible, while {\proposed} yields the complementary benefits of tamper-evident records of training contributions, dispute mitigation through immutable evidence, and automated reward distribution via smart contracts.}}












\section{Conclusion}
\label{sec:conclusion}
In this work, we studied {\fontfamily{qcr}\selectfont R3T}, a time-aware contract-theoretic incentive framework for FL, implemented on blockchain smart contracts. Motivated by the pivotal role of {\clps } and challenges caused by information asymmetries, we modeled how clients' private time and system capabilities, effort, and joining time jointly shape the cloud utility. Building on this characterization, we derived a temporally integrated optimal contract that allocates incentives to provide the right reward at the right time, thereby inducing clients to self-reveal types, join early, and contribute higher effort, particularly in {\clps}, under both complete- and incomplete-information settings. Simulation and proof-of-concept experimental results demonstrate the superiority of {\fontfamily{qcr}\selectfont R3T} over incentive benchmark mechanisms in cloud utility and economic efficiency, while simultaneously accelerating convergence and reducing the required number of participating clients without degrading final global model performance.

In future work, we will consider the impacts of free-riding issues and adversarial behaviors on the efficiency of {\fontfamily{qcr}\selectfont R3T}, especially in {\fontfamily{qcr}\selectfont CLPs}. Besides, it would be interesting to evaluate the efficiency and scalability of {\fontfamily{qcr}\selectfont R3T} across different blockchain platforms integrating  FL systems, where the entire system is operated without relying on the cloud server.

\section*{Acknowledgment}
This research has been conducted with financial support of Taighde Éireann–Research Ireland under Grant number 18/CRT/6222; School of Computer Science and Statistics, Trinity College Dublin. The work of Quoc-Viet Pham is supported in part by Research Ireland under the European Innovation Council CHIST-ERA SHIELD project (Project No. 216449, Award No. 19226). For the purpose of Open Access, the author has applied a CC BY public copyright licence to any Author Accepted Manuscript version arising from this submission.
\bibliographystyle{IEEEtran}
\bibliography{Refs}

\begin{IEEEbiography}[{\includegraphics[width=1in,height=1.25in,clip,keepaspectratio]{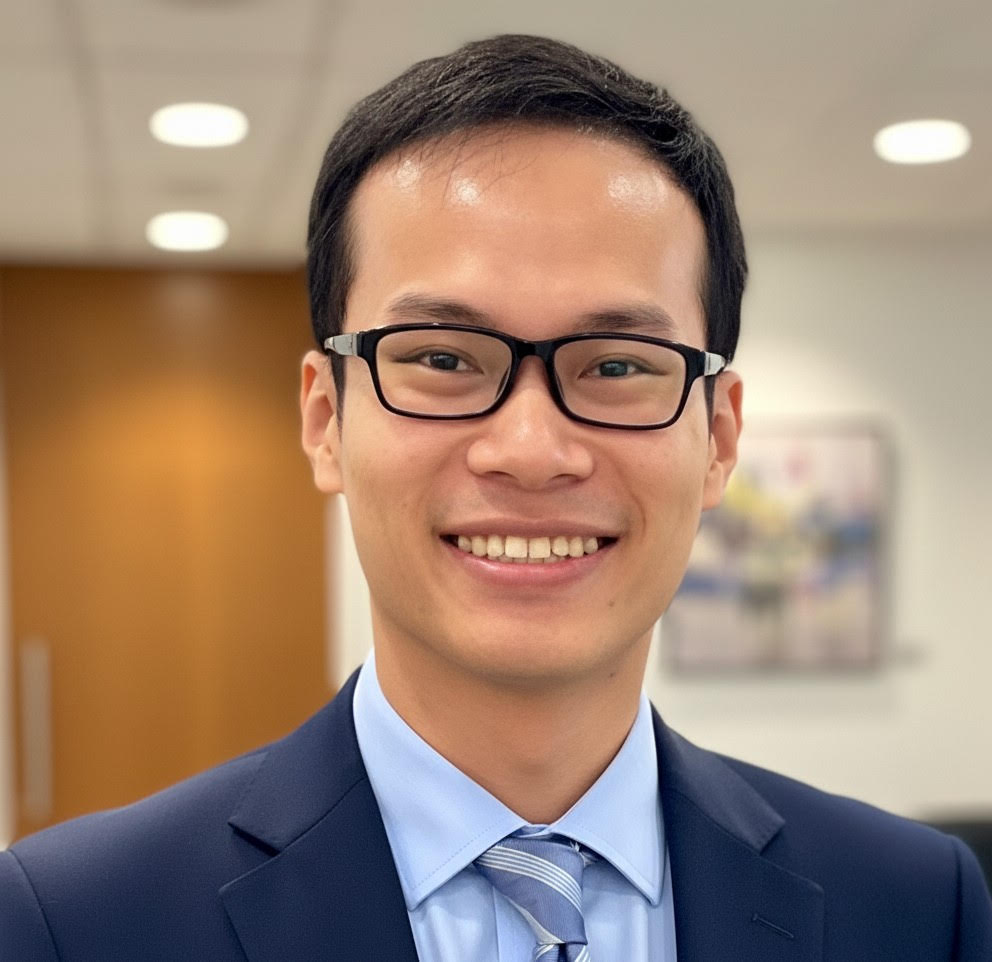}}]{Thanh Linh Nguyen} (Graduate Student Member, IEEE) is a Ph.D. candidate at the School of Computer Science and Statistics, Trinity College Dublin, Ireland. Nguyen received his engineering degree in telecommunications from the Posts and Telecommunications Institute of Technology, Hanoi, Vietnam. His current research interests include privacy-preserving data sharing, federated learning, incentivization, and network economics.
\end{IEEEbiography}
\begin{IEEEbiography}[{\includegraphics[width=1in,height=1.25in,clip,keepaspectratio]{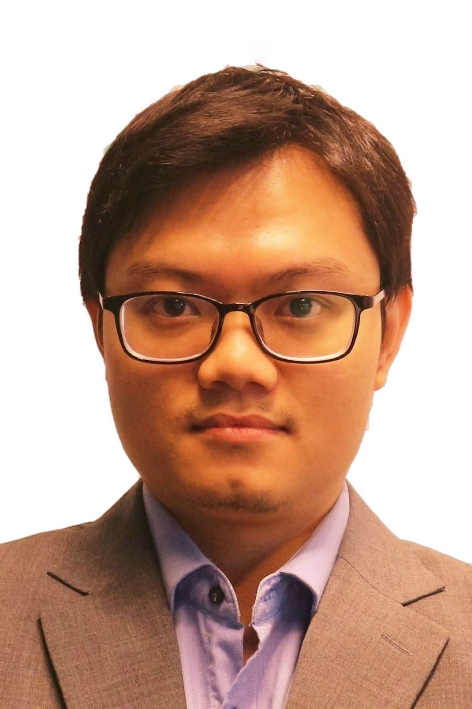}}]{Dinh Thai Hoang} (M’16, SM’22) is currently a faculty member at the School of Electrical and Data Engineering, University of Technology Sydney, Australia. He received his Ph.D. in Computer Science and Engineering from Nanyang Technological University, Singapore, in 2016. His research interests include emerging wireless communications and networking topics, especially machine learning applications in networking, edge computing, and cybersecurity. He has received several prestigious awards, including the Australian Research Council Discovery Early Career Researcher Award, IEEE TCSC Award for Excellence in Scalable Computing for Contributions on “Intelligent Mobile Edge Computing Systems” (Early Career Researcher), IEEE Asia-Pacific Board (APB) Outstanding Paper Award 2022, and IEEE Communications Society Best Survey Paper Award 2023. He is currently an Editor of IEEE TMC, IEEE TWC, IEEE TCOM, and IEEE TNSE.
\end{IEEEbiography}
\begin{IEEEbiography}[{\includegraphics[width=1in,height=1.25in,clip,keepaspectratio]{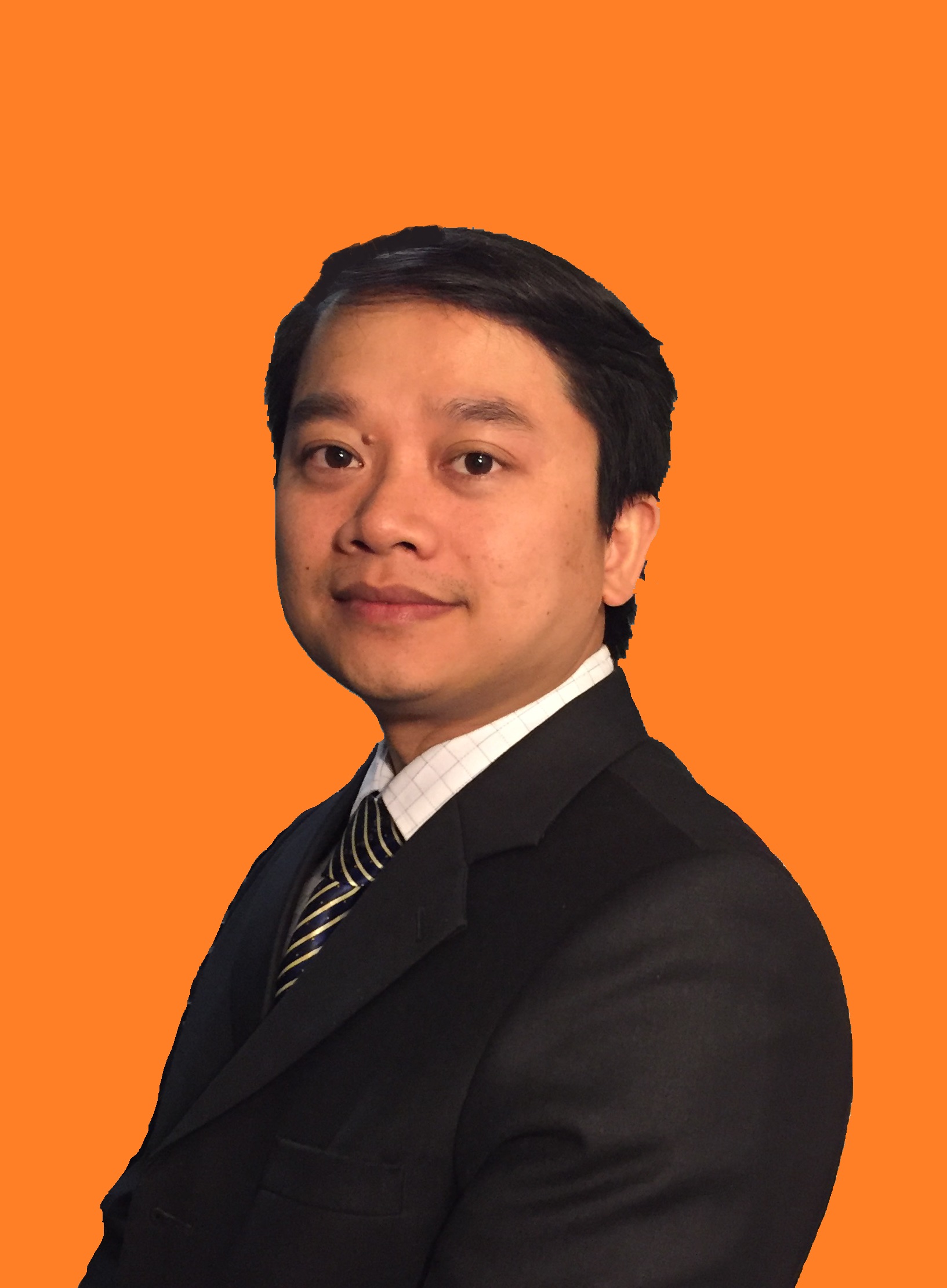}}]{Diep N. Nguyen} (Senior Member, IEEE) received the M.E. degree in electrical and computer engineering from the University of California at San Diego (UCSD), La Jolla, CA, USA, in 2008, and the Ph.D. degree in electrical and computer engineering from The University of Arizona (UA), Tucson, AZ, USA, in 2013. He is currently the Head of 5G/6G Wireless Communications and Networking Lab, Director of Agile Communications and Computing group, Faculty of Engineering and Information Technology, University of Technology Sydney (UTS), Sydney, Australia. Before joining UTS, he was a DECRA Research Fellow with Macquarie University, Macquarie Park, NSW, Australia, and a Member of the Technical Staff with Broadcom Corporation, CA, USA, and ARCON Corporation, Boston, MA, USA, and consulting the Federal Administration of Aviation on turning detection of UAVs and aircraft, and the U.S. Air Force Research Laboratory, USA, on anti-jamming. His research interests include computer networking, wireless communications, and machine learning application, with emphasis on systems’ performance and security/privacy. Dr. Nguyen received several awards from U.S. Congress,  the U.S. National Science Foundation, and the Australian Research Council. He has served on the Editorial Boards of the IEEE Transactions on Mobile Computing, IEEE Communications Surveys \& Tutorials (COMST), IEEE Open Journal of the Communications Society.
\end{IEEEbiography}
\begin{IEEEbiography}[{\includegraphics[width=1in,height=1.25in,clip,keepaspectratio]{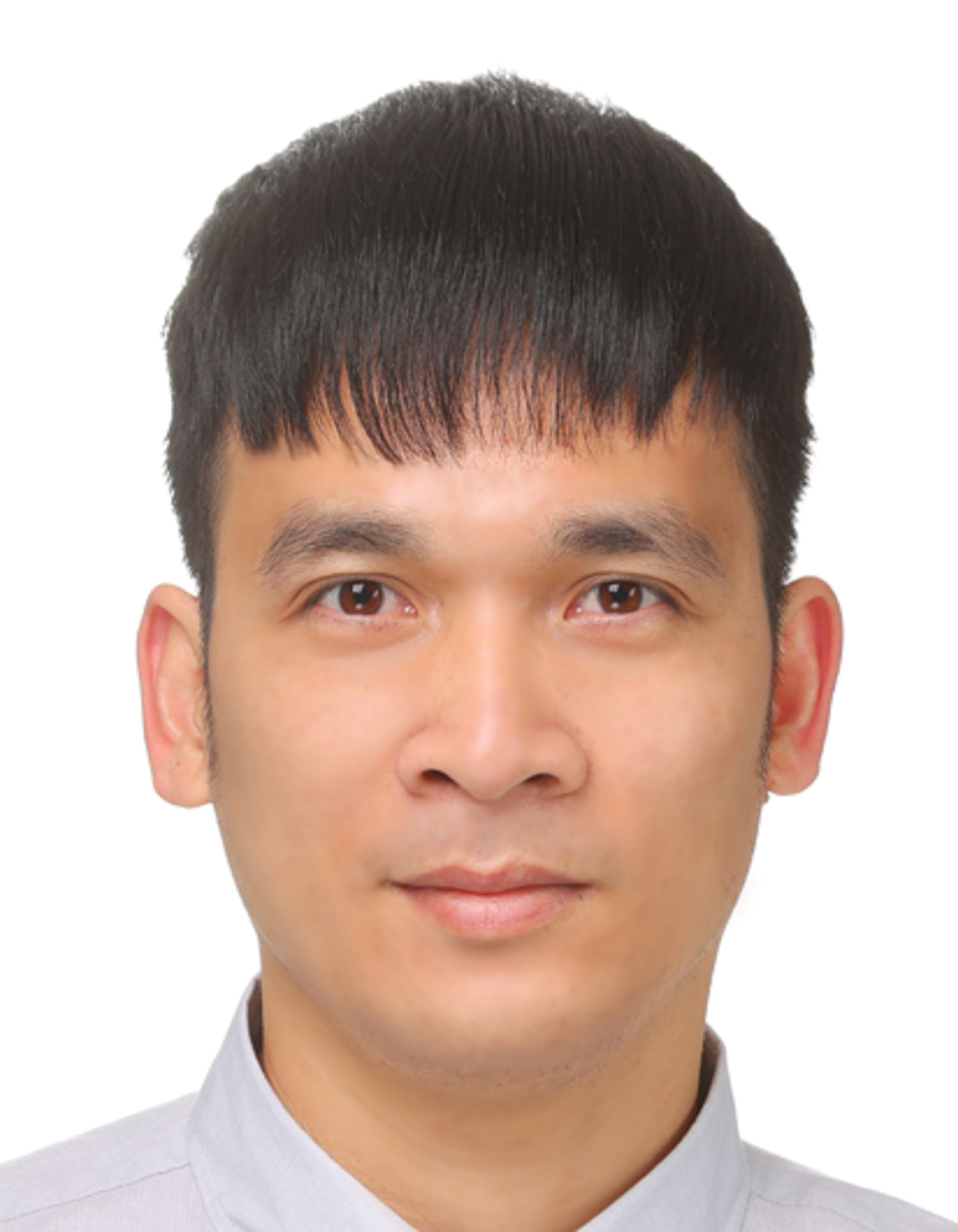}}]{Quoc-Viet Pham} (Senior Member, IEEE) is currently an Assistant Professor in Networks and Distributed Systems at the School of Computer Science and Statistics, Trinity College Dublin, Ireland. He earned his BSc and PhD degrees (with the Best PhD Dissertation Award) in Telecommunications from Hanoi University of Science and Technology and Inje University in 2013 and 2017, respectively.

He has special research interests in the areas of network AI, distributed machine learning, quantum AI, and data privacy in distributed systems. He was honoured with the IEEE ComSoc Best Young Researcher Award for EMEA 2023 and the Web-of-Science Highly Cited Researcher Award 2024 in recognition of his research excellence and broad influence. He currently serves as an Area Editor for IEEE Communications Surveys \& Tutorials, and as an (Associate) Editor for IEEE Communications Letters, IEEE Communications Standards Magazine, IEEE Transactions on Mobile Computing, and IEEE Transactions on Network Science and Engineering.
\end{IEEEbiography}

\end{document}